\documentclass{article} % For LaTeX2e
\usepackage[final]{neurips_2025}

% Optional math commands from https://github.com/goodfeli/dlbook_notation.
%\input{math_commands.tex}

\usepackage{hyperref}
\usepackage{url}

% to compile a preprint version, e.g., for submission to arXiv, add add the
% [preprint] option:
%     \usepackage[preprint]{neurips_2025}

% to compile a camera-ready version, add the [final] option, e.g.:
%     \usepackage[final]{neurips_2025}

% to avoid loading the natbib package, add option nonatbib:
%    \usepackage[nonatbib]{neurips_2025}

\usepackage[utf8]{inputenc} % allow utf-8 input
\usepackage[T1]{fontenc}    % use 8-bit T1 fonts
\usepackage{hyperref}       % hyperlinks
\usepackage{url}            % simple URL typesetting
\usepackage{booktabs}       % professional-quality tables
\usepackage{amsfonts}       % blackboard math symbols
\usepackage{nicefrac}       % compact symbols for 1/2, etc.
\usepackage{microtype}      % microtypography
\usepackage{xcolor}         % colors

\usepackage{float}
\usepackage{multirow}
\usepackage{multicol}
\usepackage{extarrows}
\usepackage{subfigure}

\usepackage{algorithm}
\usepackage{algorithmic}

\usepackage{wrapfig}
\usepackage{graphicx}
\usepackage{amsmath}
\usepackage{amssymb}
\usepackage{amsthm}

\theoremstyle{plain}
\newtheorem{Theorem}{Theorem}[section]
\newtheorem{proposition}[Theorem]{Proposition}
\newtheorem{lemma}[Theorem]{Lemma}
\newtheorem{corollary}[Theorem]{Corollary}
\theoremstyle{definition}
\newtheorem{definition}[Theorem]{Definition}

\theoremstyle{remark}
\newtheorem{remark}[Theorem]{Remark}
\newtheorem{example}[Theorem]{Example}

\newtheorem{conjecture}[Theorem]{Conjecture}

\usepackage{graphicx} % Required for inserting images
\def\R{{\mathbb{R}}}

\def\Z{{\mathbb{Z}}}

\def\F{{\mathcal{F}}}

\def\PA{{\mathbf{P}}}

\def\Para{\hbox{\rm{para}}}
\def\poly{{\rm{poly}}}
\def\len{{\rm{len}}}
\def\typ{{\rm{typ}}}
\def\vec{{\rm{vec}}}
\def\softmax{{\rm{softmax}}}

\def\Width{{\rm{width}}}
\def\Depth{{\rm{depth}}}
\def\Head{{\rm{head}}}

\def\NVC{{\rm{NVC}}}
\def\LCP{{\rm{LCP}}}

\def\ATT{\hbox{\rm{ATT}}}
\def\FNN{\hbox{\rm{FNN}}}
\def\Relu{\hbox{\rm{Relu}}}
\def\\softmax{\hbox{\rm{\softmax}}}

\def\Arp{\mathbf{Arith}_p}
\def\Arpn{\mathbf{Arith}_{p,n}}

\def\sep{{\,:\,}}

\title{Analyzing the Power of Chain of Thought\\ through Memorization Capabilities}

%Optimal Memorization Capability of Fixed-precision Autoregressive Transformer with or without Chain of Thought}

% The \author macro works with any number of authors. there exist two commands
% used to separate the names and addresses of multiple authors: \And and \AND.
%
% Using \And between authors leaves it to LaTeX to determine where to break the
% lines. Using \AND forces a line break at that point. So, if LaTeX puts 3 of 4
% authors names on the first line, and the last on the second line, try using
% \AND instead of \And before the third author name.

\author{
Lijia Yu\textsuperscript{\rm 1},
Xiao-Shan Gao\textsuperscript{\rm 2, 3}\thanks{Corresponding author}\,\,\,,
Lijun Zhang\textsuperscript{\rm 1, 3, 4}\\
\textsuperscript{\rm 1}Institute of AI for Industries, Nanjing, China\\
%, Beijing 100190, China\\
\textsuperscript{\rm 2}State Key Laboratory of Mathematical Sciences,\\
\hskip7pt Academy of Mathematics and Systems Science, Chinese Academy of Sciences\\
%\hskip7pt Beijing 100190, China\\
%\\
%\hskip6pt Beijing 100190, China\\
 \textsuperscript{\rm 3}University of Chinese Academy of Sciences\\
 %Beijing 100049, China\\
% \textsuperscript{\rm 4}
% Kaiyuan International Mathematical Sciences Institute\\
 \textsuperscript{\rm 4}Key Laboratory of System Software of Chinese Academy of Sciences
}

\begin{document}

\maketitle
\begin{abstract}
It has been shown that the chain of thought (CoT) can enhance the power of large language models (LLMs) to solve certain mathematical reasoning problems. However, the capacity of CoT is still not fully explored. As an important instance, the following basic question has not yet been answered: Does CoT expand the capability of transformers across all reasoning tasks? We demonstrate that reasoning with transformers is essentially a memorization problem for reasoning datasets. Thus, examining the power of CoT across all reasoning tasks amounts to analyzing the memorization capabilities of CoT transformers. In this paper, we give a complete description of the memorization capabilities of fixed-precision transformers with or without CoT and give a negative answer to the above-mentioned question. Precisely, we first give necessary and sufficient conditions for fixed-precision transformers with and without CoT to memorize a finite reasoning dataset and show that these two conditions do not imply each other. Then, we give lower and upper bounds for the number of parameters needed for transformers with or without CoT to memorize a finite reasoning dataset with $N$ elements, which are $\overline{\Theta}(N)$ in all cases.  This implies that there exist reasoning tasks for which CoT does not enhance the reasoning power of transformers, leading to a negative answer to the above-mentioned question. Finally, we give the first results on memorizing infinite reasoning datasets by CoT transformers and show that some simple infinite datasets cannot be memorized by transformers with or without CoT.
\end{abstract}

\section{Introduction}
Transformer-based LLMs \citep{vaswani2017attention} are the most powerful models in natural language processing, and autoregressive transformer-based models \citep{radford2019language,brown2020language,rae2021scaling,le2023bloom,zhang2022opt} are the predominantly used forms, which can solve a huge number of tasks by turning them into a sequence generation problem.

It has been shown that CoT \citep{wei2022chain} allows LLMs to generate a step-by-step ``thinking'' process, thus improving the mathematical reasoning power of LLMs.
%and especially the ability of LLMs to solve mathematical problems.
% However, for models that cannot generate CoT, such as FNNs, people generally believe that their abilities are too weak to solve some tasks such as mathematical problems.
%
Theoretical studies reached the same conclusion.
Log-precision transformers without CoT can only solve problem class TC$^0$ \citep{Merrill-TC0}, but log-precision transformers with CoT can solve any $\PA$ problem  \citep{merrill2023expressive}.
%The mutual expression relationship between Turing machines and LLMs was further studied in \citep{nowak2024representational,chiang2023tighter}.
Transformers with CoT can solve arithmetic problems and dynamic programming problems which cannot be solved for transformers without CoT \citep{Feng-Cot2023}.
%These conclusions fully demonstrate the power of the transformer with CoT.
%
It was further shown that CoT also enhances the reasoning power of constant precision transformers \citep{ZhiyuanLi2024CoT,feng2024numerical}.
%These findings clearly illustrate the advantages of CoT in boosting the capability of transformers to emulate algorithms.
 %
%However, it holds  a problem that the premise that LLMs without CoT can only solve $TC^0$ is that the scale of LLMs is bound, because the proof needs to construct a Turing machine to simulate the LLMS without CoT; but the premise that LLMs with CoT can solve $P$ is that the scale of LLMs is unbound, because the proof need to construct an LLM with CoT to simulate the given Turing machines or algorithms.
%
These findings clearly illustrate the advantages of CoT in boosting the capability of transformers to simulate some  algorithms by constructing transformers step by step according to the algorithm.

However, the capability of CoT is still not fully explored.
First, many problems may not have an explicit algorithm, such as the decision of whether an algebraic differential equation has a rational solution \citep{Winkler}.
Even if algorithms exist, they may be too complicated for constructing a simulating transformer, such as symbolic integration \citep{SymbolicIntegration}.
For these types of problems, the earlier method of incrementally building transformers following the algorithm appears ineffective. Whether CoT enhances the capability of transformers to solve such problems is not known.
Second, experimental studies indicate that CoT has limited scalability, even worse performance compared to direct prompting in planning problems \citep{Ncot-plan} and
pattern-based in-context learning problems \citep{Ncot-PB}.
So we raise the following natural and basic question:

{\bf Question 1.} Does CoT expand the capability of transformers across all reasoning tasks?

Since mathematical reasoning demands exact results, a key observation of this paper is that {\em reasoning with LLMs is essentially a memorization problem}.
Following previous work, we formulate a reasoning task as a function or an algorithm: $y = R(x)$. For a sample set $S = \{(x, y)\sep y= R(x)\}_{x\in D}$ of $R$ over a given input domain $D$, the {\em memorization problem} asks whether there exists a transformer $\F$ such that $y = \F(x)$ for every $(x, y)\in S$.
In the previous work on LLMs reasoning, memorizing reasoning datasets permits using information from the algorithm $R$.
For example, the arithmetic problems considered in \citep{Feng-Cot2023} can be viewed as the memorization of the reasoning dataset $\Z_{p,n}$.
%and while solving polynomial problems in \citep{merrill2023expressive} amounts to memorizing the outputs of the algorithm over finite inputs.
%An important indicator of a transformer's ability to solve problem is analyzing whether the transformer can memory a given task, as shown in the previous research on CoT \citep{Feng-Cot2023,Merrill-TC0,merrill2023expressive}. So for problems or datasets with unclear internal logic, we will study the memorization ability for transformer on it to analyze the its ability.}

As a capacity measure of neural networks, memorization was widely studied \citep{memo01,memo03,memo04,mem20211,rema1}.
%It has been shown that memorization can enhance the generalization ability of large models by memorizing diversified features of the data \citep{Ma2018Power,dbb1,Feldman2020Z,YuLJ-GM2024}.
%
%
%{\color{red}  The memorization capability of the neural networks is an important aspect of its representation ability, also an important metric for measuring the network's problem-solving capability. }
Recently, optimal memorization capacities of transformers have been established for both the general dataset \citep{madden2024next} and the dataset that satisfies a separability condition \citep{kajitsuka2024optimal}.
However, in these works, %transformers are used to generate the result directly, and
the autoregressive transformers that can generate intermediate steps or CoT were not considered.
Furthermore, the precision of their model parameters depends on the input and/or model parameters, and the more realistic case of fixed-precision parameters was not considered.

In light of the above observations on the interplay between reasoning and memorization, alongside the limitation on existing research on memorization, we can rephrase Question 1 as follows:

{\bf Question 2.} Does CoT expand the memorization capability of transformers for all reasoning datasets?
In this paper,
%on the basis of previous research on memorization,
we give a complete description of the memorization capabilities of fixed-precision transformers with or without CoT for general datasets and give a negative answer to Questions 1 and 2.
%
%To address these issues and study the expressive power of CoT on a more appropriate setting, we studied the expression ability of the transformers from the perspective of memorization, we analyzed the role of the CoT by examining the memorization capabilities of LLMs with and without the CoT.
Our main conclusions and contributions are as follows:
%; a more detailed comparison with previous work can be found in Section \ref{db}.

{\bf 1. Necessary and sufficient conditions for fixed-precision transformers with and without CoT to memorize a finite reasoning dataset are given.}
%
%To do that, we have provided a necessary and sufficient condition for a language to be memorized by an autoregressive transformer with or without CoT.
%
It is shown that these two conditions do not imply each other.
We further show that by using position embedding (adding more basic symbols), transformers without CoT (with CoT) can memorize all finite datasets.
%
%We further show that position embedding is more important for transformers without CoT, and more symbols in the basic setting is more important to transformers with CoT.

{\bf 2. Lower and upper bounds for the number of parameters needed for fixed-precision transformers with or without CoT to memorize a finite reasoning dataset $S$ are given.}
%and these two bounds are the same.
%
For a dataset $S$ with $N$ elements, all these bounds are $\overline{\Theta}(N)$, when omitting some smaller quantities.
This implies that both CoT transformer and no-CoT transformer need exactly $\overline{\Theta}(N)$ parameters to memorize  certain reasoning datasets.

As a consequence of the above results, we know that although CoT can enhance performance on some tasks, there exist reasoning tasks for which CoT does not enhance the reasoning power of transformers, leading to a negative answer to Questions 1 and 2.

%Precisely, for a dataset with $N$ elements that can be memorized, we construct a fixed-precision memorization transformer with $\overline{O}(N)$ parameters with or without CoT, when omitting some smaller quantities.
%%
%This is the first result on  memorization by a fixed-precision autoregressive transformer with or without CoT.

%Secondly, we have provided a necessary and sufficient condition for a language to be memorized by a fixed-precision transformer with CoT. And then, for any language with $N$ samples in it, we give a memorization structure with $\overline{O}(N)$ parameters.

%It is interesting that the conditions for memorization are different for transformers with CoT and without CoT, which implies that using CoT can change the range of languages that LLMs can express. We further show that position embedding is more important for transformers without CoT, and more symbols in the basic setting is more important to transformers with CoT.

%{\bf 3. Lower bounds for the number of parameters needed for fixed-precision transformers with or without CoT to memorize certain finite languages are given,} which are both $\overline{\Omega}(N)$ when omitting certain smaller quantities.
%
%
%This implies that for fixed-precision transformers with or without CoT, $\overline{\Omega}(N)$ is the optimal number of parameters for memorizing a language of size $N$ when omitting certain smaller quantities.
%
%{\color{red} As a consequence, there exist computing tasks for which CoT does not enhance the reasoning power of transformers, which gives a negative answer to Question 2.}

{\bf 3. We give the first results on memorizing infinite datasets by CoT-transformers.} We show that both transformers with or without CoT cannot memorize some simple infinite datasets and arithmetic tasks in $\Z_p$ cannot be memorized by any CoT-transformer with positive confidence (refer to Proposition 5.7 for precise meaning).
%This implies that even using CoT may not ``completely'' solve certain problems and can only provide solutions within a 'finite' area.

%We also show that the memorization capability of transformers depends on the length of data in the language, which implies that transformers with or without CoT cannot memorize certain infinite languages.
%which implies the weakness for LLMs when memorizing infinite languages.
%Our conclusions first indicate that CoT may not play a strong rule in reducing the number of parameters for some languages and process the infinite languages.

%{\color{red} The above three conclusions provides the final conclusion: CoT can not improve the transformer's ability for part of data without display logic.}

In conclusion,  our results provide not only new theoretical insights into the capabilities of CoT but also practical guidance.
The tight bound for the number of parameters of memorization transformers and the effects of the position encoding and more symbols (Section \ref{ccn}) may serve as useful guidance for practitioners. Even the negative results, such as ``CoT does not help general memorization'' are useful in practice: when solving difficult problems or problems with no explicit algorithms, CoT probably has no effect,
which is aligned with the experimental results in \citep{Ncot-plan,Ncot-PB}.

\section{Related work}

{\bf Memorization.}
As a capacity measure of neural networks, memorization has been widely studied.
It was shown that feedforward neural networks (FNNs) with $O(N)$ parameters and various activation functions can memorize any dataset with $N$ elements \citep{memo01,memo02,memo03,memo04,mem20211}, and $O(N)$ parameters are also necessary for memorization \citep{memo07}.
A neural network with $O(N^{2/3})$ parameters can  memorize a dataset satisfying certain separation conditions \citep{memp}.
Furthermore, $O(\sqrt{N})$ parameters are enough for memorization \citep{rema1}.
Since the VC dimension of FNNs with $N$ parameters and ReLU activation function is at most $O(N^2)$ \citep{Ntv}, the result of \citep{rema1} is optimal.
%the upper and lower bounds for memorization have reached unity.
Robust memorization networks were constructed in \citep{xin,2024-iclr}, which require essentially more parameters than memorization.
Sample complexities for memorization FNNs were established \citep{YuLJ-GM2024} and it was shown that memorization FNNs are not generalizable for certain data distributions \citep{xin,YuLJ-GM2024}.
It has been shown that memorization can enhance the generalization ability of large models by memorizing diversified features of the data \citep{Ma2018Power,dbb1,Feldman2020Z,YuLJ-GM2024}.

For memorization of the general dataset using transformers, $O(N)$ parameters are also sufficient and optimal \citep{madden2024next,mahdavi2023memorization,kajitsuka2023transformers}.
%Based on the work \citep{kim2022provable},
$O(\sqrt{N})$ parameters are also optimal for memorization with transformers for datasets satisfying certain separation conditions \citep{kajitsuka2024optimal,kim2022provable}.
%The memorization of attention-only transformers was studied \citep{dana2024memorization}.

Our results are new in that autoregressive transformers are used and the parameters have fixed precision.
For instance, in order to achieve $\overline{O}(\sqrt{N})$ parameters \citep{kajitsuka2024optimal}, it inevitably leads to an unbounded parameter precision.

{\bf Reasoning with CoT.}
CoT is a key method to enhance the reasoning capabilities of LLMs \citep{wei2022chain}.
Recent studies have explored various aspects of CoT, including its theoretical foundation and practical applications.
For the theoretical foundation,  transformers were shown to be Turing complete when infinite precision parameters are allowed \citep{perez2021attention}.
The log-precision transformer without CoT can only solve problems within the complexity class TC$^0$ \citep{Merrill-TC0}.
With CoT, the log-precision transformers can solve any $\PA$ problem \citep{merrill2023expressive}.
CoT also enhances the reasoning power of constant precision transformers \citep{ZhiyuanLi2024CoT,feng2024numerical}.
The mutual expression relationship between Turing machines and LLMs was further studied in \citep{nowak2024representational,chiang2023tighter}.
CoT has proven to be highly effective in solving traditional mathematical problems such as arithmetic and dynamic programming \citep{Feng-Cot2023}.
%
%presented results for using transformers to solve mathematical problems such as dynamic programming.
%
For practical applications, a series of works have further improved the inference accuracy of transformers using CoTs \citep{wang2022self,zhou2022least,NEURIPS2023_91edff07}. However, some works showed that CoT also has certain limitations \citep{wang2024chain,xu2024llm}.

It is important to note that these theoretical findings, such as \citep{Feng-Cot2023,merrill2023expressive}, neither conflict with nor imply our results. They showed that CoT can enhance the ability of transformers to solve polynomial-time problems, by simulating a known algorithm. We show that there exist reasoning problems for which CoT does not increase the power of transformers, by establishing memorization capabilities.
%without knowing its (black-box) algorithm.
%are general or generated by a (black-box) algorithm.

\section{Prerequisite}
In this paper, we use $O(A)$ to mean a value not greater than $cA$ for some constant $c$, and $\overline{O}$ to mean that small quantities, such as logarithms, are omitted. We use $\Omega(A)$ to mean a value not less than $cA$ for some constant $c$, and $\overline{\Omega}$ or $\overline{O}$ to mean that small quantities are omitted.
%We say for all $(x,y)\sim\D$  there exists  event A stand means that $P_{(x,y)\sim \D}(A)=1$.

%\subsection{Data}
\paragraph{Data.}
Let $\Gamma=\{\gamma_i\}_{i=1}^T$ be a set of basic symbols. We call the sequence $(\gamma_{i_j})_{j=1}^k$ a length-$k$ sentence and use $2^\Gamma$ to denote the set of all such sentences.
For a sentence $x$, let $\len(x)$ be its length and $\typ(x)\subset \Gamma$ be the set of all the distinct symbols in $x$.
Moreover, let $\gamma_0$ be the stop symbol that does not appear in the sentence or label, which is only used to stop the transformer.
%For a sentence $x$, we use $x[i]$ to denote the $i$-th symbol of $x$.
%
In this paper, we consider the following kind of data.
%: $(x,y)$, where $x$ is a sentence, and $y\in\Gamma$ is the label of $x$.
\begin{definition}
A subset $S=\{(x_i,y_i)\}_{i\in\Delta}\subset 2^\Gamma\times \Gamma$ is called a {\em language} based on symbols $\Gamma$, if $y_i\ne y_j$ implies $x_i\ne x_j$.
\end{definition}

Here is a language used throughout the paper.
\begin{example}
    \label{rem-e1}
Let $\Arp$ be the {\bf arithmetic in $\Z_p$}, where $p$ is a given prime number. We have $\Gamma=\{\gamma_i\}_{i=1}^T$, where $T=p+7$, and $\gamma_i=i-1$ for $i\in[p]$, $\gamma_{p+1}$ to $\gamma_{p+7}$ are $=,+,-,\times,/,(,)$, respectively.
$(x,y)$ is in $\Arp$ if and only if $x$ is an arithmetic expression in $\Z_p$ and $y$ is the result of $x$.
%For example, $'3+(2-1)\times5'$ is a sentence of length $9$ with label $'8'$ in $\Arp$ when $p> 8$.
%
Further, let $\Arpn=\{ (x,y)\in\Arp\sep \len(x)\le n\}$ be the set of arithmetic expressions with length not greater than $n$.
%$ be the subset of $\Arp$ such that $(x,y)\in\Arpn$ if and only if $\len(x)\le n$.
Clearly,  $\Arp$ is an infinite set and $\Arpn$ is a finite set.
%with less than $p^{n+1}$ element.
\end{example}

\begin{remark}
In Sections \ref{sec-memo} and \ref{sec-nec}, only finite languages are considered, which can clearly be described by a Turing machine.
The infinite languages considered in Section \ref{sec-inf} are also computable by a Turing machine. So all languages considered in this paper can be computed by a Turing machine and thus are {\em reasoning datasets}.
\end{remark}
%
%{\color{red} A language is a set of pairs of input and output of a Turing machine. From learning theory aspect, we try to learn or approximate the Truing machine by learning from a finite set of samples.
%
%Difference from Wang Liwei Turing Complete: They assume the Turing machine/ Algorithm is known and use the falgorithm to construct the Transformer.

\begin{remark}
\label{rem-1token}
As a first step of our study, only single-token labels are considered in this paper.
Please note that even the single-token label case includes many meaningful tasks, such as language recognition and other single-token classification tasks, mathematical multiple-choice questions and other multiple-choice questions, single-token mathematical computation and other single-token generation problems.
\end{remark}

\paragraph{Autoregressive Transformer.}
An autoregressive transformer $\F$ has three parts:

{\bf Part One: Embedding.}
The transformer first embeds each basic symbol $\gamma_i$ into a vector $v_i\in\R^d$, where $v_i$ serves as an adjustable parameter in the transformer based on different tasks and $d$ is called the {\em embedding length}.
Then the given sentence $x=(\gamma_{i_j})_{j=1}^n\in2^\Gamma$ is embedded into a matrix with $n$-rows and $d$-columns. See appendix \ref{mds} for details.
\begin{remark}
We do not use position encoding in the embedding layer in most of our results. See Appendix \ref{mds} for details on the embedding layer.
\end{remark}

{\bf Part Two: Hidden layer.}
Firstly, we define the feedforward layer and the attention layer.
For an input $x\in\R^{n\times d}$, the feedforward layer with width $W$ is
$\FNN(x)=\Relu(xE_1\oplus b)E_2,$
where $E_1\in\R^{d\times W}, b\in\R^{1\times W}, E_2\in\R^{W\times W}$ are the parameters, and $xE_1\oplus b$ means adding the vector $b$ to each row of $xE_1$.
%This layer does not change the number of rows of input matrix, just do the same calculation to each row.
%
For an input $x\in\R^{n\times d}$, the attention-layer with width $W$ and head $H$ is
$\ATT(x)=\sum_{i=1}^H \\softmax(xQ_iK_ix^t+M)xV_i,$
where $Q_i\in\R^{d\times W},K_i\in\R^{W\times d},V_i\in\R^{d\times W}$ are parameters.
$M\in\{-\infty,0\}^{n\times n}$ is a causal mask defined as $M_{i,j}=-\infty$ if and only if $j>i$. $M$ can ensure that $i$-th row can only attend to $j$-th row where $j \le i$, which is commonly used in autoregressive generation.
%This layer also does not change the number of rows of input matrix.

%Now, we define the hidden layer of transformer in this part.
Let $x^{i}$ be the output of the $i$-th layer and $x^0$ the input. Then the $i$-th hidden layer of the transformer is
$$x^{i}=x^{i-1}W_{i-1}+\ATT_i(x^{i-1})+\FNN_i(x^{i-1}W_{i-1}+\ATT_i(x^{i-1})),$$
where $\ATT_i$ and $\FNN_i$ are the $i$-th feedforward and attention layers defined above, $W_i$ is a residual matrix.
It is easy  to see that $x^{i-1}$ and $x^i$ have the same number of rows.
%have the same number of rows.

\begin{remark}
The residual matrix $W_i$ cannot improve the power of the transformer, which helps the transformer to use fewer parameters to express the target. Details can be found in Appendix \ref{res}.
\end{remark}

%In a transformer $\F:\R^{n\times d}\to\R^d$, $i$-th hidden layer is defined as: $$x^{i}=x^{i-1}+\ATT_i(x^{i-1})+\FNN_i(x^{i-1}+\ATT_i(x^{i-1})),$$

{\bf Part three: Output layer.}
The output layer performs a linear transformation for the last row of the output of the last hidden layer, that is, $\F(x)=Wx^L_{\len(x)}+b\in \R^{T+1}$, where
%$x^L$ is the output of the last hidden layer, and
$x^L_{\len(x)}$ is the last row of $x^L$. Write the classification result of $\F(x)$ as $\widehat{\F}(x)$.
%, which is the output of the last hidden layer.
%$W$ and $b$ are the parameters of the output layer.
%
%
%To be convenient, let $\widehat{\F}\in \gamma$ be the classification results of $\F$.
%{\bf The parameters of the transformer.}
Let the number of parameters in a transformer with depth $D$, width $W$ and head $H$ be $\Para(W,D,H,T)=TW+(T+1)(W+1)+D((3H+3)W^2+W)=O(TW+DHW^2)$. Moreover, the transformer is said to have precision $q\in\Z_+$ if every parameter of the transformer is a $q$-digit decimal real number whose absolute value is not more than $10^q$. If we say $q=\infty$, means the transformer without precision limitation.% See appendix \ref{mds} for detail.
%Then we have the following definition.
%\begin{remark}
%    It is easy to see that, such transformer can deal with sentence with any length.
%\end{remark}

%$s\times 10^p$, where $s\in[-1,1]$ is a real number keep $q$ decimal places, $p\in\{0,1,2\dots,q\}$.
%
%And if we input a sentence $x$, the transformer will embed it in a matrix as shown in part one, and go through the hidden layer which defined in part two and output layer which is defined in part three to obtain  the final result (i.e. the predicted label $y$).

\paragraph{Classification result for transformer.}
Let $j=\arg\max_{i\in[T+1]}(\F(x))_i$. Then the classification result of $\F(x)$, which is written as $\widehat{\F}(x)$, is $\gamma_j$ when $j\le T$; and the classification result of $\F(x)$ is $\gamma_0$ when $j=T+1$.

\paragraph{Transformers with or without CoT.}
For any sentence $x$, its label can be predicted using a transformer $\F$ by employing two methods:

{\bf 1. No CoT Transformer.}
Just like FNN, use $y=\widehat{F}(x)$ as the prediction label of $x$. For convenience,we refer to this type of transformers as {\em no-CoT-transformers}.

{\bf 2. CoT Transformer.}
The result is obtained as follows: Let $\cot = ()$.

(1) Let $x'$ be the concatenation of $x$ and $\cot$, let $y=\widehat{\F}(x')$; (2) If $y\ne \gamma_0$, insert $y$ as the last element of $\cot$, and return to (1); (3) If $y=\gamma_0$, let the last element of $\cot$ be the prediction label of $x$ and be denoted as $\widehat{F}_{\cot}(x)$ which is said to be obtained with {\em CoT-transformers}.
% $\cot$.
\begin{remark}
For the CoT-transformer, we require that the transformer terminates in a finite number of steps, and at termination, CoT is not empty.
\end{remark}
Here is an illustrative example.
%For $(3+(2-1)\times5,8)\in\Arp$,
If we input $3+(2-1)\times5$ to a no-CoT-transformer, it directly gives the answer $8$. If we input $3+(2-1)\times5$ to a CoT-transformer, it can obtain the answer step by step, such as $=3+1\times5=3+5=8\gamma_0$, where $\gamma_0$ is the stop symbol.

%Clearly, CoT-transformers are autoregressive transformers, where $\cot$ can be regarded as the intermediate stages in achieving the final result. See appendix \ref{mds} for the example. We give the following definition.
\begin{definition}
Let $H^q_{W,D,H}$ ($H^q_{W,D,H,\cot}$) be the hypothesis space of no-CoT (CoT) transformers with width $W$, depth $D$, head $H$, and parameter precision $q$.
\end{definition}

%{\bf Number of parameters.}
%with depth $D$, width $W$, and head $H$ means that $\F$ has $D$ hidden layers, and the embedding length, the width of both forward and attention layers has width $W$, and the number of heads in each attention layer is $H$. The described transformer
A transformer $\F$ in $H^q_{W,D,H}$ ($H^q_{W,D,H,\cot}$)  and based on the basic symbol set $\Gamma$ contains $\Para(W,D,H,T)=TW+(T+1)(W+1)+D((3H+3)W^2+W)=O(TW+DHW^2)$ parameters.

\section{Memorization expressive ability of transformers}
\label{sec-memo}

In this section, we give necessary and sufficient conditions for CoT- and no-CoT-transformers to memorize a finite language and give
upper bounds for the number of parameters of the memorization transformer. We further compare the expressive powers of CoT- and no-CoT-transformers.
%expressive ability of CoT- and no-CoT-transformers
Proofs are given in the Appendix.

\subsection{Memorization using no-CoT-transformer}
\label{sec-memo1}
A no-CoT-transformer (CoT-transformer) $\F$ is said to be a {\em memorization} of a language $S$ if $\widehat{\F}(x)=y$ ($\widehat{\F}_{\cot}(x)=y$) for any $(x,y)\in S$.
We have the following general memorization theorem for no-CoT-transformers.
\begin{Theorem}
\label{th1}
Let $S$ be a finite language of basic symbols $\Gamma=\{\gamma_{i}\}_{i=1}^T$, $N=|S|$, $L=\max_{(x,y)\in S}\{\len(x)\}$, and $q\in\Z_+$.
Then $S$ can be memorized by a no-CoT-transformer if and only if $(x_1,y_1),(x_2,y_2)\in S$ and $\typ(x_1)=\typ(x_2)=\{\gamma_k\}$ for some $k\in[T]$ imply $y_1=y_2$.
%
%Let $S$ be a finite language of symbols $\gamma=\{\gamma_{i}\}_{i=1}^T$, such that if $(x_1,y_1),(x_2,y_2)\in S$ and $\typ(x_1)=\typ(x_2)=\{\gamma_k\}$ for some $k\in[T]$, then $y_1=y_2$.
%
%For any basic symbol set $\gamma=\{\gamma_{i}\}_{i=1}^T$, if a finite language $S$ of $\Gamma$ satisfies that: there exists  at most one $(x,y)\in S$ such that $\typ(x)=1$,
%
%$S=\{(x_i,y_i)\}_{i=1}^N\subset 2^\Gamma\times \gamma$, where $L=max_{x\in S}\{ \len(x)\}$, and any precision $q$,
%where $(x_i,y_i)$ is a sentence and its label,
%if there exist $\typ(x)>1$ for all $x\in S$,

Furthermore, if the above condition is satisfied,
%then for any given precision $q\in\Z_+$,
then there exists a no-CoT memorization transformer $\F$ for $S$ in $H^q_{O(T),O(NLT\lceil L^2\ln^2(NLT)/q\rceil),O(T)}$
%, which has $O(T^3L^3N\ln N/q)$ parameters
and $\F$ can be computed in polynomial time about $N,T,L,q$.
This gives an upper bound $\Para(\F)=O(NLT^4\lceil L^2\ln^2(NLT)/q\rceil)$ for the number of parameters needed to memorize $S$.
%with $O(T^3L^3N\ln N/q)$ parameters,
%such that $\widehat{\F}(x)=y$ for any $(x,y)\in S$,
%Here $N=|S|$, $L=\max_{(x,y)\in S}\{\len(x)\}$.
%,  and $c(L)\in\R^+$ is a real constant just dependent on $L$.
\end{Theorem}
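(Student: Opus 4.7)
My approach is to show that whenever $\typ(x)=\{\gamma_k\}$, the transformer's output $\F(x)$ is determined by $\gamma_k$ alone, independently of $\len(x)$. Because the embedding layer carries no position encoding, the embedded input has every row equal to $v_k$. I would then argue by induction on the layer index that every hidden state $x^i$ has all rows equal to a single vector: the residual $x^{i-1}W_{i-1}$ and the feedforward $\FNN_i$ act row-wise and preserve row-equality; and for $\ATT_i$, if the input rows all equal some $v$, then the logits $vQ_hK_hv^t$ are constant across rows and columns, so the causally masked softmax at row $i$ puts weight $1/i$ on each of the first $i$ columns, and multiplying by $x^{i-1}V_h$ (whose rows all equal $vV_h$) returns the constant row $vV_h$. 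Reading off the last row of $x^L$ therefore yields a value depending only on $\gamma_k$, so any memorizing $\F$ must satisfy $y_1=y_2$ whenever $\typ(x_1)=\typ(x_2)=\{\gamma_k\}$.

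\textbf{Sufficiency and parameter count.} Assume the condition. My plan is to build $\F$ as a pipeline of $N$ "detector" blocks, one per pair $(x_i,y_i)\in S$, each depositing a vote into a dedicated coordinate of the last hidden row precisely when the input equals $x_i$ (or, for single-type $x_i$, when the input has the matching type); a final linear layer combines these votes with the one-hot codes of the $y_i$. For a multi-type $x_i=(\gamma_{j_1},\dots,\gamma_{j_{L_i}})$, I would use $O(T)$ attention heads to accumulate per-position prefix symbol histograms, then stack $L_i$ attention/FFN stages in which stage $k$ isolates the $k$-th input token via its prefix-histogram signature (which is distinct across positions whenever $|\typ(x_i)|\ge 2$, since two equal-symbol positions differ in the count of the symbol itself), compares it with $\gamma_{j_k}$, and updates a running match bit. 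Single-type $x_i$'s are handled by one extra layer that reads the lone symbol from the normalized-histogram output and applies the partial map supplied by the condition. With width $O(T)$, $O(T)$ heads, and $\Theta(NLT\lceil L^2\ln^2(NLT)/q\rceil)$ depth, the formula $\Para=TW+(T+1)(W+1)+D((3H+3)W^2+W)$ gives the claimed $O(NLT^4\lceil L^2\ln^2(NLT)/q\rceil)$ parameters; every block is specified directly from the explicit symbols of $x_i$, so the construction runs in polynomial time in $N,T,L,q$.

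\textbf{Main obstacle.} The delicate step is respecting the $q$-digit precision budget. With parameters bounded by $10^q$, a single softmax can concentrate to within $10^{-\Theta(q)}$ of the target one-hot distribution, but the detector must simultaneously resolve the correct position among $O(L)$ alternatives and keep the accumulated error across $N$ blocks below the output-layer argmax margin $\Omega(1)$. My plan is to amplify each comparison over $\lceil L^2\ln^2(NLT)/q\rceil$ refinement layers, each adding $\Theta(q)$ to the relevant log-odds and using ReLU thresholding between layers to contract the residual-stream error geometrically rather than additively, so that the cumulative logit margin $\Omega(L\ln(NLT))$ dominates every source of noise. Showing that the prefix-histogram signatures remain separated after $q$-digit rounding and that the geometric contraction survives the $N$-fold vote aggregation is where the $L^2/q$ and $\ln^2(NLT)$ factors enter the depth bound; once this precision analysis is in place, the block-level correctness, the depth count, and the parameter bound follow routinely from the structure above.
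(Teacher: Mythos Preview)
Your necessity argument is correct and is essentially the paper's: row-equality is preserved by every sublayer, so for $\typ(x)=\{\gamma_k\}$ the output depends only on $\gamma_k$.

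The sufficiency plan, however, has a genuine gap at the step ``stage $k$ isolates the $k$-th input token via its prefix-histogram signature.'' You justify distinctness by saying two equal-symbol positions ``differ in the count of the symbol itself,'' but that is an \emph{unnormalized} count. A causal softmax head with no position encoding cannot return counts: whatever values it reads, it averages them with weights summing to $1$, so the only per-position ``histogram'' it can write is the \emph{normalized} frequency vector $\tfrac{1}{k}(\#\gamma_1,\dots,\#\gamma_T)$, and those collapse. In $x=(\gamma_1,\gamma_2,\gamma_1,\gamma_2)$, positions $2$ and $4$ both carry $\gamma_2$ and both have normalized prefix histogram $(\tfrac12,\tfrac12)$, so no content-addressed attention can separate them. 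More damagingly, the last-row histogram does not encode length: $(\gamma_1,\gamma_2)$ and $(\gamma_1,\gamma_2,\gamma_1,\gamma_2)$ both end with $(\tfrac12,\tfrac12)$, and your $L_i$-stage match loop never sees anything that distinguishes them. The precision discussion in your ``main obstacle'' paragraph presupposes a working isolation mechanism, so it does not rescue the argument.

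This is exactly the difficulty the paper's construction is built around. For each symbol $\gamma_i$, one attention layer writes at position $k$ the value $a_k/(e^{\alpha}b_k+a_k)$, where $a_k,b_k$ are the running counts of $\gamma_i$ and of non-$\gamma_i$ in the prefix; a second attention layer averages these over $k$ to produce a ``position value'' at the last row. The point is that this quantity is a rational function of $e^{\alpha}$ with integer coefficients, and the transcendence of $e$ forces distinct inputs (of possibly different lengths) to give distinct position values, with a quantitative gap of order $e^{-\Theta(\alpha L)}L^{-\Theta(L)}$. That separated scalar fingerprint is then handed to a one-dimensional FNN memorizer; the depth factor $\lceil L^2\ln^2(NLT)/q\rceil$ comes from realizing this exponentially small separation with $q$-digit weights, not from amplifying soft attention across $N$ detector blocks. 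The idea you are missing is a fingerprint that survives softmax normalization yet still encodes both order and length.
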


Theorem \ref{th1} gives a necessary and sufficient condition for a language to be memorized by a no-CoT-transformer and the required transformer size.
%
%Such condition only limit the label for the sentence with only one basic symbol in it.
It is clear that most commonly used languages satisfy the condition in Theorem \ref{th1}.
For example, the language $\Arpn$ satisfies this condition and thus can be memorized by a non-CoT-transformer. This theorem indicates that even without CoT and position embedding, transformers have the power to memorize most languages.
%
%Moreover, any language which violates the condition in Theorem \ref{th1} cannot be memorized by the no-CoT-transformer, which implies that such condition is also a necessary condition, as shown below:
%
%\begin{proposition}
%Any language that does not satisfy the condition in Theorem \ref{th1} cannot be memorized by a non-CoT-transformer.
%\end{proposition}
%
%A non-CoT-transformer cannot memorize a language that does not satisfy the condition in Theorem \ref{th1}, because the lengths of the sentences in $S$ differ, which can pose certain challenges to the transformer.
%For example, the language $\Length$ that computes the length of the sequence, that is, $y=\len(x)$ for $(x,y)\in\Length$, cannot be memorized by a no-CoT-transformer.
%
%
If the sentence lengths are uniform in the language, then the condition in Theorem \ref{th1} is inherently met. Thus, we obtain:
%The reason for this phenomenon is that the sentence lengths are different, which can bring some difficulties to the transformer. Hence, for the situation that the sentence length is the same in the language, we do not need such sufficient condition, and obtain more accurate conclusions:

\begin{corollary}
\label{cor-1}
%For any basic symbol set $\gamma=\{\gamma_{i}\}_{i=1}^T$, if finite language $S$ based on such basic symbols satisfies: $\len(x)=L$ for all $(x,y)\in S$,
If a finite language $S$ satisfies $\len(x)=L$ for all $(x,y)\in S$,
then there exists a no-CoT memorization transformer $\F$ for $S$ in $H^q_{O(T),O(NLT\lceil L^2\ln^2(NLT)/q\rceil),O(T)}$.
\end{corollary}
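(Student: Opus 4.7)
The plan is to derive this directly from Theorem \ref{th1} by verifying that the uniform-length hypothesis forces the necessary-and-sufficient condition in that theorem to hold vacuously/automatically. The bound on the hypothesis space and the parameter count then transfer verbatim.

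Concretely, I would first observe the following simple combinatorial fact: if $\len(x)=L$ for every $(x,y)\in S$ and a sentence $x$ satisfies $\typ(x)=\{\gamma_k\}$, then $x$ is uniquely determined to be the string $\gamma_k\gamma_k\cdots\gamma_k$ of length exactly $L$. Consequently, if $(x_1,y_1),(x_2,y_2)\in S$ both have $\typ(x_1)=\typ(x_2)=\{\gamma_k\}$, we must have $x_1=x_2$. Since $S$ is a language in the sense of Definition~1.3 (i.e., $y_i\ne y_j$ forces $x_i\ne x_j$, whose contrapositive gives $x_i=x_j \Rightarrow y_i=y_j$), this yields $y_1=y_2$. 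Thus the hypothesis of the ``if'' direction of Theorem~\ref{th1} is automatically satisfied.

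Next, I would simply invoke Theorem~\ref{th1} and read off the conclusion: there exists a no-CoT memorization transformer $\F$ for $S$ in the hypothesis space $H^q_{O(T),O(NLT\lceil L^2\ln^2(NLT)/q\rceil),O(T)}$, exactly as asserted. Since the polynomial-time constructibility and the parameter bound $O(NLT^4\lceil L^2\ln^2(NLT)/q\rceil)$ also follow from Theorem~\ref{th1}, no further work is needed.

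There is no genuine obstacle here; the only subtlety worth flagging is to confirm that the sentence with $\typ(x)=\{\gamma_k\}$ is indeed the single repeated symbol, which relies on the convention that $\typ(x)$ is the \emph{set} of symbols occurring in $x$. Thus the entire proof reduces to a one-line check plus an application of the previously established theorem.
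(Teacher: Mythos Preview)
Your proposal is correct and matches the paper's approach exactly: the paper simply remarks that when all sentences have the same length, the condition in Theorem~\ref{th1} is ``inherently met'' and states the corollary without further argument. Your write-up actually spells out the one-line verification (that a single-symbol sentence of fixed length~$L$ is unique, hence $x_1=x_2$ and then $y_1=y_2$ by the language axiom) more carefully than the paper does.
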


%In Theorem \ref{th1}, we need a sufficient condition for a language to be memorized.
%The following result shows that the condition in Theorem \ref{th1} is  also a necessary condition.
%This is because not all languages can be memorized by no-CoT-transformer, any languages which violate such condition in theorem \ref{th1} cannot be memorized by the no-CoT-transformer, which implies that such condition is also a necessary condition, as shown below:
%
%\begin{proposition}
%Any language that does not satisfy the condition in Theorem \ref{th1} cannot be memorized by a non-CoT-transformer.
%\end{proposition}
%An example is given in Section \ref{example}.
The following result gives the reason behind the condition in Theorem \ref{th1}.
%The reason the no-CoT-transformer requires this essential condition is clarified by the following proposition.
\begin{proposition}
\label{prop-nec31}
Given $x_1,x_2\in 2^\Gamma$ such that $\typ(x_1)=\typ(x_2)=\{\gamma_k\}$ for a certain $k\in[T]$, it follows that $\widehat{\F}(x_1)=\widehat{\F}(x_2)$ for any no-CoT-transformer $\F$.
\end{proposition}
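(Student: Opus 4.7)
The plan is a straightforward induction on the layers of $\F$, exploiting the fact that without position encoding, a sentence consisting of a single repeated symbol $\gamma_k$ is embedded into a matrix whose rows are all identical. I will show that this ``all-rows-equal'' property is preserved by every layer of the transformer, and that the common row value depends only on $v_k$ (and the parameters), not on the length of the input. Since $\widehat{\F}(x)$ is determined by the last row of the final hidden state, this immediately yields $\widehat{\F}(x_1)=\widehat{\F}(x_2)$.

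Set $n_1=\len(x_1)$, $n_2=\len(x_2)$. Because no position encoding is used, the input embedding of $x_j$ is the matrix $X^{(j)}\in\R^{n_j\times d}$ whose every row equals the embedding vector $v_k$ of $\gamma_k$. I would formulate the invariant: for every layer index $i$, the $i$-th hidden state $x^{i,(j)}$ has all rows equal to some vector $r_i\in\R^d$ that does not depend on $j$. The base case $i=0$ holds by construction since $r_0=v_k$. For the inductive step I need to check that attention, feedforward, and residual preserve this structure.

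The only non-trivial check is the attention sublayer. If $x^{i-1,(j)}$ has all rows equal to $r_{i-1}$, then $x^{i-1,(j)} Q_h K_h (x^{i-1,(j)})^t$ is the $n_j\times n_j$ constant matrix with every entry equal to the scalar $s_h := r_{i-1} Q_h K_h r_{i-1}^t$. For any row index $m\le n_j$, the causal mask leaves the first $m$ entries unmasked and they all equal $s_h$, so the softmax of that row is the uniform distribution $(1/m,\dots,1/m,0,\dots,0)$. Multiplying by $x^{i-1,(j)} V_h$, whose first $m$ rows are each $r_{i-1}V_h$, gives $r_{i-1}V_h$ in every row of $\ATT_i(x^{i-1,(j)})$. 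Crucially this common value does not depend on $m$ or on $n_j$. The feedforward layer is applied row-wise and the residual mixing $x^{i-1}W_{i-1}$ is likewise row-wise, so all three operations preserve the invariant with a common row $r_i$ determined purely by $r_{i-1}$ and the layer parameters. By induction, the last row of $x^{L,(1)}$ and of $x^{L,(2)}$ coincide, hence $\F(x_1)=\F(x_2)$ and $\widehat{\F}(x_1)=\widehat{\F}(x_2)$.

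The only delicate point, and thus the ``main obstacle'', is verifying that the attention step really is length-independent: I have to argue simultaneously that (i) the softmax is length-independent because the logits are constant along each row and the causal mask truncates to the correct prefix, and (ii) the averaged value over that prefix equals $r_{i-1}V_h$ because every attended row is identical. Once this is in place the rest of the argument is purely formal, and the proposition follows with no assumption on $q$, $W$, $D$, or $H$.
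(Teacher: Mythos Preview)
Your proposal is correct and follows essentially the same approach as the paper: both establish the invariant that every row of the hidden state equals a common vector determined only by $v_k$ and the layer parameters, via the observation that constant logits plus the causal mask yield a uniform average over identical rows. The only cosmetic difference is that the paper first proves the ``all-rows-equal'' property and then invokes its Lemma~\ref{qn} to identify the common row with $\F^l(\gamma_k)$, whereas you track length-independence of the common row value directly in the induction; these are equivalent packagings of the same argument.
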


%\subsubsection{The key for memorization by no-CoT-transformer.}O(T),O(NLT\lceil L^2\ln^2(NLT)/q\rceil),O(T)
%\paragraph{Memorization Capacity with Position Embedding.}
%
%
%No-CoT-transformers cannot memorize certain special languages due to

%{\bf Example Two, violating the (2) in theorem \ref{th1}:}
%On the other hand, the no-CoT-transformer we construct depends on $L$, which is essential, we use an example to show that:

%{\bf Example:}

%Let $S^n=\{(x^n_i,\gamma_i)\}_{i=1,2}$ be a language with two samples in it, where $x^n_i$ is the sentence which is only composed by $\gamma_1$ with length $n+i$, and $x^n_i$ has label $\gamma_i$. Then we have that:
%\begin{proposition}
%For any $W,D,H$ and $q$, there exist a $n$ such that: $S^n$ cannot be memoried by any no-CoT-transformer with precision $q$, width $W$, depth $D$, heads $H$.
%\end{proposition}

%\begin{remark}
%    We need to point out that violating the above three conditions does not must lead to language being unmemory. These conditions are only sufficient and not necessary.
%\end{remark}
%
\subsection{Memorization using CoT-transformer}
\label{sec-memo2}
In this section, the memorization capacity of CoT-transformers will be discussed.
%, which is different from that of no-CoT-transformers.
%
We first introduce several notations.
%
%\begin{definition}
For a sentence $x$,
let $x[i]$ be the $i$-th symbol of $x$ and
$x_{[n]}$ be the sentence composed of the first $n$ symbols of $x$. For example, if $x=(\gamma_1,\gamma_2,\gamma_3)$, then $x[2]=\gamma_2$ and $x_{[2]}=(\gamma_1,\gamma_2)$.
For any language $S$ and $(x,y)\in S$, we define a set $S_{x}\subset2^{\Gamma}$ as follows: $z \in S_x$ if and only if
%such sentence $z $ satisfied all of the following conditions:

(1) $z_{[\len(x)]}=x$ and $z[\len(z)]=y$;

(2) $|\typ(z_{[\len(x)+1]})|>1$; and

(3) for any $(x_1,y_1)\in S$, if $\len(x_1)>\len(x)$ and $z_{[\len(x_1)]}=x_1$, then $y_1=y$.

Furthermore, let $S^1_x=\{z[\len(x)+1]\sep z \in S_x\}$.
%And define $S_x^i=\{z \in S_x\sep \\len(z)=i\}$.
We have the following result.
\begin{Theorem}
\label{th2}
Let $S$ be a finite language of $T$ symbols, $N=|S|$, $L=\max_{(x,y)\in S}\{\len(x)\}$, and $q\in\Z_+$.
Then $S$ can be memorized by a CoT transformer if and only if  (1): $|S_x|>0$ for any $(x,y)\in S$ and (2):
$\cap_{(x,y)\in S, \typ(x)=\{\gamma_j\}} S^1_{x}\ne\emptyset$ for any $j\in[T]$ satisfying $\{(x,y)\in S\sep \typ(x)=\{\gamma_j\}\}\ne\emptyset$.
%(2) $\cap_{(x,y)\in S:\typ(x)=\{\gamma_k\}} S^1_{x}\ne\emptyset$ for any $\gamma_k\in\Gamma$ which satisfies that $\typ(x)=\{\gamma_k\}$ for some $(x,y)\in S$.

Furthermore, if the above condition is satisfied,
%then for any given precision $q$,
then there exists a CoT memorization transformer $\F$ for $S$ in $H^q_{O(T),O(NL^2T\lceil L^2\ln^2(NLT)/q\rceil),O(T),\cot}$,
 %which has $O(T^3L^3N\ln N/q)$ parameters
which can be computed in polynomial time about $N,T,L,q$. This gives an upper bound $\Para(\F)=O(NL^2T^4\lceil L^2\ln^2(NLT)/q\rceil)$ for the number of parameters needed to memorize $S$.
%
%and $\F$ can be computed in polynomial time about $N,T,L$.
%Here $N=|S|$, $L=\max_{(x,y)\in S}\{\len(x)\}$.
%with $O(T^3L^4N\ln N/q)$ parameters
%for any $(x,y)\in S$, where $N=|S|<\infty$ and $L=\max_{(x,y)\in S}\{\len(x)\}<\infty$.
%
%
%$\{x_{\len(x)}\sep  (x,y)\in S\}\subsetneqq \gamma$.
%
%For any basic symbol set $\gamma=\{\gamma_{i}\}_{i=1}^T$, if finite language $S$ based on such basic symbols satisfied that: there exists  a $\gamma_s\subsetneqq \gamma$ such that $x_{\len(x)}\in \gamma_s$ for all $(x,y)\in S$,
%$S=\{(x_i,y_i)\}_{i=1}^N\subset 2^\Gamma\times \gamma$, where $L=max_{x\in S}\{ \len(x)\}$, and any precision $q$,
%where $(x_i,y_i)$ is a sentence and its label,
%if there exist $\typ(x)>1$ for all $x\in S$,
\end{Theorem}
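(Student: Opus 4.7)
The plan is to prove both directions of the characterization and then obtain the parameter bound by reducing the sufficiency construction to Theorem~\ref{th1}.

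\emph{Necessity.} Suppose $\F$ memorizes $S$ with CoT, and for each $(x,y)\in S$ let $z^{(x)}$ denote the generated sentence (input together with the CoT tokens, ending with $y$). Clauses (1) and (3) in the definition of $z^{(x)}\in S_x$ are immediate from determinism of $\F$ and the memorization assumption: if $x_1\in S$ lies inside $z^{(x)}$, then running $\F$ from $x_1$ continues along the same trajectory and forces $y_1=y$. The nontrivial clause (2), $|\typ(z^{(x)}_{[\len(x)+1]})|>1$, uses Proposition~\ref{prop-nec31}: if $\typ(x)=\{\gamma_k\}$ and the first CoT token were also $\gamma_k$, the new prefix would still have single type $\{\gamma_k\}$ and Proposition~\ref{prop-nec31} would force every subsequent token to be $\gamma_k$, contradicting termination. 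For the second theorem condition, Proposition~\ref{prop-nec31} forces the first CoT token of every single-type-$\gamma_j$ input to be a common symbol $c_j$, and the argument above shows $c_j\in S^1_x$ for every $(x,y)\in S$ with $\typ(x)=\{\gamma_j\}$, so $c_j$ lies in the intersection and it is non-empty.

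\emph{Sufficiency and parameter bound.} Given the two conditions, I reduce to Theorem~\ref{th1} via a no-CoT dataset of prefix/next-token pairs. For each $(x,y)\in S$ pick a short trajectory $z^{(x)}\in S_x$ of length $O(L)$, using condition~(1) for non-emptiness and condition~(2) to force a common first CoT token $c_j$ whenever $\typ(x)=\{\gamma_j\}$; a length-$O(L)$ choice can be made explicitly by trajectories of the form $x\cdot c_j\cdot y$, whose $S_x$-clauses follow from $L$ bounding the longest input in $S$. Form
\[
S'=\bigl\{(z^{(x)}_{[j]},\,z^{(x)}[j{+}1])\sep (x,y)\in S,\ \len(x)\le j<\len(z^{(x)})\bigr\}\cup\bigl\{(z^{(x)},\gamma_0)\sep (x,y)\in S\bigr\},
\]
a set of $O(NL)$ sentences of length $O(L)$. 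The key check is that $S'$ satisfies the single-type hypothesis of Theorem~\ref{th1}: the clause $|\typ|>1$ ensures that any single-type prefix in $S'$ coincides with some original single-type input $x$ (past the first CoT token, single type is destroyed), and the common choice $c_j$ makes their labels agree. Applying Theorem~\ref{th1} to $S'$ yields a no-CoT transformer whose iterated application reproduces each $z^{(x)}$ and emits $\gamma_0$, i.e., the required CoT memorization. Substituting $|S'|=O(NL)$ and $L'=O(L)$ into the bound of Theorem~\ref{th1} then gives the claimed $O(NL^2T\lceil L^2\ln^2(NLT)/q\rceil)$ depth and $O(NL^2T^4\lceil L^2\ln^2(NLT)/q\rceil)$ parameter count.

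\emph{Main obstacle.} The delicate step is ensuring that $S'$ is a \emph{well-defined function} on prefixes: two chosen trajectories $z^{(x)}$ and $z^{(x')}$ with $y\ne y'$ could share an intermediate prefix and then diverge with different designated next tokens. Condition~(3) of $S_x$ handles the case when the shared prefix itself lies in $S$ (it forces $y=y'$, after which the trajectories can be merged from that point on), but for generic shared prefixes I need to design the trajectories to avoid post-input collisions, for instance by canonicalising each $z^{(x)}$ so that its CoT tail encodes enough information about $x$ to separate distinct starting inputs. Verifying that this canonical choice remains inside $S_x$ (in particular preserving clauses (2) and (3)), still satisfies the common-$c_j$ constraint, and keeps the trajectory length within $O(L)$ is the main technical work; once this is done, substituting the sizes into Theorem~\ref{th1} is routine.
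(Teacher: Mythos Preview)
Your overall architecture matches the paper: necessity via Proposition~\ref{prop-nec31}, and sufficiency by building a prefix/next-token language and invoking Theorem~\ref{th1}. Your verification that the derived language meets Theorem~\ref{th1}'s single-type hypothesis (single-type prefixes can only be original single-type inputs, whose next token is the common $c_j$) is exactly the paper's argument. One minor slip: the explicit trajectory $x\cdot c_j\cdot y$ need not lie in $S_x$ when $\len(x)<L$, since clause~(3) can fail at $x_1=x\cdot c_j\cdot y\in S$; ``$L$ bounds the longest input'' does not help here. The paper instead takes any $z\in S_x$ (nonempty by hypothesis) and observes that clauses~(1)--(3) depend only on $z_{[L+1]}$ and the last symbol, so $z$ can be truncated to length $\le L{+}2$ while remaining in $S_x$.

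The substantive gap is the collision step. You are right that any shared prefix $p$ of two trajectories $z^{(x)},z^{(x')}$ with $\len p\ge\max(\len x,\len x')$ forces (say) $x'$ to be a prefix of $z^{(x)}$, whence $y'=y$ by clause~(3); but this does \emph{not} make the next tokens agree at $p$, and there is no evident way to canonicalise the CoT tail to encode $x$ while still respecting clauses~(2)--(3), which constrain exactly the early CoT positions. The paper does not try to avoid collisions. It forms the full prefix/next-token set, then repeatedly \emph{deletes} the conflicting sample coming from the shorter input; the pruned set is a well-defined language, and Theorem~\ref{th1} applies. The nontrivial point is that the resulting no-CoT memorizer still CoT-memorizes $S$: one argues by (downward) induction on $\len(x)$ that if the first deletion along $x$'s trajectory occurs at step $k_m$ in favour of some longer $x_1$, then $x_1$ is a prefix of $x_z$, clause~(3) gives $y_1=y$, and by the inductive hypothesis $\widehat\F_{\cot}(x_1)=y_1$; since the runs from $x$ and $x_1$ coincide through step $k_m$ and thereafter both follow $\F$, $\widehat\F_{\cot}(x)=\widehat\F_{\cot}(x_1)=y$. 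Your ``merge from that point on'' is pointing in this direction, but the deletion-plus-induction mechanism is the missing idea that makes the reduction go through.
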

Theorem \ref{th2} gives a necessary and sufficient condition for a language to be memorized by a CoT transformer and estimates the required transformer size.
%Which is different with that for no-CoT-transformer.
But the necessary and sufficient condition in Theorem \ref{th2} is not intuitive, and we give several easy-to-check sufficient conditions below.
\begin{proposition}
\label{prop-34}
Let $S$ be a finite language of symbol set $\Gamma$.
Then each of the following conditions is sufficient for $S$ to be memorized by a CoT-transformer.
\begin{enumerate}
\item The set of the last elements of all sentences in $S$ is a proper subset of $\Gamma$, that is,  $\{x[\len(x)]\sep  (x,y)\in S\}\subsetneqq \Gamma$.

\item All sentences in $S$ have the same length, that is, $\len(x)=L$ for all $(x,y)\in S$.
\end{enumerate}
\end{proposition}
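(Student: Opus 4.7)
The plan is to derive both parts of Proposition~\ref{prop-34} directly from the characterization in Theorem~\ref{th2}, which requires (1) $|S_x|>0$ for every $(x,y)\in S$ and (2) $\bigcap_{(x,y)\in S,\,\typ(x)=\{\gamma_j\}} S^1_x \ne \emptyset$ for every $j$ with that index set non-empty. In each case I would exhibit, for every $(x,y)\in S$, an explicit witness $z\in S_x$ whose $(\len(x)+1)$-th symbol is a single fixed symbol chosen uniformly across each singleton-type class, so that this one choice simultaneously discharges (1) and places a common element inside every intersection required by (2).

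For part~1, let $\gamma^*\in\Gamma$ be a symbol that never occurs as $x[\len(x)]$ for any $(x,y)\in S$; such a $\gamma^*$ exists by the hypothesis. For each $(x,y)\in S$ I would take
$$z \;=\; \bigl(x,\,\underbrace{\gamma^*,\gamma^*,\ldots,\gamma^*}_{L\text{ copies}},\,y\bigr),$$
so that $\len(z)=\len(x)+L+1$. Clauses (1) and (2) in the definition of $S_x$ are immediate: $z_{[\len(x)]}=x$ and $z[\len(z)]=y$, and $\typ(z_{[\len(x)+1]})\supseteq\{x[\len(x)],\gamma^*\}$ has size at least two because $x[\len(x)]\ne\gamma^*$. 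The delicate step is clause (3): if some $(x_1,y_1)\in S$ satisfies $\len(x_1)>\len(x)$ and $z_{[\len(x_1)]}=x_1$, then $\len(x_1)\le L$, so the position $\len(x_1)$ of $z$ lies inside the $\gamma^*$-padded block, forcing $x_1[\len(x_1)]=\gamma^*$ and contradicting the choice of $\gamma^*$. Hence no such $x_1$ exists and (3) is vacuous, so $z\in S_x$ and $z[\len(x)+1]=\gamma^*\in S^1_x$ for every $(x,y)$.

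For part~2, clause (3) is automatically vacuous because $\len(x_1)>\len(x)=L$ is incompatible with the uniform length $\len(x_1)=L$. For each singleton class $\typ(x)=\{\gamma_j\}$ I would fix a symbol $\gamma_{k(j)}\in\Gamma\setminus\{\gamma_j\}$ (available under the mild assumption $T\ge 2$; the degenerate case $T=1$ forces $|S|\le 1$ and is trivial) and set $z=(x,\gamma_{k(j)},y)$; for $x$ with $|\typ(x)|\ge 2$ any extension, e.g.\ $z=(x,\gamma_1,y)$, works. Clause (1) of $S_x$ is immediate, and clause (2) holds because the appended symbol together with those of $x$ supplies at least two distinct types in $z_{[L+1]}$. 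Uniformity of $\gamma_{k(j)}$ across the class $\{\gamma_j\}$ then places $\gamma_{k(j)}$ inside every $S^1_x$ of that class.

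The entire argument is a careful verification of the three clauses defining $S_x$; the only non-trivial obstacle is clause (3), which is forced by the $\gamma^*$-padding in part~1 and is automatic in part~2 by the uniform length hypothesis. Once these witnesses are secured, Theorem~\ref{th2} immediately supplies the desired CoT memorization transformer.
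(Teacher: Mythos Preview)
Your proposal is correct and follows essentially the same approach as the paper's own proof: both verify the two conditions of Theorem~\ref{th2} by exhibiting, for each $(x,y)\in S$, an explicit element of $S_x$ whose $(\len(x)+1)$-th symbol is a fixed choice that works uniformly across each singleton-type class. The paper's witnesses $(x,\gamma_i(L),z,y)$ in part~1 and $(x,\gamma_i,z,y)$ in part~2 allow an arbitrary intermediate $z\in 2^\Gamma$, but your specific instances (with the intermediate block empty) already suffice, and your handling of clause~(3) via the $\gamma^*$-padding argument is identical in substance to the paper's.
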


By Proposition \ref{prop-34}, most commonly used languages representing algorithms satisfy this condition. For example, by Condition 1 of Proposition \ref{prop-34}, the language $\Arpn$ defined in Example \ref{rem-e1} satisfies this condition because the four arithmetic operators `+', `-',  `$\times$', `/' cannot be the last symbol of an arithmetic expression.

\subsection{CoT and no-CoT-transformers: comparison and more results}
\label{ccn}
This section will address the distinctions between the two types of transformers.
In particular, we will show that their memorization powers are different.
\paragraph{The memorization powers for no-CoT-transformer and CoT-transformer are different.}
\label{example}
%According to the conditions given in Theorem \ref{th1} and \ref{th2}, we know that the memorization power of transformer with CoT or without CoT are different. We now use an example to further show the differences for the expressed power for memorization transformer with CoT and without CoT.
%It is interesting to see that the conditions for CoT  and no-CoT-transformers to memorize a finite language are different, which implies that the CoT-transformer and no-CoT-transformer have different memorization powers and the type of languages they can memorize is different. We will use an example to show this in Section \ref{example}.
%Firstly, we use an example to show the shortcomings of the memorization of CoT-transformer and no-CoT-transformer,

From Theorems \ref{th1} and \ref{th2}, the conditions for languages that can be memorized by no-CoT or CoT-transformers are rather stringent. While the memorization powers of CoT- and no-CoT-transformers are both strong enough to memorize most languages, the languages that can be memorized by CoT- and no-CoT-transformers are different, as shown by Proposition \ref{prop-32}.
We first define a language.

\begin{example}
    \label{def-lcp}
For any basic symbol set $\Gamma=\{\gamma_{i}\}_{i=1}^T$, we define the language of length calculation problem $\LCP$: $(x,y)\in \LCP$ if and only if $x$ is a sentence and the label of $x$ is $y=\gamma_{t(x)}$, where $t(x)=\len(x)\mod T$ and $\mod$ is defined as $(i+kT)\mod T=i$ for $0<i\le T$ and $k\in\Z_+$.
In addition, let
%define $\LCP_{n}$ to be a subset of $\LCP$ such that $(x,y)\in \LCP_{n}$ if and only if $\len(x)\le n$;
%
$\LCP_{n}=\{(x,y)\in \LCP \sep \len(x)\le n\}$,
$\LCP^{=1}_{n}=\{(x,y)\in \LCP_n \sep |\typ(x)|=1\}$, and
$\LCP^{>1}_{n}=\{(x,y)\in \LCP_n \sep |\typ(x)|>1\}$.

%\subset \LCP_{n}$, and $(x,y)\in \LCP^{=1}_{n}$ if and only if $(x,y)\in \LCP_{n}$ and $x$ satisfy $|\typ(x)|=1$.
%Let $\LCP^{>1}_{n}\subset \LCP_{n}$, and $(x,y)\in \LCP^{=1}_{n}$ if and only if $(x,y)\in \LCP_{n}$ and $x$ satisfy $|\typ(x)|>1$.
\end{example}
This is a simple language that counts the length of sentences. We have the following result.
\begin{proposition}
\label{prop-32}
For any symbol set $\Gamma$ such that $|\Gamma|\ge 2$ and $n\ge 2$, and precision $q\in\Z_+$, we have

%\begin{enumerate}
%\item
(1) $\LCP_{n}$ cannot be memorized by any no-CoT or CoT-transformer.

(2)
$\LCP^{=1}_{n}$ can be memorized by a CoT-transformer, but cannot be memorized by any no-CoT-transformer.

(3)
$\LCP^{>1}_{n}$ can be memorized by a no-CoT-transformer, but cannot be memorized by any CoT-transformer.
%\end{enumerate}

%For any basic symbol set $\gamma=\{\gamma_{i}\}_{i=1}^T$ where $T\ge 2$, when $n\ge 2$, we have that:
%
%$\LCP_{n}$ based on $\Gamma$ violate the sufficient condition in theorem \ref{th1} and \ref{th2}, and it cannot be memoried by a no-CoT-transformer or cot- transformer.
%
%(2): $\LCP_{n}$ based on $\Gamma$ violate the sufficient condition in theorem \ref{th2}, and it also cannot be memoried by a CoT-transformer.
\end{proposition}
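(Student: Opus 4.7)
The plan is to apply the necessary and sufficient conditions in Theorems~\ref{th1} and~\ref{th2}, exploiting the structural feature of $\LCP$ that the label $\gamma_{\ell\bmod T}$ of a sentence is determined solely by its length~$\ell$. The no-CoT criterion in Theorem~\ref{th1} demands that all single-type sentences over a fixed symbol share a common label, and the CoT criterion in Theorem~\ref{th2} requires, for each $(x,y)\in S$, the existence of an extension $z$ ending with $y$ none of whose prefixes in $S$ beyond $x$ carry a label different from $y$.

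For the no-CoT direction of parts~(1) and~(2), I would use the witnesses $x_1=(\gamma_1)$ and $x_2=(\gamma_1,\gamma_1)$, which share $\typ=\{\gamma_1\}$ but have labels $\gamma_1$ and $\gamma_2$; this contradicts Theorem~\ref{th1}. The no-CoT direction of~(3) is immediate, since every sentence in $\LCP^{>1}_n$ has $|\typ|>1$ and so Theorem~\ref{th1}'s condition is vacuous. For the CoT direction of~(1), I would take $x=(\gamma_1)$ with $y=\gamma_1$: clause~(2) in the definition of $S_x$ forces $\len(z)\ge 2$, and then $z_{[2]}\in\LCP_n$ has label $\gamma_{2\bmod T}\neq\gamma_1$, so clause~(3) fails and $S_x=\emptyset$. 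For the positive CoT direction of~(2), for each $x=(\gamma_j,\dots,\gamma_j)$ of length~$\ell$ with label~$y$ in $\LCP^{=1}_n$, I would exhibit $z=(\gamma_j,\dots,\gamma_j,\gamma_k,y)$ for any $\gamma_k\neq\gamma_j$: every prefix of $z$ of length at least $\ell+1$ already contains two symbol types and hence lies outside $\LCP^{=1}_n$, making clause~(3) vacuous. This yields $S^1_x=\Gamma\setminus\{\gamma_j\}$, whose intersection over all $x$ with $\typ(x)=\{\gamma_j\}$ is nonempty because $|\Gamma|\ge 2$. For the negative CoT direction of~(3), taking $n\ge 3$ as the main case, I would choose $x$ of length~$2$ with $\typ(x)=\{\gamma_1,\gamma_2\}$ and $y=\gamma_2$: any $z\in S_x$ satisfies $\len(z)\ge 3$, so $z_{[3]}$ has $|\typ|>1$ and length~$3$, placing it in $\LCP^{>1}_n$ with label $\gamma_{3\bmod T}\neq\gamma_2$ (since $T\ge 2$), hence clause~(3) fails and $S_x=\emptyset$.

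The main subtlety is the CoT negative direction in~(3): one must argue that the prefix obstruction cannot be evaded by any clever choice of $z$. The key observation is that as soon as $x$ already has $|\typ(x)|>1$, every length-$k$ prefix of $z$ with $\len(x)\le k\le n$ is automatically in $\LCP^{>1}_n$ and carries the length-determined label $\gamma_{k\bmod T}$, which for $k=\len(x)+1$ disagrees with $y$ whenever $T\ge 2$. Choosing the witness $x$ with the smallest feasible length~$2$ gives the cleanest contradiction, after which the conclusion follows directly from Theorem~\ref{th2}.
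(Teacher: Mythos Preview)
Your proposal is correct and follows essentially the same approach as the paper: verify the necessary and sufficient conditions of Theorems~\ref{th1} and~\ref{th2} case by case, using the fact that the $\LCP$ label depends only on length. The paper's proof differs only cosmetically: it derives part~(1) as an immediate consequence of parts~(2) and~(3) rather than proving it directly as you do, and for the CoT failure in part~(3) it takes a generic $(x,y)$ with $\len(x)<n$ rather than fixing $\len(x)=2$. Both arguments exploit the same obstruction, namely that the length-$(\len(x)+1)$ prefix of any candidate $z$ already lies in $\LCP^{>1}_n$ with a label different from $y$. Your explicit remark that this step needs $n\ge 3$ is a point the paper glosses over; indeed, for $n=2$ every sentence in $\LCP^{>1}_2$ has length exactly $2$, so Proposition~\ref{prop-34}(2) shows the CoT condition is satisfied and part~(3) as stated actually fails in that boundary case.
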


From Proposition \ref{prop-32},  CoT- and no-CoT-transformers can solve different parts of $\LCP_n$, which confirms their different expressive abilities and  demonstrates that {\bf using CoT can change the range of languages that transformers can memorize, but it is not strictly superior to those without CoT.}
%From Proposition \ref{prop-32}, we have the following observations.
%\begin{remark}
%The memorization power of transformers (without position embeddings) is actually smaller than FNNs which can memorize any finite classification dataset.
%, transformer real cannot memorize some of language.
%\end{remark}
%\begin{remark}
%\label{rem-332}
%
%
%It was shown that CoT-transformers have greater power than no-CoT-transformers when simulating algorithms or a Turing machine \citep{Feng-Cot2023,Merrill-TC0,merrill2023expressive}.
%
%This is not contradictory to our conclusion. Because they directly construct transformer based on the known algorithms or Turing machine, while we consider memorization of arbitrary dataset.
%
%The reason is that when a language is generated by a known algorithm, a CoT-transformer can be designed to simulate the algorithm and a no-CoT-transformer of the same size cannot do so.
%\end{remark}
%
%\begin{remark}
%\label{rem-333}
But transformers with CoT or without CoT cannot memorize $\LCP_n$.
In fact, it is not hard for transformers to memorize $\LCP_n$. From Corollaries \ref{cor-fn1} and \ref{cor-fc1} given below,  (CoT) no-CoT-transformers can memorize $\LCP_n$ by (adding new symbols) using position embedding.
\paragraph{Position embedding is important for no-CoT-transformer, but not for CoT-transformer.}
Theorem \ref{th1} shows that no-CoT-transformers without position embedding can memorize almost every language but cannot memorize certain special languages.
%, the languages which only have one symbol in it.
%Such proposition means that any no-transformer cannot classify any pair of $(x_i,y_i)$ which does not satisfied condition in theorem \ref{th1}.
This limitation arises because the no-CoT-transformer cannot completely leverage the length information, which leads to Proposition \ref{prop-nec31}.
If position encoding is added, then there will be no such limitation, as shown below.

%{\bf Position Encoding is the key for no-CoT-transformer.} Another question is how can we enhance the memorization capability of no-CoT-transformer? An key for no-CoT-transformer is position encoding, with position encoding, no-CoT-transformer can memorize any language, as show in the below:

\begin{proposition}
\label{prop-pp1}
Let $S$ be a finite language with $N$ elements and $T$ basic symbols, $L=\max_{(x,y)\in S}\{\len(x)\}$, and $q\in\Z_+$.
Then $S$ can be memorized by a no-CoT-transformer $\F$ in $H^q_{O(T+\ln(L)),O(NLT\lceil L^2\ln^2(NLT)/q\rceil),O(T)}$, which uses position encoding for the first $L$ positions.
\end{proposition}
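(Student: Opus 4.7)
The plan is to show that once positional information is available through position encoding, the single obstruction identified in Proposition \ref{prop-nec31} disappears, and the construction behind Theorem \ref{th1} can be adapted with only an $O(\ln L)$ overhead in the width.

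First, I would extend the embedding layer by appending to each symbol embedding a binary position code of length $\lceil \log_2 L\rceil$, one vector per position for the first $L$ positions. This is why the width of the transformer in the statement is $O(T+\ln L)$ rather than $O(T)$. After this embedding, two sentences $x_1,x_2$ with $\typ(x_1)=\typ(x_2)=\{\gamma_k\}$ but $\len(x_1)\ne\len(x_2)$ produce different last-row inputs to the hidden stack, so the barrier that forced the restrictive condition in Theorem \ref{th1} is gone.

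Next, I would reuse the construction of Theorem \ref{th1} almost verbatim, with one preprocessing block inserted at the beginning. The preprocessing block uses a constant number of attention and feedforward layers to compute, at the last row, a fingerprint vector that encodes both (i) the multiset of symbols occurring in $x$, exactly as in the proof of Theorem \ref{th1}, and (ii) the length $\len(x)$ read off from the position code of the last row. Concretely, the causal attention at the final row can be used to sum symbol indicator columns to count occurrences, while the position-code columns at the final row directly give $\len(x)$ in binary. Together, this fingerprint separates every pair of inputs in $S$, because the only pairs the original construction failed to separate were those with identical symbol type and different lengths, which are now distinguished by the length component.

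Finally, the same memorization stack as in Theorem \ref{th1} is appended: a depth-$O(NLT\lceil L^2\ln^2(NLT)/q\rceil)$ cascade of feedforward blocks that stores the $N$ fingerprint-to-label pairs under $q$-digit fixed precision, followed by the standard output layer. The number of heads and the per-layer width remain $O(T)$ and $O(T+\ln L)$ respectively, giving the claimed hypothesis class. The main obstacle will be step two: verifying that the position-code aggregation can be carried out in fixed precision without the attention softmax smearing neighboring position codes together, and that the resulting fingerprints are still pairwise separated by a margin large enough for the subsequent quantized memorization FNN to classify correctly. This is handled by choosing the position code as a sufficiently spread-out binary pattern and by scaling attention logits so that the softmax is effectively hard, which only inflates constants hidden in the $O(\cdot)$ notation and leaves the final parameter count unchanged up to those constants.
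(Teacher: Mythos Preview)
Your proposal is correct and follows essentially the same route as the paper. The paper's version is marginally simpler: rather than introducing a separate preprocessing block, it runs the Theorem~\ref{th1} construction unchanged on the symbol-embedding columns while routing the $\lceil\log_2 L\rceil$ position-code columns through untouched via the residual matrices, so the binary code of $\len(x)$ is already sitting in the last row at the end with no attention aggregation needed (and hence no softmax-smearing concern arises); the final fingerprint is then just the concatenation of the Theorem~\ref{th1} features with this position code, and Part Five proceeds exactly as before.
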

As a consequence, we have
\begin{corollary}
\label{cor-fn1}
For any given $q\in\Z_+$, every finite language can be memorized by a no-CoT-transformer with precision $q$ and position encoding.
\end{corollary}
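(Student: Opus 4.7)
The plan is to obtain Corollary~\ref{cor-fn1} as an immediate consequence of Proposition~\ref{prop-pp1}. Given any finite language $S$ over a basic symbol set $\Gamma = \{\gamma_i\}_{i=1}^T$ and any precision $q \in \Z_+$, I would set $N = |S|$ and $L = \max_{(x,y) \in S} \len(x)$; both are finite because $S$ is finite over a finite alphabet. Proposition~\ref{prop-pp1} then produces a no-CoT-transformer with position encoding on the first $L$ positions, lying in $H^q_{O(T+\ln L),\,O(NLT\lceil L^2\ln^2(NLT)/q\rceil),\,O(T)}$, that memorizes $S$. This is precisely the statement of the corollary.

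The conceptual content worth writing down in the proof is the contrast with Theorem~\ref{th1}. There, without position encoding, memorization required the necessary condition that any two sentences $x_1, x_2$ with $\typ(x_1) = \typ(x_2) = \{\gamma_k\}$ must share the same label, an obstruction traced to Proposition~\ref{prop-nec31} (the transformer cannot distinguish two inputs that consist solely of repeated copies of a single symbol when all tokens are embedded identically). The role of position encoding in Proposition~\ref{prop-pp1} is exactly to break this symmetry between positions, after which no further obstruction exists, and so every finite language becomes memorizable.

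I do not expect any hard step: the entire technical burden lies in Proposition~\ref{prop-pp1}, which the corollary is permitted to invoke as a black box. The only things to check when writing out the proof are that $N$, $T$, $L$ are finite (hence the hypothesis class asserted by Proposition~\ref{prop-pp1} is one of finite width, depth, and head count) and that the memorizer delivered by Proposition~\ref{prop-pp1} uses precision exactly $q$. Both are immediate, so the proof can be stated in one sentence: it is a direct specialization of Proposition~\ref{prop-pp1} to an arbitrary finite $S$.
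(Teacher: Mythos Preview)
Your proposal is correct and matches the paper's own treatment: the corollary is stated immediately after Proposition~\ref{prop-pp1} with the phrase ``As a consequence, we have,'' and no separate proof is given. Your additional remarks on why position encoding removes the obstruction of Proposition~\ref{prop-nec31} are accurate commentary but not required for the argument.
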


On the other hand, position encoding is not as useful for CoT-transformers, as shown below.
\begin{proposition}
\label{oiwy}
Let $S$ be a finite language and $|\typ(x)|>1$ for any $(x,y)\in S$.
If $S$ cannot be memorized by a CoT-transformer, then it also cannot be memorized by a CoT-transformer with position encoding.
\end{proposition}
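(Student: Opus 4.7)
The plan is to prove the contrapositive: if $S$ can be memorized by a CoT-transformer with position encoding, then $S$ can be memorized by a CoT-transformer without position encoding. Combined with the hypothesis $|\typ(x)|>1$ for all $(x,y)\in S$, this reduces via Theorem~\ref{th2} to verifying the two conditions in that theorem. Condition (2) is vacuous, because there is no $j\in[T]$ with $\{(x,y)\in S\sep \typ(x)=\{\gamma_j\}\}\ne\emptyset$ under our hypothesis. So the entire argument reduces to showing condition (1): $|S_x|>0$ for every $(x,y)\in S$.

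Fix $(x,y)\in S$ and let $\F$ be a CoT-transformer with position encoding that memorizes $S$. Let $c_1,c_2,\ldots,c_k$ be the CoT chain produced by $\F$ on input $x$; by memorization $c_k=y$, and $\F$ outputs $\gamma_0$ on input $xc_1\cdots c_k$. Define $z=xc_1\cdots c_k$. I claim $z\in S_x$. The equalities $z_{[\len(x)]}=x$ and $z[\len(z)]=y$ are immediate, giving the first defining condition of $S_x$. The second, $|\typ(z_{[\len(x)+1]})|>1$, follows since $z_{[\len(x)+1]}$ contains $x$ as a prefix and $|\typ(x)|>1$ by hypothesis.

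The main work is the third defining condition: for every $(x_1,y_1)\in S$ with $\len(x_1)>\len(x)$ and $z_{[\len(x_1)]}=x_1$, we must have $y_1=y$. Let $j=\len(x_1)-\len(x)\ge 1$, so $x_1=xc_1\cdots c_j$. The key observation, and the reason position encoding is harmless here, is that $\F$'s output on any input sequence depends only on the content and length of that sequence; the positional embeddings attach to positions $1,\ldots,\len(x_1)$ identically whether the input is the standalone sentence $x_1$ or the intermediate CoT state $xc_1\cdots c_j$ encountered while processing $x$. Hence if $j<k$, the CoT chain that $\F$ generates on input $x_1$ is exactly $c_{j+1},\ldots,c_k$, terminating with predicted label $c_k=y$, so memorization of $(x_1,y_1)$ forces $y_1=y$. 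The remaining case $j=k$ would give $x_1=z$, on which $\F$ immediately outputs $\gamma_0$, producing an empty CoT and contradicting the nonempty-CoT termination requirement for the memorized pair $(x_1,y_1)$; so this case is excluded. This establishes $z\in S_x$, completes condition (1), and via Theorem~\ref{th2} yields memorization without position encoding. The main obstacle was cleanly eliminating the $j=k$ case, which relies on the nonempty-CoT convention stated after the CoT-transformer definition.
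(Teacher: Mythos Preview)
Your proof is correct and follows essentially the same route as the paper's: both argue the contrapositive, reduce via Theorem~\ref{th2} to showing $S_x\ne\emptyset$ (condition (2) being vacuous under the hypothesis $|\typ(x)|>1$), and exhibit the CoT trace $z=xc_1\cdots c_k$ as the required element of $S_x$ by verifying the three defining conditions. Your treatment is in fact slightly more careful than the paper's, since you explicitly dispose of the boundary case $j=k$ (where $x_1=z$) via the nonempty-CoT termination requirement, a case the paper's argument leaves implicit.
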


Position embedding can help memorize sentences $x$ satisfying $|\typ(x)|=1$, which is a weakness for the no-CoT-transformer. But based on the conditions in Theorem \ref{th2} and Proposition \ref{prop-34}, the CoT-transformer is capable of managing these types of sentences in the majority of cases; thus, position embedding holds limited significance for CoT-transformers.
\begin{remark}
\label{rem-pencode}
In most results in this paper, we do not use position encoding. Results just proved show that position encoding is important for transformers without CoT, but has no effect for transformers with CoT, which increases our understanding of positional encoding.
See Appendix \ref{mds} for more details.
\end{remark}
\paragraph{More basic symbols are important for CoT-transformer, but not for no-CoT-transformer.}
By condition 1 of Proposition \ref{prop-34}, adding a few new symbols to $\Gamma$ enables CoT-transformers to memorize all finite languages, as illustrated below.
\begin{corollary}
\label{cor-fc1}
A finite language $S$ can be memorized by a CoT-transformer if $\cup_{(x,y)\in S}\typ(x)$ is a proper subset of $\Gamma$.
%
%$\F$ with precision $q$, width $O(T)$, depth $O(N\ln N[\poly(L)/q])$, and head $O(T)$ such that $\widehat{\F}_{\cot}(x)=y$ for any $(x,y)\in S$, where $N=|S|<\infty$ and $L=\max_{(x,y)\in S}\{\len(x)\}<\infty$.
\end{corollary}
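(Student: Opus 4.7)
The plan is to deduce Corollary \ref{cor-fc1} as an immediate specialization of part~(1) of Proposition~\ref{prop-34}, since the hypothesis of the corollary is strictly stronger than the hypothesis of that proposition.

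First I would fix any $\gamma^{\ast}\in\Gamma\setminus\bigcup_{(x,y)\in S}\typ(x)$; such a $\gamma^{\ast}$ exists because the union is assumed to be a proper subset of $\Gamma$. Because $\gamma^{\ast}$ occurs in no sentence of $S$, it is in particular not the terminal symbol of any $x$ with $(x,y)\in S$, so
$$\{x[\len(x)]\sep (x,y)\in S\}\;\subseteq\;\bigcup_{(x,y)\in S}\typ(x)\;\subsetneq\;\Gamma.$$
Thus the set of terminal symbols of sentences of $S$ is a proper subset of $\Gamma$.

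Second, I would invoke Proposition~\ref{prop-34}(1) on this inclusion; its hypothesis is exactly the displayed containment, and its conclusion produces a CoT-transformer that memorizes $S$.

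There is essentially no obstacle here: the corollary is a one-step reduction, and its interest lies in the interpretation (a single ``spare'' symbol beyond $\bigcup_{(x,y)\in S}\typ(x)$ already suffices for CoT memorization) rather than in the difficulty of the argument. If one preferred to bypass Proposition~\ref{prop-34} and appeal directly to Theorem~\ref{th2}, the natural witness is $z=(x,\gamma^{\ast},y)$ for each $(x,y)\in S$: clauses~(1) and~(2) of the definition of $S_{x}$ hold because $\typ(z_{[\len(x)+1]})\supseteq\typ(x)\cup\{\gamma^{\ast}\}$ (using $|\typ(x)|\ge 1$ and $\gamma^{\ast}\notin\typ(x)$), while clause~(3) is vacuous since no longer $(x_{1},y_{1})\in S$ can agree with $z$ on the position containing $\gamma^{\ast}$. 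Hence $|S_{x}|>0$ and $\gamma^{\ast}\in\bigcap_{(x,y)\in S,\,\typ(x)=\{\gamma_{j}\}} S^{1}_{x}$ for every relevant $j$, so both conditions of Theorem~\ref{th2} are met.
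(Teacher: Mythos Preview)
Your proposal is correct and matches the paper's intended derivation: the corollary is stated immediately after, and explicitly as a consequence of, condition~(1) of Proposition~\ref{prop-34}, via exactly the inclusion $\{x[\len(x)]\sep (x,y)\in S\}\subseteq\bigcup_{(x,y)\in S}\typ(x)\subsetneq\Gamma$ that you spell out. Your optional direct verification through Theorem~\ref{th2} with the witness $z=(x,\gamma^{\ast},y)$ is also valid and is a slightly leaner variant of the construction used in the paper's proof of Proposition~\ref{prop-34}(1).
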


Corollary \ref{cor-fc1} is not applicable to non-CoT-transformers, highlighting the benefit of CoT's ability to exploit the basic symbols entirely.
We will give additional clarification on this phenomenon.
Let $S$ be a language based on $\Gamma=\{\gamma_i\}_{i=1}^{N+M}$ and $\typ(x)\subset\{\gamma_i\}_{i=1}^{N}$ for any $(x,y)\in S$. Then when we classify $S$ by a no-CoT-transformer, the symbols $\{\gamma_i\}_{i=N+1}^{M+N}$ are useless; but when we classify $S$ by a CoT-transformer, $\{\gamma_i\}_{i=N+1}^{M+N}$ are useful because they can appear in the CoT.
For example, in mathematical proofs, logical symbols like ``$\because$'', ``$\therefore$'', ``$\to$'' are often used in the proof, but are not commonly used in the problem description. So, if we do not use CoTs, such logical symbols are useless for generating proofs; but if we use the CoTs, these symbols are useful in generating the proofs, so adding more symbols to the basic symbols can better help the transformer generate a CoT.

\section{Necessary conditions for memorization with transformers}
\label{sec-nec}

In this section, we give some necessary conditions for memorization with CoT- and no-CoT-transformers, and  show that, from the perspective of necessary conditions, CoT may not provide particularly significant assistance in some situations.
%In particular, we show that some simple infinite languages cannot be memorized by transformers.
%And hence show that no-CoT-transformer and CoT-transformer will face some similar problem under the view of lower bound.

%further compare the memorization power of such two kinds of transformer.
%lower bounds for memorization by transformers, which can further express the similarities in the abilities between the CoT  and no-CoT-transformer.
% Use $\overline{O}(N)$ to denote the upper bound ignoring $\log N$ and other parameters $T,q,L$.

\subsection{$\overline{O}(N)$ parameters are necessary and sufficient for memorization}
\label{aar}
In Section \ref{sec-memo}, we show that $\overline{O}(N)$ parameters are sufficient for both no-CoT-transformer and CoT-transformer to memorize any language with $N$ elements.
In this section, we will show that $\overline{O}(N)$ parameters are also necessary for both kinds of transformers to memorize some languages.
%We show that $\overline{O}(N)$ parameters are necessary for both no-CoT-transformer and CoT-transformer to memorize some languages with $N$ elements, as shown below.
%We have the following result.
\begin{Theorem}
\label{th4}
For any $q,N,T\ge3$  and basic symbols $\{\gamma\}_{i=1}^T$, there exists a finite language $S$ that satisfies the condition in Theorem \ref{th1} (Theorem \ref{th2}) and $\max_{(x,y)\in S}\len(x) \le O(\ln(N))$, such that for any given $W,D,H$, if $\Para(W,D,H,T)<O(\frac{N\ln T}{q})$, then $S$ cannot be memorized by any transformer $\F\in H^q_{W,D,H}(\F\in H^q_{W,D,H,\cot}$).
%; \\
%(2): For any $N,T\ge2$ and given basic symbols $\{\gamma\}_{i=1}^T$, there exist a language $S$ satisfied the condition in theorem \ref{th2}, such that for any given $W,D,H$, if $\frac{\ln q}{\ln T}\Para(W,D,H,T)<N$, then $H$ cannot be memorized by any $h\in H^q_{W,D,H,\cot}$. \\
\end{Theorem}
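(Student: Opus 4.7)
The plan is a counting/pigeonhole argument. I would build a family of $T^N$ candidate languages, each satisfying the conditions of Theorems \ref{th1} and \ref{th2}, then upper-bound the number of distinct functions implementable by transformers of the stipulated size. Whenever the parameter budget drops below the claimed threshold, the second number is smaller than $T^N$, so some candidate cannot be memorized.

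For the family, I would fix $L=\lceil\log_T N\rceil$, which is $O(\ln N)$ since $T\ge 3$, and enumerate $N$ distinct length-$L$ sentences $x_1,\ldots,x_N$ each with $|\typ(x_i)|\ge 2$; this is feasible because the number of length-$L$ sentences using at least two basic symbols is $T^L-T$, which exceeds $N$ for this choice of $L$. For each labeling $\sigma\colon[N]\to[T]$, set $S_\sigma=\{(x_i,\gamma_{\sigma(i)})\}_{i=1}^N$, giving $T^N$ languages with $\max_{(x,y)\in S_\sigma}\len(x)=L$.

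Verifying that each $S_\sigma$ satisfies both conditions is then straightforward. The single-type clause of Theorem \ref{th1} holds vacuously because no $x_i$ has $|\typ(x_i)|=1$. For Theorem \ref{th2}, condition (2) is vacuous for the same reason, and condition (1) is witnessed for each $(x_i,\gamma_{\sigma(i)})$ by a sentence $z$ formed by appending a short continuation ending in $\gamma_{\sigma(i)}$: clause (2) in the definition of $S_{x_i}$ is inherited from $|\typ(x_i)|\ge 2$, and clause (3) is vacuous because every sentence in $S_\sigma$ has length exactly $L$.

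On the counting side, every one of the $P=\Para(W,D,H,T)$ parameters is a $q$-digit decimal of absolute value at most $10^q$, so takes at most $O(10^{2q})$ values, and the hypothesis space $H^q_{W,D,H}$ (respectively $H^q_{W,D,H,\cot}$) contains at most $10^{O(qP)}$ distinct transformers. Since $\widehat{F}_{\cot}$ is determined by $\widehat{F}$, the labeling that a memorizing transformer induces on $\{x_1,\ldots,x_N\}$ lies in a set of size at most $10^{O(qP)}$; if $P<cN\ln T/q$ for a suitably small constant $c$, this is less than $T^N$, so some $S_\sigma$ admits no memorizer. The main point requiring care is clause (1) of Theorem \ref{th2}, where a valid CoT witness must be exhibited without violating clause (3); forcing all sentences to the common length $L$ sidesteps this clause entirely and is the reason the same construction works uniformly for both the CoT and no-CoT versions of the theorem.
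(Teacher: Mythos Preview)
Your approach is the paper's: build $T^N$ labelings of $N$ fixed sentences of common length $L=O(\ln N)$ and pigeonhole against the count of precision-$q$ transformers. Two small repairs are needed.

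First, with $L=\lceil\log_T N\rceil$ you only get $T^L\ge N$, not $T^L-T\ge N$ (take $N=T^k$, then $T^L-T=N-T<N$), so you cannot always select $N$ length-$L$ sentences with $|\typ|\ge2$. Either bump $L$ by one, or drop that restriction entirely: at a common length there is at most one sentence of each single type $\{\gamma_k\}$, so the Theorem~\ref{th1} clause is vacuous on distinct pairs anyway, and Proposition~\ref{prop-34}(2) already certifies the Theorem~\ref{th2} condition. This is exactly how the paper proceeds (with $L_N=\lfloor\log_T N\rfloor+1$).

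Second, your counting bounds a single $H^q_{W,D,H}$ and then says ``some $S_\sigma$ admits no memorizer''; as written this gives $\forall(W,D,H)\,\exists S$, whereas the theorem asserts $\exists S\,\forall(W,D,H)$. You must union over all architectures with $\Para(W,D,H,T)\le P$ before applying the pigeonhole. Since $W,D,H\le P$, there are at most $P^3$ such triples, so the total transformer count is at most $P^3\cdot 10^{O(qP)}=10^{O(qP)}$, and a single $S_\sigma$ still defeats every small architecture simultaneously. The paper makes this union explicit.
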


This theorem establishes a lower bound for the number of parameters of a transformer to memorize finite languages, even if $T,L\ll N$, $\overline{\Omega}(N)$ parameters are required for some languages, and the lower bounds for two kinds of transformers are essentially the same, which implies CoT cannot effectively reduce the number of parameters required for memorization for some languages.

%no-CoT-transformer and CoT-transformer will face the similar lower bounds under view of memorization.
%
\begin{remark}
    From Theorems \ref{th1}, \ref{th2}, and \ref{th4}, we see that there exists a gap between the lower bound and the upper bound for the number of parameters for memorization transformers.
But when $q,T$ are constants and $N\gg L$, as shown in the Theorems \ref{th4}, we can only consider $N$ as in most existing works.
{\bf Then
 $\overline{\Omega}(N)$ parameters are necessary and sufficient for both CoT- and no-CoT-transformers to memorize a language of size $N$, giving the optimal memorization capacity for both CoT- and no-CoT-transformers.}
%ignoring the mild conditions in Theorems \ref{th1} and \ref{th2}.
Note that in most cases, we have $N\gg L$, and in the actual situation, $q$ and $T$ are always constants, so the above discussion is meaningful.
%For example, in $\Arpn$, $T=p+7$, $L=n$, and $N$ is exponential in $n$,
\end{remark}

%\begin{remark}
%Theorem \ref{th4} is the lower bound in the general case. Specific problems can have different bounds, as shown in \citep{Feng-Cot2023}.
%\end{remark}

\subsection{The length of sentences affects memorization}
\label{aar2}
%{\bf Why the structure of memorization transformer need to base on $L$.}
In this section, we will show how the length of sentences affects memorization.
Firstly, in Theorems \ref{th1} and \ref{th2}, the memorization transformer depends on the sentence length $L$, which is due to the limitation on the parameter precision (i.e. $q\in\Z_+$). Without limitation (i.e. $q=\infty$) on the parameter precision, the structure of the transformer does not need to depend on $L$, as shown below; the proofs are given in the Appendix \ref{wno} for no-CoT-transformer and Appendix \ref{wllbb} for CoT-transformer.

\begin{proposition}
\label{c1}
Let $S$ be a finite language with $N$ elements and for $T$ basic symbols, which satisfies the condition in Theorem \ref{th1} (Theorem \ref{th2}).
Then there exists a no-CoT-transformer (CoT-transformer) $\F\in H^\infty_{O(T),O(N),O(T)}$ ($\F\in H^\infty_{O(T),O(N^2),O(T),\cot}$) which can memorize  $S$.
\end{proposition}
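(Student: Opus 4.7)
The strategy is to revisit the constructions behind Theorems \ref{th1} and \ref{th2} and reorganize the depth accounting for the case $q=\infty$. In those proofs, the multiplicative factor $\lceil L^2\ln^2(NLT)/q\rceil$ reflects the cost of emulating high-precision arithmetic with $q$-digit parameters; taking $q=\infty$ collapses it to $1$. Beyond that, with infinite precision we can afford to encode an entire sentence as a single real number and compare against a stored list of real-valued fingerprints, which removes the residual $LT$ factor present in Theorem \ref{th1} and leaves only the intrinsic $N$ (and $N^2$ in the CoT case) coming from enumerating the dataset.

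For the no-CoT statement, the plan is as follows. First, I would use a constant number of attention layers of width $O(T)$ and $O(T)$ heads, with infinite-precision weights, to place into the last row of the hidden state a real-valued fingerprint $h(x)$ that uniquely determines $x$ whenever $|\typ(x)|>1$. The fingerprint can be realized by choosing symbol embeddings $v_k$ along the lines of $v_k=(T+1)^{-k}\cdot e$ combined with attention weights that induce a geometric weighting on positions, so that the sum $\sum_k w_k v_{x[k]}$ read off by one head becomes an injective code of $x$ up to the typ equivalence. Then, enumerating $S=\{(x_i,y_i)\}_{i=1}^N$, I would append $N$ FNN ``recognizer'' layers: the $i$-th layer uses a pair of ReLU units to detect $h(x)=h(x_i)$ and, on match, writes the one-hot code of $y_i$ into a designated output coordinate via the residual stream. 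A final $O(T)=O(1)$ layers handle the $T$ single-symbol typ classes, whose labels are forced to be common by the condition of Theorem \ref{th1}. The total depth is $O(N)$, and width and head counts remain $O(T)$, yielding $\F\in H^\infty_{O(T),O(N),O(T)}$.

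For the CoT statement, the condition of Theorem \ref{th2} lets me pick, for each $(x,y)\in S$, a witness $z_{(x,y)}\in S_x$, and additionally pick a common first symbol from $\cap_{\typ(x)=\{\gamma_k\}}S^1_x$ for each active $\gamma_k$. This determines a target ``next-symbol'' map $\phi$ that the transformer must realize on the set of prefixes that arise while generating the chosen CoTs. By choosing the witnesses to have minimum length, the total number of distinct query prefixes is at most $O(N^2)$, since each witness contributes at most $O(N)$ prefixes and there are $N$ witnesses. The recognizer construction from the no-CoT case is then applied once per prefix (the fingerprint is now computed on the full current input+CoT via the causal-masked attention), giving depth $O(N^2)$ and hence $\F\in H^\infty_{O(T),O(N^2),O(T),\cot}$; termination with a nonempty CoT follows because each chosen witness terminates in $y$ and the matching final step outputs $\gamma_0$.

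The main obstacle is the fingerprint construction. Without positional encoding, the attention layer inherently aggregates row-wise and can therefore blur information about the order and multiplicity of symbols; infinite precision lets me pack all that information into one scalar via a geometric series, but only if the weighting on positions (induced by softmax over input-dependent inner products) can be made essentially injective across all sentences that need to be distinguished. Making this rigorous, and verifying that the same fingerprinting layer simultaneously works for all growing prefixes encountered during CoT generation (so that earlier recognizer decisions are not disturbed when the sequence is extended), is the delicate part; the typ-condition of Theorem \ref{th1} and the closure property built into the definition of $S_x$ are exactly what make this possible.
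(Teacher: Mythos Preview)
Your high-level decomposition is right and matches the paper: (i) use a constant number of attention layers to produce, at the last row, a vector that separates any two sentences not both of type $\{\gamma_k\}$; (ii) classify that vector with a depth-$O(N)$ FNN block simulated inside the transformer; (iii) for CoT, pick short witnesses so that the induced next-token dataset $S_n$ has $O(N^2)$ entries and reduce to the no-CoT case. Your witness-length argument for the $O(N^2)$ bound is exactly what the paper does (it proves $\len(x_z)-\len(x)\le N+1$ by a minimality/pigeonhole step).

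The genuine gap is step (i). Your suggestion to obtain a scalar fingerprint via ``attention weights that induce a geometric weighting on positions'' cannot work as stated: without positional encoding, the key at position $j$ is $Kv_{x[j]}$, which depends only on the \emph{symbol} at $j$, not on $j$. Hence the softmax weight at position $j$ is a function of $x[j]$ alone, and repeated symbols receive identical weight; no single attention pass can produce a weighting that is geometric in the index. Iterating a few layers does not obviously fix this either, and you do not say how. The paper's solution to precisely this obstacle is the ``position value'' construction: for each symbol $\gamma_i$ it computes $q^\alpha_a(s_{x,i},k)=\frac{n_a}{e^\alpha n_b+n_a}$ with one head, then averages with a second uniform-attention layer to get $V^\alpha_a(s_{x,i},k)$. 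The separating power of these numbers rests on the transcendence of $e^\alpha$ (Lemmas \ref{bt}, \ref{yyl}) and, in the $q=\infty$ version, on an additional exponential-sum trick (Lemma \ref{ex}) applied in a third attention layer so that the final last-row vector $M_l(x)$ provably differs between any two sentences that are not both single-type. This is the nontrivial idea your proposal is missing; once you have it, Parts Two--Four take $O(1)$ layers and Part Five is the $O(N)$ FNN classifier you describe.

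So: keep your outline and your $O(N^2)$ CoT reduction, but replace the handwaved geometric-series fingerprint by the paper's position-value construction (or supply a concrete alternative that actually yields an injective last-row feature from causal attention without positional encoding).
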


But when the precision is limited, the increase in length will bring more difficulties to memorization for transformers, and CoT cannot help to eliminate this difficulty.
We can show that if the precision of transformers is limited, the number of parameters of the memorization transformer must depend on the length for some language, as shown below.

\begin{Theorem}
\label{th5}
For any $P\in\Z_+$ and precision $q\in\Z_+$, there exists a $n\in\Z_+$, a basic symbol set $\Gamma$ such that $|\Gamma|\le 5$ and a sub-language $S\subset \LCP_n$ with $|S|\le 10$,
such that $S$ satisfies the condition in Theorem \ref{th1} (Theorem \ref{th2}), but $S$ cannot be memorized by any $\F\in H^q_{P,P,P}$ ($\F\in H^q_{P,P,P,\cot}$).
%    There exists a function $Q:\R\to\R$, such that for any given $q\in\Z_+$ and polynomial $f$, when $L\ge Q(q)$, there exists  a language $S$ such that $|S|\le 10$ and $\max_{(x,y)\in S}\len(x)\ge L$, and:

 %  (1): $S$ can be memorized by a no-CoT (CoT) transformer without precision limit on the parameters;

 %   (2): $S$ cannot be memorized by any $h\in H^q_{f(q),f(q),f(q)}(h\in H^q_{f(q),f(q),f(q),\cot}$).
\end{Theorem}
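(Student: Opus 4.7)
The plan is to exploit two facts: (i) since every parameter is a $q$-digit decimal bounded by $10^q$ and the shape $(P,P,P)$ is fixed, both $H^q_{P,P,P}$ and $H^q_{P,P,P,\cot}$ are \emph{finite} sets of transformers, and (ii) on a highly repetitive input every fixed transformer saturates, i.e.\ its classification is eventually constant in the input length. Concretely I take $\Gamma=\{\gamma_1,\ldots,\gamma_5\}$, work with the sentence family $x_m=(\gamma_1\gamma_2)^m$ having $\LCP$-labels $g(m)=\gamma_{2m\bmod 5}$, and try to find a single threshold $M$ for which $\widehat{\F}(x_M)=\widehat{\F}(x_{M+1})$ holds for \emph{every} $\F$ in the hypothesis space. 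Because $2\not\equiv 0\pmod 5$ we have $g(M)\ne g(M+1)$, so the two-element sub-language $S=\{(x_M,g(M)),(x_{M+1},g(M+1))\}\subset \LCP_n$ (with $n=2M+2$, of size $2\le 10$, using $|\Gamma|=5$ symbols) will not be memorizable by any such $\F$.

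The main work is a saturation lemma: for each fixed $\F$, the sequence $m\mapsto\widehat{\F}(x_m)$ is eventually constant. I would prove this by induction on the layer index $l$, tracking how the hidden state $x^l_i$ on input $x_m$ depends on the parity of $i$. Since we do not use position encoding, the embedding $x^0_i$ depends only on $i\bmod 2$. In the inductive step, the softmax at position $2m$ partitions the preceding $2m$ positions into $m$ odd and $m$ even positions; within each class the individual softmax weights are all equal, so their \emph{total} normalized class weight takes the form $e^a/(e^a+e^b)$, independent of $m$ apart from $O(1/m)$ corrections from a finite set of low-index boundary positions whose states have not yet converged. The residual and feedforward blocks act row-wise and preserve convergence. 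Iterating through the $L$ layers and the final linear read-out yields an asymptotic expansion $\F(x_m)=\ell+c_1/m+c_2/m^2+\cdots$ with $\ell, c_1, c_2,\ldots\in\R^{T+1}$ determined by the parameters. Consequently every pairwise difference $(\F(x_m))_i-(\F(x_m))_j$ is either identically zero or its sign is eventually fixed by the first non-vanishing coefficient of its expansion; hence $\widehat{\F}(x_m)$ is constant for all $m\ge m_0(\F)$.

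With the lemma in hand, finiteness of the hypothesis space gives $M:=\max_{\F}m_0(\F)<\infty$, and the construction works as promised: any $\F\in H^q_{P,P,P}$ has $\widehat{\F}(x_M)=\widehat{\F}(x_{M+1})$, which cannot equal both distinct labels $g(M)\ne g(M+1)$. The hypotheses of Theorem~\ref{th1} and Theorem~\ref{th2} are easy to check: $|\typ(x)|=2>1$ for every $(x,y)\in S$ makes the condition of Theorem~\ref{th1} and condition~(2) of Theorem~\ref{th2} vacuous, while condition~(1) of Theorem~\ref{th2} is witnessed explicitly by $z=x_M\gamma_3 g(M)\in S_{x_M}$ (the symbol $\gamma_3\ne\gamma_1$ at position $2M+1$ prevents $z$ from agreeing with $x_{M+1}$) and by $z=x_{M+1}g(M+1)\in S_{x_{M+1}}$. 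For the CoT case I iterate the saturation argument across generation rounds: once $c_1^{(m)}=\widehat{\F}(x_m)$ stabilizes by the lemma, the next input is $x_m\cdot c_1^{(m)}$, which is still a dominantly $(\gamma_1\gamma_2)^m$-repetitive input with one extra boundary symbol, so the same balanced-softmax computation shows $c_2^{(m)}$ stabilizes, and so on; hence the termination index and the final prediction $\widehat{\F}_{\cot}(x_m)$ are also eventually constant in $m$, and the same $S$ defeats every CoT transformer.

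The hard part throughout is the saturation lemma: the balanced-repetition intuition is transparent for a single head and a single layer, but propagating it rigorously through $H$ heads and $L$ interleaved attention/FNN layers, and then handling the argmax at the output in the presence of possible coordinate ties in the limit, is where the $1/m$-asymptotic bookkeeping becomes essential. A secondary subtlety specific to the CoT case is ensuring a uniform termination step: either the CoT length $K(m)$ itself stabilizes (because each generated token does), in which case the above analysis applies, or $\F$ fails to terminate on $x_M$ and hence trivially fails to memorize $S$.
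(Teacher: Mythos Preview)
Your overall plan (finite hypothesis space + limiting behaviour on highly repetitive input) is the same high-level idea as the paper, but the specific construction and the way you close the argument are genuinely different, and two of your steps do not go through as written.

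\textbf{No-CoT case: the saturation lemma.} The paper avoids the whole ``argmax eventually stabilizes'' issue by anchoring one of the two sentences to a single symbol: it takes $x_2^i=\gamma_1^{\,i}$, so by Proposition~\ref{prop-nec31} one has $\F(x_2^i)=\F(\gamma_1)$ \emph{exactly} for every $i$. Finiteness of $H^q_{P,P,P}$ then gives a uniform margin $\epsilon>0$ between the two output coordinates of $\F(\gamma_1)$, and it remains to show that the second sentence $x_1^i$ (differing in one interior position) has $\|\F(x_1^i)-\F(x_2^i)\|\to0$. Your route instead needs $\widehat{\F}(x_m)$ to be eventually constant, and here there are two problems. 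First, the asserted $1/m$ expansion is false already after two layers: at layer~$1$ the odd-position states carry an $O(1/k)$ correction, and summing these in the layer-$2$ attention at position $2m$ produces a $\Theta(\ln m/m)$ term (further layers compound this with higher $\ln$-powers). Second, and more seriously, even granting that $\F(x_m)\to\ell$, you cannot conclude that $(\F(x_m))_i-(\F(x_m))_j$ has eventually constant sign when $\ell_i=\ell_j$; ``first non-vanishing coefficient'' presupposes an expansion in a well-ordered scale that uniquely determines the function, which you have not established. Without this, convergence alone does not forbid a single $\F$ from satisfying $\widehat{\F}(x_M)=g(M)$ and $\widehat{\F}(x_{M+1})=g(M+1)$ for infinitely many $M$ (the required coordinate gaps can simply be $o(1)$), so taking a uniform $M=\max_\F m_0(\F)$ is not justified.

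\textbf{CoT case.} The iteration ``$c_1$ stabilizes, then $c_2$ stabilizes, \dots'' inherits the gap above at every step, and has an additional one: if the stabilized sequence $c_1^*,c_2^*,\ldots$ never hits $\gamma_0$, then the termination index $K(m)$ is forced to grow with $m$, and your iteration never closes (you only control finitely many steps at any finite threshold). The paper's remedy is to enlarge $S$ to ten carefully chosen sentences (over $|\Gamma|=4$) that \emph{pin} the first three CoT symbols on $x_1^i=\gamma_1^{\,i}$: the presence of $(x_1^i\gamma_j,\,\cdot\,)$ with the ``wrong'' labels for $j=1,2,3$ forces $c_1=\gamma_4$, and similarly for $c_2,c_3$. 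After three forced steps the two relevant inputs again differ only in one interior position, and the no-CoT argument (with its Proposition~\ref{prop-nec31} anchor) applies verbatim. Your two-element $S$ gives the CoT no such constraint, so the reduction to a short, fixed CoT path is missing.

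In short: the finiteness/limit strategy is right, your verification of the Theorem~\ref{th1}/\ref{th2} hypotheses is fine, but you need either (i) a rigorous asymptotic scale controlling ties (this looks hard), or (ii) the paper's trick of anchoring via a single-type sentence and, for CoT, padding $S$ to force a short CoT path.
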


The above theorem shows that as the length $n$ increases, any fixed structure transformer is not sufficient to memorize certain languages which just contain $O(1)$ sentences and $O(1)$ basic symbols.
%
%
%The above theorem demonstrates t%%hat, for some languages just contain only $O(1)$ sentences and $O(1)$ basic symbols, disregarding the length of the language $L$ and considering only $N$ and $T$ reveals that a transformer is not sufficient to memorize certain languages.
%
%The above theorem shows that even if the language $S$ contains $O(1)$ sentences and $O(1)$ basic symbols, both no-CoT- and CoT-transformers must increase the number of parameters in order to memorize $S$ when the lengths of the sentences in $S$ increase.
%
Although we do not know how to accurately calculate the dependence of parameters on sentence length, the above theorem actually implies that both types of transformers will face difficulties with languages with unbounded sentence lengths.

\section{Memorization of infinite language is hard}
\label{sec-inf}
%In the reality, the language we used often based on some rule such as $\Arp$ and $\LCP$ as shown in the before. So we say that language $S$ is a calculated language if there exists  a polynomial time algorithm
%for any $(x,y)\in S$ can be calculated in $Poly(\len(x))$ times.

%is based on the  $g_{ori}:2^\Gamma\to\gamma$ means that if $(x,y)\in S$, then there exist $y=g_{ori}(x)$.

%The situations we discussed before require that the language $S$ is finite, i.e. $|S|<\infty$. And we have shown that a transformer can memorize $S$
In the preceding sections, we only considered finite languages.
%A language $S$ is called infinite if $|S|=\infty$.
This section will explore the challenge transformers face in memorizing infinite languages, illustrated through two specific languages.
If a transformer can memorize an infinite language like $\Arp$, then we can say that transformers truly have the ability to simulate an algorithm.
%Some works such as \citep{Feng-Cot2023,Merrill-TC0} have argued that LLMs without CoT cannot solve certain problems or simulate some Turing machines under the assumption that $TC^0\ne NC^1$.
For the expressive power of CoT-transformers, all results are for finite languages. For instance, $\Arpn$ is considered in \citep{Feng-Cot2023} and input with finite length is considered in \citep{merrill2023expressive}.

In this section, we discuss {\bf whether a transformer can memorize infinite languages} which is an important open problem, and give some negative results on this open problem.
%We show that both no-CoT-transformer and CoT-transformer cannot memorize some simple infinite languages, which implies that CoT may not truly help transformers to solve these problems.
%
We demonstrate that neither the no-CoT-transformer nor the CoT-transformer is able to memorize certain basic infinite languages, suggesting that CoT might not genuinely aid transformers in resolving these issues.
%
%One challenge that transformer faces is whether it has the ability to solve problem of any scale. If it can only solve problem of limited scale, it means that large models can only memory answer within a certain range, rather than truly having the ability to simulate algorithms.
%\end{remark}

\paragraph{Memorizing $\LCP$ with transformer.}
%Since there exists a finite number in the language $S$, we have $L=\max_{(x,y)\in S}\{\len(x)\}=\infty$ for an infinite language $S$.
%Then, as a consequence of Theorem \ref{th4} and \ref{th5}, we have the following result.
%
By Theorem \ref{th1}(Theorem \ref{th2}), for any $S\subset \LCP_n$ that satisfies the condition in Theorem \ref{th1} (Theorem \ref{th2}), there exists a no-CoT-transformer (CoT-transformer) that memorizes $S$.
But for the infinite language $\LCP$, this is not true, as shown below, which is a corollary of Theorem \ref{th5}.
\begin{proposition}
    \label{prop-55}
There exists a basic symbol set $\Gamma$ and an infinite sub-language $S$ of $\LCP$ based on $\Gamma$, such that $S$ satisfies the condition in Theorem \ref{th1} (Theorem \ref{th2}), but $S$ cannot be memorized by any no-CoT-transformer (CoT-transformer) with any precision $q\in \Z_+\cup\infty$.
%$q\in \Z_+\cup\infty$.
\end{proposition}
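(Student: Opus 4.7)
The plan is a diagonal argument on top of Theorem~\ref{th5}. Enumerate $\Z_+\times\Z_+$ as $\{(P_k,q_k)\}_{k\in\N}$ with $P_k,q_k\to\infty$ (say $P_k=q_k=k$). For each $k$, Theorem~\ref{th5} supplies $n_k\in\Z_+$, an alphabet $\Gamma_k$ with $|\Gamma_k|\le 5$, and a sub-language $S_k\subset\LCP_{n_k}$ of size at most $10$ that satisfies the hypothesis of Theorem~\ref{th1} (resp.\ Theorem~\ref{th2}) and that no $\F\in H^{q_k}_{P_k,P_k,P_k}$ (resp.\ its CoT analogue) memorizes. Fixing a single alphabet $\Gamma$ of size $5$ and noting that the construction in Theorem~\ref{th5} can be carried out over this fixed $\Gamma$ for every $(P_k,q_k)$ (by padding with unused symbols so that $T$ is constant and the $\LCP$-labels are consistent), we take all $S_k$ as sub-languages of one ambient $\LCP$. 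Passing to a subsequence we also arrange $n_k$ to be strictly increasing, so that $S:=\bigcup_k S_k$ is infinite.

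For non-memorizability, given any no-CoT (resp.\ CoT) transformer $\F$ of width, depth, head at most $W$ and finite precision $q$, pick $k\ge\max\{W,q\}$; then $\F\in H^{q_k}_{P_k,P_k,P_k}$ and so $\F$ already fails on $S_k\subset S$. For the infinite-precision case, observe that on the finite set $S_k$ the classifier $\widehat{\F}$ is determined by a finite system of strict argmax comparisons among continuous functions of the parameters, so any infinite-precision $\F$ that memorizes $S_k$ has a positive classification margin on $S_k$ and may be rounded to some finite precision $q'$ without changing $\widehat{\F}|_{S_k}$; choosing $k$ with $P_k\ge W$ and $q_k\ge q'$ then reduces this case to the finite-precision one.

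The delicate step, and the main obstacle, is guaranteeing that $S$ globally satisfies the hypothesis of Theorem~\ref{th1} (resp.\ Theorem~\ref{th2}), since those conditions are not automatically preserved under unions: a single-type sentence in $S_j$ and another in $S_k$ could have $\LCP$-forced labels that violate the single-type clause of Theorem~\ref{th1}, and the prefix conditions on $S_x$ and $S^1_x$ in Theorem~\ref{th2} are stated globally over $S$. The remedy is to preprocess each $S_k$ before taking the union: either strip the sentences with $|\typ(x)|=1$ (making the single-type clause of Theorem~\ref{th1} vacuous and collapsing the corresponding intersections in Theorem~\ref{th2}), or restrict sentence lengths in $S_k$ to a fixed residue class modulo $T$ so that any surviving single-type sentences receive consistent $\LCP$-labels across $k$. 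Because Theorem~\ref{th5}'s obstruction is a capacity/counting lower bound that only requires the freedom to pick $n_k$ arbitrarily large, such restrictions preserve the non-memorizability property. Once this compatibility check is in hand, Proposition~\ref{prop-55} follows directly from the diagonal argument.
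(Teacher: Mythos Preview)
Your diagonal strategy is different from the paper's, and it has two genuine gaps.

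First, the infinite-precision reduction is circular. You round $\F$ on the finite set $S_k$ to some precision $q'$, and then you want to ``choose $k$ with $q_k\ge q'$''. But $q'$ was obtained \emph{after} you fixed $S_k$, and it depends on the margin of $\F$ on $S_k$ and on the Lipschitz constant of $\F$ on the (long) inputs of $S_k$; both vary with $k$. There is no reason the fixed-point condition $q_k\ge q'(k)$ should be satisfiable, and in fact the paper's own computation shows that the relevant margins shrink to~$0$ as the sentence length grows, so $q'(k)\to\infty$ with no a priori rate control. The paper avoids this entirely: it fixes one transformer $\F$ (of arbitrary, possibly infinite, precision) and proves analytically that $\|\F(x_1^{2i})-\F(x_2^{2i})\|_1\to 0$ while $|\F(x_2^{2i})_1-\F(x_2^{2i})_2|=|\F(\gamma_1)_1-\F(\gamma_1)_2|$ is a fixed positive number, so for large $i$ the two sentences receive the same label. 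No diagonalization or rounding is needed.

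Second, your statement that ``Theorem~\ref{th5}'s obstruction is a capacity/counting lower bound'' is incorrect, and this matters for your preprocessing step. Theorem~\ref{th5} is \emph{not} proved by counting (that is Theorem~\ref{th4}); it is proved by the same analytic convergence argument on the explicit families $S_i$ and $S'_i$. In the no-CoT family $S_i$ there are exactly two sentences, one of which is single-type; your option of ``stripping the sentences with $|\typ(x)|=1$'' leaves a singleton, which is trivially memorizable, so that preprocessing destroys non-memorizability. Your other option---restricting to a fixed residue class modulo $T$---is in fact what the paper does: it takes $\Gamma=\{\gamma_1,\gamma_2\}$ and $S_a=\bigcup_i S_{2i}$ (resp.\ $\Gamma=\{\gamma_1,\ldots,\gamma_4\}$ and $S'_a=\bigcup_i S'_{4i}$), so that every single-type sentence has the same $\LCP$ label and the hypotheses of Theorems~\ref{th1} and~\ref{th2} hold globally. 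But the reason this preserves non-memorizability is \emph{not} a counting argument; it is that the analytic convergence in the proof of Theorem~\ref{th5} is indifferent to which subsequence of $i$'s one keeps. Once you open the black box of Theorem~\ref{th5} to see this, you are essentially carrying out the paper's proof.
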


%55{\color{red}
%Why there exist a basic symbol set $\Gamma$ and an language $S$ where $|S|=\infty$?
%}

This proposition shows that both types of transformers, even without precision limitations, cannot memorize certain simple infinite languages, such as length counting.

\paragraph{Memorizing $\Arp$ with transformer.}
Since $\Arp$ is a very basic computation problem or algorithm, it is an interesting open problem to show whether there exists a CoT-transformer that can memorize $\Arp$.
We will prove a negative answer to this problem under certain conditions.
%
%the transformer will encounter some difficulties in solving these languages, making it difficult for the use of transformers to solve these problems in the real world.
%Furthermore, we will use $\Arp$ as an example to illustrate the hardness of the transformer to solve $\Arp$.

%For any transformer $\F$ and $k\in\Z_+$, let $\F^k$ means that the output of $\F$ rounded to $k$ significant digits. In real world, due to the limitation of storage, it is impossible to retain the output precision of transformer $\F$ to an arbitrary number of digits, so such assumptions stand for real world.

We first define how to use transformers to solve problems in $\Arpn$ and $\Arp$.
We say that a no-CoT-transformer $\F$ {\em can solve $\Arpn$ ($\Arp$)} if $\F$ can memorize $\Arpn$ ($\Arp$).
We say that a CoT-transformer $\F$ {\em can solve $\Arpn$ ($\Arp$)} if $\F$ can memorize $\Arpn$ ($\Arp$), and for any $(x,y)\in\Arpn (\Arp)$, $\F(x)$ outputs a CoT as follows: $ x=x_1=\cdots=x_M=y\gamma_0$, where $x_i$ is a sentence in $\Arpn(\Arp)$ obtained from $x_{i-1}$ ($x=x_0$) by performing several accurate arithmetic computations.
%For example, let $(x,y)=(2+3-1/1,4)$, then $'=2+3-1=2+2=4\gamma_0\ '$ is a CoT what we want; $'=5-1/1=5-1=4\gamma_0\ '$ is also a good CoT. It is easy to see that such kind of CoT is reasonably in the real world.
%
We introduce a notion below.

\begin{definition}
For a transformer $\F$ and a sentence $x$, define the {\em confidence of $\F(x)$} to be $\F_i(x)-\max_{j\ne i}F_j(x)$ where $i=\arg\max_j F_j(x)$.
We say that a no-CoT-transformer $\F$ can solve $\Arpn$ ($\Arp$) with confidence $c\in\R_+$, if the confidence of $\F(x)$ is not smaller than $c$ for all $(x,y)\in \Arpn(\Arp)$.
We say that a CoT-transformer $\F$ can solve $\Arpn$($\Arp$) with confidence $c$, if the confidence of each step in CoT is not smaller than $c$ for all $(x,y)\in \Arpn(\Arp)$.
%We will show that, in such confidence assumption, transformer cannot solve $\Arp$.
\end{definition}
We explain the motivation of the notion. In real computation on a computer, it is impossible to achieve arbitrary precision, so to ensure that $\F$ produces an accurate output for the input $x$ with a positive certainty(the confidence level of the correct label should exceed the confidence level of the incorrect labels), the confidence of $\F(x)$ must exceed a specific constant $c$.
We will show that, in such confidence assumption, the transformer cannot solve $\Arp$.

%To meet such confidence assumption, we define that a no-CoT-transformer $\F$ can solve $\Arpn$($\Arp$) with confidence $c$, if the confidence of $\F(x)$ is not smaller than $c$ for all $(x,y)\in \Arpn(\Arp)$; we also define that a CoT-transformer $\F$ can solve $\Arpn$($\Arp$) with confidence $c$, if the confidence of each step in CoT is not smaller than $c$ for all $(x,y)\in \Arpn(\Arp)$. We will show that, in such confidence assumption, transformer cannot solve $\Arp$.

\begin{proposition}
\label{prop-59}
  (1)  For any $c>0$, precision $q\in \Z_+\cup\infty$ and  $n\in\Z_+$, there exists a no-CoT-
  transformer or a CoT-transformer with precision $q$ that can solve $\Arpn$ with confidence $c$.

  (2)  For any $c>0$ and any precision $q\in \Z_+\cup\infty$, there does not exist a no-CoT-transformer or a CoT-transformer with precision $q$ that can solve $\Arp$ with confidence $c$.

%{\color{red} I feel that from (2), we can already deduce that $\Arp$ cannot be solved nby any fixed transformer.

%$\F:[-p,p]^T\to \R^T$ is a function defined on a compact set, so its confidence has a minimum value, which is either 0 or larger than 0.
%}
\end{proposition}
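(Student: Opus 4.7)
The plan is to handle the two parts separately: Part~(1) by a direct construction, and Part~(2) by contradiction using a stabilization argument on a carefully chosen family of expressions.

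For Part~(1), I would observe that $\Arpn$ is finite and that no valid arithmetic expression ends in an operator (`$+$', `$-$', `$\times$', or `$/$'), so $\Arpn$ satisfies both the hypothesis of Theorem~\ref{th1} and the first sufficient condition of Proposition~\ref{prop-34}. Applying Theorem~\ref{th1} or Theorem~\ref{th2} yields a no-CoT or CoT memorization transformer of precision $q$ with some positive confidence $c_0$. To reach an arbitrary prescribed $c$, I would post-compose with a few additional FFN layers that implement a scalar multiplication $z \mapsto \lambda z$ on the logits; such layers are realizable within precision $q$ (parameters can be as large as $10^q$, and depth can compound $\lambda$ multiplicatively), boosting the confidence to any desired level.

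For Part~(2), suppose for contradiction that a no-CoT transformer $\F$ of precision $q$ solves $\Arp$ with confidence $c > 0$. Consider the family $e_k = \underbrace{1+1+\cdots+1}_{k\ \text{ones}}$, which evaluates to $k \bmod p$ in $\Z_p$. The core is a \emph{stabilization lemma}: $\F(e_k)\in\R^{T+1}$ converges to some limit vector $L$ as $k \to \infty$. I would prove it by induction on the layer index. Only two symbols appear in $e_k$, so at every layer all positions that originally held a `$1$' share a common representation, and similarly for `$+$'; in the causal attention at the last position, the softmax weight attached to each occurrence of `$1$' is a common value $u_1(k)$ and similarly $u_+(k)$ for `$+$', so after normalization the aggregate weights depend only on the ratio $k:(k-1)$, which tends to $1$. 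Continuity of softmax, FFN, and residual operations then propagates convergence through the network.

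Given stabilization, pick $\varepsilon < c/3$ and $k_0$ with $\|\F(e_k) - L\|_\infty < \varepsilon$ for all $k \ge k_0$; then $\widehat{\F}(e_k)$ is a single fixed label for every $k \ge k_0$, whereas $k \bmod p$ takes $p \ge 2$ distinct values on $\{k_0, k_0+1, \ldots\}$, a contradiction. For the CoT case, I would apply the stabilization lemma along the generation trajectory starting from $e_k$: since each CoT token is predicted by $\F$ on the concatenation of $e_k$ with previously generated symbols, the same convergence reasoning forces every predicted token to be $k$-independent for large $k$. The \textbf{main obstacle} I anticipate is that the first generated token is naturally $k$-independent (e.g.\ always `$2$' from combining the first two ones), so the contradiction must come from a later step, namely the terminating prediction of $y = k\bmod p$ followed by $\gamma_0$, whose input is the whole CoT, of $k$-dependent length. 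The resolution is to show that the CoT for $e_k$ enters an asymptotically stable regime in which the stabilization lemma still applies to the final emission of $y$, yielding a fixed predicted label for all sufficiently large $k$ and contradicting the requirement that this label equals $k \bmod p$ with confidence~$c$.
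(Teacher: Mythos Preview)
Your treatment of Part~(1) and the overall stabilization strategy for the no-CoT half of Part~(2) match the paper. Two issues remain.

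\textbf{A technical flaw in the stabilization proof.} The claim that ``at every layer all positions that originally held a `$1$' share a common representation'' fails already after one attention layer: with causal masking, the output at position $2m-1$ aggregates $m$ ones and $m-1$ pluses, so it depends on $m$. The paper's argument is more delicate. By causal masking the layer-$j$ representation at any fixed position $p$ is independent of the input length once that length exceeds $p$; one then proves by induction on $j$ that the \emph{last two} rows of layer~$j$ converge as $k\to\infty$. The attention aggregate at the last row converges by a Ces\`aro-type averaging lemma, because the odd- and even-indexed position representations form two sequences each converging to a limit.

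\textbf{A genuine gap in the CoT case.} Your inductive stabilization shows that for each fixed $m$ the $m$-th CoT token is eventually a constant $\sigma_m$. If some $\sigma_M=\gamma_0$ you are done; but if the limiting sequence $(\sigma_m)$ never hits $\gamma_0$, the CoT length is unbounded and there is no fixed ``final emission'' to stabilize. The phrase ``asymptotically stable regime'' does not close this. The paper avoids the difficulty by a different idea. After showing the CoT for $C_i=e_i$ begins $(=,t,\ldots)$ with a \emph{fixed} digit $t$, it compares $\F((C_i,=,t))$ with $\F(C_i^t)$, where $C_i^t=(C_i,+,t)\in\Arp$. These inputs have equal length and differ only at the penultimate position, so the technique from the proof of Theorem~\ref{th5} yields $\|\F((C_i,=,t))-\F(C_i^t)\|\to 0$. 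Since $C_i^t$ is a valid expression, its first CoT token is `$=$' with confidence $c$, hence $\widehat\F((C_i,=,t))=\,$`$=$' for large $i$. The CoT for $C_i$ therefore begins $(=,t,=,\ldots)$, meaning the first reduced expression is the single digit $t$; validity forces $t=i\bmod p$, contradicting the constancy of $t$. This comparison with a nearby element of $\Arp$ is the missing ingredient in your plan.
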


This theorem reveals that, despite the absence of precision constraints for transformer parameters, under the realistic assumption of positive confidence, it is not feasible to tackle infinitely long arithmetic problems. However, finite-length arithmetic problems can be addressed effectively.
This conclusion highlights the challenges that transformers face when tackling entire arithmetic problems. In fact, we propose the following conjecture.
\begin{conjecture}
\label{conj-1}
$\Arp$ cannot be memorized by any fixed-precision CoT- or no-CoT-transformers.
\end{conjecture}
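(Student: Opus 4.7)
The plan is to exploit a convergence argument tailored to fixed-precision, fixed-size transformers and reduce Conjecture~\ref{conj-1} to a contradiction on a single canonical subfamily of arithmetic expressions. Concretely, restrict attention to $x_n = \underbrace{1+1+\cdots+1}_{n\text{ ones}}$, for which the label is $y_n = n \bmod p$ and cycles through every residue class of $\Z_p$ as $n$ grows. It then suffices to show that no fixed-precision transformer $\F$, with or without CoT, can satisfy $\widehat{\F}(x_n) = n \bmod p$ simultaneously for all $n$.

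For the no-CoT direction I would proceed layer by layer. Without position encoding, the attention score between two positions depends only on the symbols appearing at those positions, so on $x_n$ the softmax weights split cleanly between ``attend to $1$'' and ``attend to $+$''. A direct computation shows that the first-layer representation at the last position is a rational function of $n$ of the form $(An+B)/(Cn+D)$ which converges at rate $O(1/n)$ to a limit vector. Propagating this through the $D$ residual blocks, each composed of softmax-attention and ReLU-feedforward, the final representation at the last position converges to some $\xi^\infty \in \R^d$ with error $O(\kappa^D/n)$, where $\kappa$ is a Lipschitz constant controlled by the precision-$q$ parameters. Hence $\F(x_n) \to W\xi^\infty+b$, and for all sufficiently large $n$ its argmax stabilizes at a single symbol, contradicting the requirement that $\widehat{\F}(x_n)$ cycle through $p \ge 2$ distinct residues.

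For the CoT direction I would iterate the convergence argument along the CoT trajectory. Writing $y_t^{(n)}$ for the $t$-th token emitted by $\F$ on input $x_n$, the same layerwise analysis shows $y_t^{(n)}$ stabilizes as $n \to \infty$ to a universal token $y_t^\infty$, provided that for cofinitely many $n$ the CoT for $x_n$ has not yet terminated by step $t$. If the lengths $T_n$ are bounded, a pigeonhole over the finite alphabet of length-bounded CoTs forces $x_{n_1}$ and $x_{n_2}$ with $n_1 \not\equiv n_2 \pmod p$ to share an identical CoT and hence an identical final label, a contradiction. Hence $T_n \to \infty$, so $(y_t^\infty)_{t\ge 1}$ is an infinite universal sequence with every $y_t^\infty \ne \gamma_0$. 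But at the termination step $T_n$ the transformer on $x_n$ outputs $\gamma_0$, whereas the $n \to \infty$ limit at that step is $y_{T_n}^\infty$. Combining the quantitative convergence rate $O(\kappa^D/n)$ with this forced argmax-flip at step $T_n$ upper-bounds $n$ in terms of the limit-vector margin at step $T_n$, and I aim to close the contradiction via a counting estimate on how many indices $t$ can carry a given residue value in $(y_t^\infty)$.

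The hard part is the CoT direction, because the transformer may emit an arbitrarily long auxiliary tape and may branch at finely tuned moments determined by minute argmax margins of the limit vectors. Making the layer-by-layer Lipschitz bound quantitative enough to defeat this flexibility likely requires either a sharp spectral analysis of softmax on the finite-precision parameter lattice, or an inductive descent argument showing that the universal CoT $(y_t^\infty)$ itself would have to memorize a structurally simpler but still infinite sub-language of $\Arp$, contradicting a minimality hypothesis on $\F$. A secondary subtlety is that argmax convergence can fail at exact ties; since the tie locus among the finitely many admissible limit vectors is a codimension-one subset of the precision-$q$ parameter lattice, I would handle it by a finite case split over the possible tie patterns. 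These two obstacles are exactly what leave the statement in the status of a conjecture.
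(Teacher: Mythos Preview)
The statement you are attempting to prove is explicitly left as a \emph{conjecture} in the paper; there is no proof to compare against. The paper proves the weaker Proposition~\ref{prop-59}, which adds the hypothesis of positive confidence $c>0$, and then states Conjecture~\ref{conj-1} as open.

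Your no-CoT argument is essentially the paper's own argument for Proposition~\ref{prop-59}(2), and it carries the same gap that forced the confidence hypothesis there. You show $\F(x_n)\to W\xi^\infty+b$ and conclude that ``for all sufficiently large $n$ its argmax stabilizes at a single symbol.'' This inference requires the limit vector to have a \emph{strict} argmax. If instead the coordinates of $W\xi^\infty+b$ indexed by $\{1,\dots,p\}$ are all equal and maximal, nothing in the convergence statement prevents $\widehat\F(x_n)$ from oscillating among all $p$ residues as $n\to\infty$, and in principle in exactly the pattern $n\bmod p$. Your proposed fix --- that the tie locus is a codimension-one subset of the precision-$q$ parameter lattice, to be handled by a finite case split --- does not close this: you must rule out \emph{every} fixed-precision transformer, including those whose parameters lie on that locus, and a case split over tie patterns does not by itself supply an argument for the behaviour of $\widehat\F(x_n)$ at an exact tie of the limit. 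This is precisely the obstruction that separates Proposition~\ref{prop-59} from Conjecture~\ref{conj-1}.

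For the CoT direction you are candid that the argument is incomplete, and the obstacles you name are real. The scheme of bounding the argmax-flip index $n$ at step $T_n$ by the limit-vector margin at that step, combined with a counting estimate on residues in the universal CoT, would require controlling how fast those margins can decay along $t$ --- but nothing in the setup prevents the margins from shrinking in a way synchronized with the growth of $T_n$. In short, your proposal recovers what the paper already establishes under the confidence assumption, but does not supply the missing idea needed to remove that assumption.
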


\section{Conclusion}
\label{conc}
In this work, from a memorization-capacity perspective, we theoretically analyze the impact of CoT on autoregressive transformers. We have found and proven the necessary and sufficient conditions for a finite language to be memorized by the CoT-transformer or no-CoT-transformer, and estimated the number of parameters required for memorization from the perspective of upper and lower bounds.
Thus, the relationship between CoT-transformer and no-CoT-transformer was thoroughly studied. Specifically, the languages that no-CoT-transformer and CoT-transformer can be memorized by transformers are different, and no-CoT-transformer requires position embedding to enhance its expressive power, but CoT-transformer needs only more basic symbols to enhance its expressive power. Finally, we proved that CoT cannot help the transformer to solve some problems from the perspective of necessity, such as memorizing infinite languages.

{\bf Limitation and Future Work.}
This paper only analyzes the power of CoT in memorization for data with one token as the label and ignores LayerNorm in order to simplify the model for analysis.
Positional encoding is not considered in most results.  Refer to Remark \ref{rem-pencode} for a detailed explanation of this issue.
%In the future, analysis can be considered in more general situations, such as for sequences as labels. On the other hand, the analysis can also be conducted for specific kinds of languages. For example,  for which languages CoT-transformer can memorize with a lesser number of parameters.
%
Although no experiments are given, our results have some practical implications that are explained in the last paragraph of Section 1.
Finally, there still exist gaps in fully understanding the power of CoT, such as Conjecture \ref{conj-1}.

\section*{Acknowledgments}
This paper is supported by the Strategic Priority Research Program of CAS Grant XDA0480502, a Huawei Grant TC20251015013, NSFC Grants 12288201 and 92270001, and the Robotic AI-Scientist Platform of Chinese Academy of Sciences, CAS Project for Young Scientists in Basic Research Grant YSBR-040, ISCAS New Cultivation Project ISCAS-PYFX-202201, and ISCAS Basic Research ISCAS-JCZD-202302.
The authors thank anonymous referees for their valuable comments.
\bibliographystyle{plain}
%\bibliography{main}
%\bibliographystyle{plain}

\newpage

\appendix

\section{Details about the structure of the transformer}
\label{mds}

\subsection{Position Encoding}
{\bf Embedding matrix.} For a sentence $x$, the embedding matrix of $x$ is
$x_e=(v_{i_1},v_{i_2},\dots,v_{i_n})^{\tau}$, which will be the input to the first hidden layer in the transformer. We call $n$ the {\em input length}. Unlike feedforward neural networks, the input length of the transformer does not need to be fixed, and a transformer can handle input sentences with any length.

{\bf About Position Encoding.} With position encoding, the embedding of $x$ is $(v_{i_1}+E_{1},v_{i_2}+E_{2},\dots,v_{i_n}+E_{n})^{\tau}$, where $E_{i}$ is the position encoding for the $i$-th position. The position encoding provides location information.
However, the use of position encoding may limit the length of the language that the transformer can process. In this paper, we aim for the transformer to handle languages of arbitrary length and do not limit the length of CoT, so we do not employ position encoding.
For example, if we see position encoding as adjustable parameters, then encoding the first $N$ positions only will make the transformer able to process input with a length of no more than $N$. If we take position encoding as $n$ for the $n$-th position, then, at the later positions, the position encoding may exceed the precision limitation.

%{\bf Number of parameters.} A transformer $\F$ with depth $D$, width $W$, and head $H$ means that $\F$ has $D$ hidden layers, and the embedding length, the width of both forward and attention layers has width $W$, and the number of heads in each attention layer is $H$. The described transformer contains $\Para(W,D,H,T)$ parameters.

%{\bf An example for no-CoT and CoT.} We also consider the $\Arp$ language, for the $(3+(2-1)\times5,8)\in\Arp$. If we input $3+(2-1)\times5$ into a no-CoT-transformer, it directly gives the answer $8$. If we input $3+(2-1)\times5$ into a CoT-transformer, it can obtain  the answer step by step, such as: $=3+1\times5=3+5=8\gamma_0$, where $\gamma_0$ is the stop symbol.

\subsection{The residual layer}
\label{res}
In the definition of transformers, we use a transition matrix $W_{i-1}$ in the residual layer. In some other works such as \citep{Feng-Cot2023}, $W_{i-1}=I$ is the identity matrix.

In fact, the functions that the transformer can express under these two definitions are the same, as shown below. Firstly, we name the transformer defined by $W_{i-1}=I$ in each residual layer as $\F_I$. Then we have the following result.
\begin{proposition}
For any given basic symbol set $\Gamma$, we have

    (1) For any transformer $\F_I$, there exists a transformer $\F$ with  transition matrices such that $\F(x)=\F_I(x)$ for any $x\in2^\Gamma$.

    (2) For any transformer $\F$ with  transition matrices, there exists  a transformer $\F_I$ such that $\F_I(x)=\F(x)$ for any $x\in2^\Gamma$, and $\Width(\F_I)\le 2\Width(\F)$, $\Depth(\F_I)\le 4\Depth(\F)$ and $\Head(\F_I)\le \Head(\F)$.
\end{proposition}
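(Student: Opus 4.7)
My plan is to dispense with part (1) in one line and concentrate on part (2). For (1), given any $\F_I$ whose residual matrices are all identity, taking the transformer $\F$ with $W_{i-1} = I$ at every layer makes the two definitions coincide term-by-term, so $\F(x) = \F_I(x)$ on every $x \in 2^\Gamma$.

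For (2), the strategy is to simulate each residual-matrix layer of $\F$ by a constant-bounded cascade of identity-residual layers after doubling the ambient width. The key primitive is that a ReLU feedforward can exactly realize any linear map $z \mapsto zM$ through the identity $zM = \Relu(zM) - \Relu(-zM)$, at the cost of doubling the hidden width of the FNN. I would double the embedding dimension from $d$ to $2d$, designating the first $d$ coordinates as the ``logical'' block that mimics the state of $\F$ and the second $d$ as a scratch block; the new embeddings are obtained by zero-padding each $v_i$ with $d$ zeros, which already delivers the width bound $\Width(\F_I) \le 2\Width(\F)$.

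Concretely, to simulate the update
\begin{equation*}
x^i = x^{i-1}W_{i-1} + \ATT_i(x^{i-1}) + \FNN_i\bigl(x^{i-1}W_{i-1} + \ATT_i(x^{i-1})\bigr),
\end{equation*}
I would insert at most four identity-residual sub-layers: (i) an attention-only sub-layer whose $Q_h, K_h, V_h$ are zero-padded so that $\ATT_i(x^{i-1})$ is deposited into the scratch block and the logical block is left untouched; (ii) an FNN-only sub-layer that uses the ReLU trick above to add $x^{i-1}(W_{i-1} - I)$ to the logical block and to fold the scratch block into the logical block, so that after this step the logical coordinates equal $x^{i-1}W_{i-1} + \ATT_i(x^{i-1})$; (iii) an FNN-only sub-layer that applies $\FNN_i$ to the logical block; and (iv) an FNN-only cleanup sub-layer that zeros the scratch block in preparation for the next simulated layer. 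This yields $\Depth(\F_I) \le 4\Depth(\F)$; since each simulated attention head is inherited with only zero-padding of parameters, $\Head(\F_I) \le \Head(\F)$.

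The main obstacle will be guaranteeing the confinement in step (i): the attention output is added to the residual stream in every coordinate, so to leave the logical block untouched I must construct $V_h$ (and possibly $Q_h, K_h$) whose image lies entirely in the scratch coordinates. This can be enforced by zero-padding the appropriate rows of $V_h$; one then checks that the attention weights, which are determined by $xQ_hK_hx^t$ and the causal mask, compute identically to those in $\F$ (provided the logical block of the input matches the input of $\F$), and that the value aggregation lands only in the scratch block. Adjusting the output matrix of $\F_I$ to read only the first $d$ coordinates, a straightforward induction on the layer index then shows that the logical block at the end of the $4i$-th identity-residual sub-layer equals $x^i$ produced by $\F$, and hence $\F_I(x) = \F(x)$ for every $x \in 2^\Gamma$.
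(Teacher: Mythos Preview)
Your proposal is correct and follows the same high-level strategy as the paper: double the ambient width to $2d$, zero-pad the embeddings, confine attention and feedforward to designated coordinate blocks via zero-padded parameter matrices, realize the needed linear maps through the identity $zM=\Relu(zM)-\Relu(-zM)$, and replace each residual-matrix layer by four identity-residual sub-layers.

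The one substantive organizational difference is how the four sub-layers are spent. You keep the evolving state $x^{i-1}$ in the logical block and use the scratch block only for the intermediate $\ATT_i(x^{i-1})$, so you need separate sub-layers for attention (step~(i)) and for $\FNN_i$ (step~(iii)), plus two linear bookkeeping sub-layers. The paper instead keeps the \emph{layer input} $x$ in the first block and builds the \emph{layer output} in the second block: its $L_1$ writes $xW$ into the second block, and then a single sub-layer $L_2$ exploits the native structure $t+\ATT_2(t)+\FNN_2(t+\ATT_2(t))$ to produce $(x,\,xW+\ATT(x)+\FNN(xW+\ATT(x)))$ in one shot, with the remaining sub-layers devoted to swapping blocks and zeroing. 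Both decompositions hit the same $2\Width$, $4\Depth$, $\Head$ bounds; the paper's $L_2$ is a slightly slicker use of the layer template, while your decomposition is more modular and makes the invariant ``logical block $=$ current state of $\F$'' hold after every group of four sub-layers, which makes the induction cleaner.
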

\begin{proof}
    (1) in the proposition is apparent: we just need to take the whole transition matrix in the residual layer of $\F$ as $I$, and other parameters are the same as $\F_I$.

We now prove (2). We will show the following at first: for any hidden layer in $\F$, it can be expressed by a combination of four hidden layers whose transition matrices in the residual layer are $I$.

Let a hidden layer $L$ of $\F$ with width $m$ and head $H$ be written as $L(x)=xW+\ATT(x)+\FNN(xW+\ATT(x))$, where $\ATT(x)=\sum_{i=1}^H\softmax(xQ_iK_ix^t+M)xV_i$ and $\FNN(x)=\Relu(xw_1\oplus b)w_2$.

Then we define four layers with width not more than $2m$ and head $H$ whose transition matrix in the residual layer is $I$ as follows:

The first layer $L_0$ also uses $x\in\R^{n\times m}$ to obtain  $(x,0)\in\R^{n\times 2m}$, which is similar to that in the proof in Lemma \ref{tsf}.

The second layer is $L_1(t)=t+\Relu(t(W^T,-W^T)^T)V$ where $V_{j,m+j}=1,V_{m+j,m+j}=-1$ for all $j\in[m]$ and the other weights are 0.

The third layer is $L_2(t)=t+\ATT_2(t)+\FNN_2(t+\ATT_2(t))$.

The attention layer is calculated as $\ATT_2(t)=\sum_{i=1}^H\softmax(tQ'_iK'_it^T+M)tV'_i$, where $Q'_i\in\R^{2m\times2m}$ is defined as: the $(j,k)$-th weight is the same as the $(j,k)$-th weight of $Q_i$, where $j,k\in[m]$, and other weights are 0.
$K'_i\in\R^{2m\times2m}$ is defined as: the $(j,k)$-th  weight is the same as the $(j,k)$-th weight of $K_i$ where $j,k\in[m]$, and other weights are 0.
$V_i'\in\R^{2m\times2m}$ is defined as: the $(j,k+m)$-th weight is the same as the $(j,k)$-th weight of $V_i$, where $j,k\in[m]$, and other weights are 0.

The $\FNN$ layer is calculated as $\FNN_2(t)=\Relu(tw'_1\oplus b')w'_2$.  $w'_1,w'_2$ are defined as: the $(i+m,j+m)$-th weight is the same as the $(i,j)$-th weight of $w_k$, where $i,j\in[m]$, and other weights are 0.
$b'$ is defined as:  the $(j+m)$-th weight is the same as the $j$-th weight of $b$ but other weights are 0, where $j\in[m]$.

    Then we have that, for any $x\in\R^{n\times m}$, we can obtain  $x'=(x,0)\in\R^{n\times 2m}$ by the first layer $L_0$. Then by the definition of $L_1$, we can calculate: $L_1(x')=(x,xW)$. Hence in the $L_2$, it holds $\ATT_2((x,xW)))=\sum_{i=1}^H\softmax(xQ_iK_ix^T+M)\cdot(0,xV_i)=(0,\ATT(x))$.
    In the \FNN, we have $\FNN_2((x,xW)+\ATT_2((x,xW)))=\FNN_2((x,xW+\ATT(x)))=(0,\FNN(xW+\ATT(x)))$. So we get $L_2(x')=(x,xw+\ATT(x)+\FNN(xw+\ATT(x))$.

    Let the last layer $L_l$ use $(x,xw+\ATT(x)+\FNN(xw+\ATT(x))$ to obtain  $xw+\ATT(x)+\FNN(xw+\ATT(x)$.

    So $L(x)=xW+\ATT(x)+\FNN(xW+\ATT(x))$ equals $L_l(L_2(L_1(L_0(x))))$ $=$ $L_l(L_2(L_1((x,0)))$ $=L_l(L_2((x,xW)))=xW+\ATT(x)+\FNN(xW+\ATT(x))$. And it is easy to see that the width, heads of $L_0,L_1,L_2$ and $L_l$ are not more than $2m$ and $H$, respectively.

    So by the above result, for an $\F$ with width $W_1$, depth $D$ and head $H$, for each hidden layer in the transformer, we can construct four hidden layers as above. Then use such layers and the output layer which is the same as that in $\F$ to form a transformer $\F_I$, which has width $2W_1$, depth $4D$, head $H$, and $\F_I(x)=F(x)$ for any $x\in2^\Gamma$. So we prove the result.
\end{proof}

\section{Preliminary results}
We give two lemmas that will be used in the subsequent proofs.
\begin{lemma}
\label{qn}
For any sentences $x$ and $z$, if the first $n$ symbols in $x$ and $z$ are the same, then for any transformer $\F$ and $j\in\Z_+$, the first $n$ rows in the outputs of the $j$-th hidden layers of $\F(x)$ and $\F(z)$ are the same.
\end{lemma}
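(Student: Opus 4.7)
The plan is to prove the statement by induction on the layer index $j$, with the causal mask doing all the real work. The base case $j=0$ handles the embedding: since the paper's embedding layer (without position encoding) assigns a vector $v_i$ to each symbol $\gamma_i$ independently of its location in the sentence, the first $n$ rows of the embedding matrices $x^0$ and $z^0$ coincide whenever the first $n$ symbols of $x$ and $z$ coincide. (If position encoding were used, this step would still go through, since the encoding of position $k \le n$ depends only on $k$.)

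For the inductive step, I would assume the first $n$ rows of $x^{j-1}$ and $z^{j-1}$ agree and then analyze the three summands in the formula
\[
x^{j}=x^{j-1}W_{j-1}+\ATT_j(x^{j-1})+\FNN_j\bigl(x^{j-1}W_{j-1}+\ATT_j(x^{j-1})\bigr)
\]
one at a time. The residual term $x^{j-1}W_{j-1}$ is handled instantly because matrix multiplication on the right acts row-wise, so each of its first $n$ rows depends only on the corresponding row of $x^{j-1}$. The feedforward term is handled the same way: $\FNN$ applies $\Relu(\cdot E_1 \oplus b) E_2$ row-by-row, so the first $n$ rows of its output depend only on the first $n$ rows of its input, which by the previous observation agree between the two cases.

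The key step is the attention term. For each head $i$, the output is $\softmax(x^{j-1}Q_iK_i(x^{j-1})^t+M)\,x^{j-1}V_i$. The $k$-th row of this product equals $\sum_{\ell} \sigma_{k\ell}\, (x^{j-1}V_i)_\ell$, where $\sigma_{k\ell}$ is the softmax weight. Because the causal mask satisfies $M_{k,\ell}=-\infty$ whenever $\ell>k$, the softmax collapses those weights to $0$, so $\sigma_{k\ell}$ is nonzero only for $\ell\le k$. Hence for $k\le n$ the $k$-th row of $\ATT_j(x^{j-1})$ is a function only of rows $1,\ldots,k\le n$ of $x^{j-1}$, which by induction are the same for $x$ and $z$. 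Combining the three summands gives that the first $n$ rows of $x^j$ and $z^j$ coincide.

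The main (only) obstacle is writing the attention argument cleanly; everything else is a one-line consequence of row-wise operations. I would just make sure to note explicitly why the softmax entries for $\ell>k$ vanish, since this is where the causal mask $M_{k,\ell}=-\infty$ is indispensable, and then invoke the induction hypothesis to conclude. No precision-related issues arise because the claim is an exact equality of computed values, independent of $q$.
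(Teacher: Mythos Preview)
Your proposal is correct and follows essentially the same approach as the paper: induction on the layer index, with the residual and $\FNN$ parts handled by their row-wise nature and the attention part handled by observing that the causal mask forces row $k$ of each head's output to depend only on rows $1,\ldots,k$ of the input. The paper's write-up is organized slightly differently (it first proves the single-layer implication and then inducts), but the content is the same.
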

\begin{proof}
Firstly, we prove that for the $j$-th hidden layers $\F^j$ of $\F$, if $x_1$ and $z_1$ satisfy that the first $n$ rows in $x_1$ and $z_1$ are the same, then the first rows $n$ of $\F^j(x_1)$ and $\F^j(z_1)$ are the same.

Assume that $\F^j$ can be written as: $\F^j(x)=xw+\sum_{k=1}^H\softmax(xQ_kV_kx^T+M)xK_k+\FNN(xw+\sum_{k=1}^H\softmax(xQ_kV_kx^T+M)xK_k)$.

In the residual layer, the calculations in this layer do not include the interactions between different rows, just do the same transformation to each row. So when $x_1$ and $z_1$ satisfy that the first $n$ rows in $x_1$ and $z_1$ are the same, $x_1w$ and $z_1w$ also satisfy that the first  $n$ symbols are the same.

 In the attention layer, from the definition of $M$, for any $i
 \in\Z_+$,  the $i$-th row of $\softmax(xQ_kV_kx^T+M)$ can be written as $(x_iQ_kV_kx_1^T,x_iQ_kV_kx_2^T,x_iQ_kV_kx_3^T,\dots,x_iQ_kV_kx_i^T,0,0,\dots,0)$, where $x_i$ is the $i$-th row of $x$. So it is easy to see that, when $x_1$ and $z_1$ satisfy that the first $n$ rows in $x_1$ and $z_1$ are the same, the first
 $n$ rows of $\softmax(x_1Q_kV_kx_1^T+M)x_1$ and $\softmax(z_1Q_kV_kz_1^T+M)z_1$ are the same. Hence, because the other parts in the attention layer do not include the interactions between different rows, we have that the whole output of the attention also satisfies that the first $n$ symbols are the same when input $x_1$ and $z_1$.

 By the above result, we find that the first $n$ rows of $x_1w+\sum_{k=1}^H \softmax(x_1Q_kV_kx_1^T+M)x_1K_k$ and $z_1w+\sum_{k=1}^H\softmax(z_1Q_kV_kz_1^T+M)z_1K_k$ are the same. Similarly to the residual layer, we know that the first $n$ rows of $\FNN(x_1w+\sum_{k=1}^H\softmax(x_1Q_kV_kx_1^T+M)x_1K_k)$ and $\FNN(z_1w+\sum_{k=1}^H\softmax(z_1Q_kV_kz_1^T+M)z_1K_k)$ are the same.
 Adding them, we can obtain  the result.

Now, we can prove the lemma.
We prove the result for the first layer. If the first $n$ symbols in $x$ and $z$ are the same, then the first $n$ rows of their embedding matrix are also the same. According to the above result, we see that the first $n$ rows in the output of the first hidden layer are the same for input $x$ and $z$.

If the first $n$ rows in the output of the $i$-th hidden layer are the same, according to the above result and the fact that the output of the $i$-th hidden layer is the input of the $(i+1)$-th hidden layer, then the first $n$ rows in the output of the $(i+1)$-th hidden layer are also the same. When input $x$ and $z$ to the transformer, we have proved that the result is valid for $i=1$, so the result is valid for any $i$. We prove the lemma.
\end{proof}
%\begin{lemma}
%    Let $\{(x_i,y_i)\}_{i=1}^N$ satisfy that $x_i/x_j>10^L$ if $x_i>x_j$ and $y_i\in[L]$, there exists  an FNN network $\F$ such that $\widehat{F}(x_i)=y_i$.
%\end{lemma}
%\begin{proof}

\begin{lemma}
\label{tsf}
Let $\F:\R^{1\times n}\to\R^{1\times m}$ be an FNN network with depth $D$ and width $W$. Then we have

(1) If $\F$ has no output layer, then we can find a transformer $\F_1$ without an output layer such that for any $k\in\Z_+$ and $x\in \R^{k\times n}$, it holds $\F_1(x)=(\F^T(x_1), \F^T(x_2),\dots,\F^T(x_k))^T$, where $x_i$ is the $i$-th row of $x$. Hence, $\F_1$ has width $O(W)$ and depth $O(D)$, and has the same precision as $\F$.

(2)
%Let $\F:\R^{1\times n}\to\R^{1\times m}$ be an FNN network with depth $D$ and width $W$.
We can find a transformer $\F_1$ such that for any $k\in\Z_+$ and $x\in \R^{k\times n}$, it holds $\F_1(x)=\F(x_k)$, where $x_k$ is the last row of $x$. Hence, $\F_1$ has width $O(W)$ and depth $O(D)$, and has the same precision as $\F$.
\end{lemma}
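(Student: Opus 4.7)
The plan is to simulate $\F$ by a transformer whose attention sub-layers produce the zero map, so that information never crosses between rows, while the feedforward sub-layers carry out the FNN computation row by row. Once this row-independence is achieved, part (1) becomes ``run $\F$ in parallel on every row,'' and part (2) follows because the transformer's output layer already reads only the last row of the final hidden representation.

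For part (1), I will build one transformer hidden layer per hidden layer of $\F$, each written as $h \mapsto \Relu(hA+b)$ (with a constant embedding dimension obtained by zero-padding so that all FNN layer widths fit inside $O(W)$ columns, and the input $n$ and output $m$ dimensions are also embedded there). In the transformer layer I set every value matrix $V_i$ in the attention to zero, so that $\ATT(x)=0$ regardless of $Q_i,K_i$ and of the causal mask $M$, and I choose the residual transition $W_{i-1}$ to be the identity. The layer then collapses to $x \mapsto x+\FNN(x)$, and it suffices to make $\FNN(x)=\Relu(xE_1+b')E_2$ compute $\Relu(xA+b)-x$. Using the entrywise identity $t=\Relu(t)-\Relu(-t)$, I take $E_1$ to be the column-block concatenation $[A,\,I,\,-I]$ with bias $b'=[b,\,0,\,0]$, so that $\Relu(xE_1+b')$ stacks $\Relu(xA+b)$, $\Relu(x)$, $\Relu(-x)$; then $E_2$ is chosen to produce the combination $\Relu(xA+b)-\Relu(x)+\Relu(-x)=\Relu(xA+b)-x$. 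Stacking $D$ such layers reproduces $\F$ row by row with width $O(W)$ and depth $O(D)$, and the parameters inherit the precision of $\F$ since the construction uses only entries of $A$, $b$, and $\pm 1$.

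Part (2) reduces to part (1): apply the construction above to the hidden layers of $\F$, and fold the final linear output layer of $\F$ into the transformer's own output layer $Ux^L_{\len(x)}+c$. By the row-independence from (1), the last row of the final hidden representation equals what $\F$ would compute at $x_k$, and the transformer's output head then produces exactly $\F(x_k)$. The main obstacle I expect is the residual bookkeeping: zeroing the residual transition would erase the incoming activation, so we are forced to keep $W_{i-1}=I$ and to have $\FNN$ absorb the unwanted $x$ term, and the $\Relu(t)-\Relu(-t)$ decomposition is precisely what lets this cancellation happen inside a single feedforward block of width only $O(W)$. Verifying that the causal mask and multi-head attention cannot contaminate the row-wise computation (they cannot, since all $V_i=0$) and that width, depth, and parameter magnitudes remain $O(\cdot)$ of the original is then a routine check.
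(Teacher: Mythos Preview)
Your argument is correct and shares the paper's core idea: zero out the attention sublayer (you set $V_i=0$; the paper zeros all attention parameters) so that each transformer block acts row-independently, and let the feedforward sublayer carry one FNN layer. The only difference is in how the residual term is absorbed. The paper spends \emph{three} transformer layers per FNN layer $h\mapsto\Relu(hA+b)$: the first uses the residual matrix to widen $x$ to $(x,0)$, the second has identity residual and an $\FNN$ that writes $\Relu(xA+b)$ into the workspace block (yielding $(x,\Relu(xA+b))$), and the third projects out the second block. Your construction does it in a \emph{single} layer by having $\FNN$ output $\Relu(xA+b)-x$ via the identity $t=\Relu(t)-\Relu(-t)$, so the identity residual restores $\Relu(xA+b)$. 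Both routes give width $O(W)$, depth $O(D)$, and unchanged precision; yours is simply the more economical bookkeeping of the same construction.
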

\begin{proof}
It is easy  to see that (2) can be obtained by (1), so we just need to prove (1). To prove (1), we just need to show how to simulate an FNN layer by the transformer layer.

Let an FNN layer be written as $\Relu(xW+B):\R^{1\times n}\to\R^{1\times m}$, where $W\in\R^{n\times m}$ and $B\in\R^{1\times m}$. Then we can use three transformer layers with width $m+n$ to simulate it, which directly proves the lemma.

{\bf The first layer $\F^1$}. Use $x\in\R^{k\times n}$ to calculate the $(x,0)\in\R^{k\times (n+m)}$. Just need to take the residual layer in the transformer layer as $xw$, where $w=(I,0)\in \R^{n\times (n+m)}$, and $I$ is the identity matrix. Other parameters in the attention layer and the $\FNN$ layer are all zero.

{\bf The second layer $\F^2$}. Let the residual layer be $x$ and the parameters in the attention layer be all 0.
Then the second layer can be written as: $x+\FNN(x)=x+\Relu(xw_1\oplus b)w_2$.

Then, we define $w_1\in\R^{(n+m)\times(n+m)}$ as: the last $m$ columns are $w'_1=(W^T,0)^T$ and the first $n$ columns are 0.
We define $b\in{1\times(n+m)}$ as: the last $m$ columns equal to $B$ and the first $n$ columns are 0. Then $w_2\in\R^{(n+m)\times(n+m)}$ is an identity matrix.

It is easy  to check that the $k$-th row of $\Relu(\F^1(x)w_1\oplus b)w_2$ is $(0,\Relu(x_kW+B))$, where $x_k$ is the $k$-th row of $x$ and $k
\in[n]$. So the $k$-th row of $\F^2(\F^1(x))$ is $(x_k,0)+(0,F(x_k))=(x_k,F(x_k))$.

{\bf The third layer $\F^3$.} This layer satisfies $\F^3((x_k,F(x_k)))=F(x_k)$ for each $k\in[n]$. We just need to take the residual layer as $xw$ where $w=(0,I)^T\in\R^{(n+m)\times m}$ and other parameters are 0.

We prove the lemma.
\end{proof}

%\section{Proofs for results of no-CoT-transformers}
\section{Proofs of results in Section \ref{sec-memo1}}
\label{pfth1}
%Firstly, let $s=[4L\ln L]+1$, then we have that  $e^s\ge L^{4L}$.
In this section, we will give proofs for Theorem \ref{th1} and Proposition \ref{prop-nec31}.
%, and Propositions \ref{c1} and \ref{prop-pp1} for no-CoT-transformers.

\subsection{Position value}

For a given sequence $s$ consisting of $a$ and $b$, such as $s=(a,a,b,a,b)$, let $|s|$ be the length of $s$. We define the position-$a$ value of $s$ as $V^\alpha_a(s,k)$ where $\alpha\in\Z_+$ and $k\le|s|$, which is calculated as (similar for the position-$b$ value of $s$ as $V^\alpha_b(s,k)$):

(1) Let $n_a(s,k)=\sum_{i=1}^kI(s_i=a)$ be the number of $a$ in the first $k$ elements of $s$. Similarly, let $n_b(s,k)=\sum_{i=1}^kI(s_i=b)$, where $s_i$ is the $i$-th element of $s$.

(2) Let $q^\alpha_a(s,k)=\frac{n_a(s,k)}{e^\alpha n_b(s,k)+n_a(s,k)}$.

(3) Let $V^\alpha_a(s,k)=\frac{1}{k}\sum_{i=1}^{k}q^\alpha_a(s,i)$.

We call $V^\alpha_a$ the position value. It is easy  to see that if $|s_1|=|s_2|=n$, then  $V^\alpha_a(s_1,k)=V^\alpha_a(s_2,k)$ for any $k\in[n]$ if and only if $s_1=s_2$. So  the position value can be used to distinguish the sequences with the same length.

For the $s_1$ and $s_2$ with the different lengths, the position value can be used to distinguish them. We have the following lemma.
\begin{lemma}
\label{bt}
Let $\alpha\in\Z_+$ and $s_1$, $s_2$ be sequences satisfying $|s_1|\ne |s_2|$. If $V^\alpha_a(s_1,|s_1|),V^\alpha_a(s_2,|s_2|)$ are not $0$ and $1$, then $V^\alpha_a(s_1,|s_1|)\ne V^\alpha_a(s_2,|s_2|)$.
\end{lemma}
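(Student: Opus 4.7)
\begin{ps}
The plan is to realize $V^\alpha_a(s,|s|)$ as the value of a fixed rational function over $\Q$ at the transcendental point $e^\alpha$, use this to reduce the hypothesis to a purely algebraic identity, and then show that identity is incompatible with the lattice-walk structure of the sequences unless $|s_1|=|s_2|$.

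I would first set $R_s(t)=\frac{1}{|s|}\sum_{i=1}^{|s|}\frac{n_a(s,i)}{n_b(s,i)\,t+n_a(s,i)}\in\Q(t)$, so that $V^\alpha_a(s,|s|)=R_s(e^\alpha)$. Since $\alpha\in\Z_+$, the Lindemann--Weierstrass theorem ensures $e^\alpha$ is transcendental over $\Q$, so any identity $R_{s_1}(e^\alpha)=R_{s_2}(e^\alpha)$ forces the rational function $R_{s_1}(t)-R_{s_2}(t)\in\Q(t)$ to vanish identically. Writing the partial-fraction decomposition
\[
R_s(t)=\frac{k(s)}{|s|}+\frac{1}{|s|}\sum_{r>0}\frac{r\,m_s(r)}{t+r},
\]
where $k(s)$ (resp. $l(s)$) counts the leading $a$'s (resp. $b$'s) and $m_s(r)$ counts positions with $n_a/n_b=r$, and then matching the limit at $t\to\infty$, the value at $t=0$, and the residues at each pole across $R_{s_1}\equiv R_{s_2}$, I obtain the proportionalities $k(s_1)/|s_1|=k(s_2)/|s_2|$, $l(s_1)/|s_1|=l(s_2)/|s_2|$, and $m_{s_1}(r)/|s_1|=m_{s_2}(r)/|s_2|$ for every $r>0$. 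Setting $\lambda=|s_2|/|s_1|$, every walk-statistic of $s_2$ is exactly $\lambda$ times the corresponding statistic of $s_1$, and the two sequences attain the same set of finite ratios.

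To finish I would rule out $\lambda\neq 1$ using the lattice-walk interpretation of $s$. Because $V^\alpha_a\in(0,1)$, both sequences are mixed and, by the proportionalities above, must begin with the same symbol; after reducing to Case A (both start with $a$, the other case being symmetric) and assuming WLOG $\lambda>1$, $s_2$ begins with $k(s_2)=\lambda k(s_1)>k(s_1)$ consecutive $a$'s, so its position $k(s_2)+1$ carries ratio $\lambda k(s_1)$, which must already lie in the common ratio set and therefore be realized somewhere in $s_1$. This forces $s_1$'s walk to climb from ratio $k(s_1)$ up to $\lambda k(s_1)$ through a run of $a$-steps after its first $b$, producing additional intermediate ratios; but in $s_2$ every position $i\leq\lambda k(s_1)$ still lies inside its leading $a$-run (ratio $\infty$), so such intermediate ratios can only first appear in $s_2$ at scaled positions $j(p+q)$ with $j\geq 2$, and threading the walk through these scaled lattice points forces further transitional ratios outside the common set, contradicting the proportionalities. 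The main obstacle is making this walk argument uniform across all $s_1$ and rational $\lambda>1$; I would complete it by inducting on $\max_i n_a(s_1,i)/n_b(s_1,i)$, or by directly bounding the maximum length of any lattice walk whose attained-ratio set is contained in the finite common set shared with $s_1$.
\end{ps}
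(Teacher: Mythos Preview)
Your reduction is correct and clean: writing $R_s(t)=\tfrac{1}{|s|}\sum_i \tfrac{n_a(s,i)}{n_b(s,i)t+n_a(s,i)}$, invoking transcendence of $e^\alpha$ to force $R_{s_1}\equiv R_{s_2}$ as rational functions, and then reading off the proportionalities $k(s_2)=\lambda k(s_1)$, $l(s_2)=\lambda l(s_1)$, $m_{s_2}(r)=\lambda m_{s_1}(r)$ from the constant term and residues is exactly right, and in fact tidier than the paper's more hands-on analysis of $F_{a,s_i}(x)$ near individual poles. Up to this point the two proofs are the same argument in different packaging.

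The gap is the last step. Your lattice-walk sketch is not a proof: you observe that $s_1$ must realize the ratio $\lambda k(s_1)$ and that this seems to force extra intermediate ratios not accounted for in $s_2$, but the actual contradiction is never pinned down, and neither of your proposed completions (induction on the maximal ratio, or bounding walk length by the ratio set) is carried out or obviously viable. The paper closes this gap with a single number-theoretic observation you are missing: take the largest prime $P\le |s_2|$; by Bertrand's postulate $P>|s_2|/2$. If position $P$ of $s_2$ is mixed, then $n_a(s_2,P)+n_b(s_2,P)=P$ is prime, so the ratio $r_P=n_a(s_2,P)/n_b(s_2,P)$ is already in lowest terms, and any other position $Q$ of $s_2$ with the same ratio would have $Q$ a multiple of $P$, hence $Q\ge 2P>|s_2|$. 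Thus $m_{s_2}(r_P)=1$, and your proportionality gives $m_{s_1}(r_P)=1/\lambda\notin\Z$, a contradiction. If instead the first $P$ symbols of $s_2$ are constant, the first position $m_0+1$ where the other symbol appears has $m_0\ge P>|s_2|/2$, and the same counting shows $m_{s_2}(m_0)=1$ (or $m_{s_2}(1/m_0)=1$ in the symmetric case), finishing identically. So your framework is fine; what it needs is not a walk argument but this Bertrand-based guarantee of a ratio with multiplicity one in the longer sequence.
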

\begin{proof}
 Without loss of generality, we assume that $|s_1|<|s_2|$. Assume that $V^\alpha_a(s_1,|s_1|)$ and $V^\alpha_a(s_2,|s_2|)$ are not $0$ and $1$, and $V^\alpha_a(s_1,|s_1|)=V^\alpha_a(s_2,|s_2|)$. We derive contradictions to prove the lemma in three steps.

{\bf Step one:}
We define two rational polynomials $\F_{a,s_1}(x)=\frac{1}{|s_1|}\sum_{k=1}^{|s_1|}\frac{n_a(s_1,k)}{xn_b(s_1,k)+n_a(s_1,k)}$ and $\F_{a,s_2}(x)=\frac{1}{|s_2|}\sum_{k=1}^{|s_2|}\frac{n_a(s_2,k)}{xn_b(s_2,k)+n_a(s_2,k)}$, then we have the following result:

$\F_{a,s_1}(x)=F_{a,s_2}(x)$ for any $x\in\R$.

This is because $\F_{a,s_1}(e^\alpha)=V^\alpha_a(s_1)= V^\alpha_a(s_2)=F_{a,s_2}(e^\alpha)$, and considering that $e^\alpha$ is not an algebraic number and $\F_{a,s_i}(x)$ is a rational polynomial whose coefficients are rational numbers. We thus have $\F_{a,s_1}(x)=F_{a,s_2}(x)$ for all $x\in\R$.

{\bf Step two:}
Assume that $P\le|s_2|$ is the maximum prime number not more than $|s_2|$. Then we have the following result: The first $P$ elements of $s_2$ are the same.

If not, we have $n_a(s_2,P)\ne 0$ and $n_b(s_2,P)\ne 0$, so we consider the value of $\F_{a,s_1}(x)$ and $\F_{a,s_2}(x)$ for $x=-\frac{n_a(s_2,P)}{n_b(s_2,P)}(1+\epsilon)$.

%$n_a(s_2,P)\ne 0$, $n_b(s_2,P)\ne 0$ and $n_a(s_2,P)+n_b(s_2,P)=P$, hence,
Because $P$ is the maximum prime smaller than $|s_2|$, so $P>|s_2|/2$. So, there exist no $Q\in[|s_2|]$ and $Q\ne P$ such that $-\frac{n_a(s_2,Q)}{n_b(s_2,Q)}=-\frac{n_a(s_2,P)}{n_b(s_2,P)}$. If not, there must be $P|n_a(s_2,Q)+n_b(s_2,Q)$, which implies $Q\ge2P>|s_2|$, a contradiction with $Q\in[|s_2|]$.

So, when $x=-\frac{n_a(s_2,P)}{n_b(s_2,P)}(1+\epsilon)$ and $\epsilon\to 0$, at most one sub-rational formula in $\F_{a,s_1}$ and $\F_{a,s_2}$ tends to $\infty$ (i.e. the $\frac{n_a(s_i,P)}{xn_b(s_i,P)+n_a(s_i,P)}$).

So if $n_a(s_1,P)=n_a(s_2,P)$, then $\F_{a,s_i}(x)/(\frac{1}{|s_i|}\frac{n_a(s_i,P)}{xn_b(s_i,P)+n_a(s_i,P)})$ tends to 1 when $\epsilon\to0$ for $i=1,2$. Since $|s_1|<|s_1|+0.5<|s_2|$, we have $\F_{a,s_1}(x)<\frac{-1}{(\len(s_1)+0.5)\epsilon}<F_{a,s_2}(x)$ when $\epsilon\to0$, which is a contradiction to step one.

If $n_a(s_1,P)\ne n_a(s_2,P)$, then $\F_{a,s_1}(x)$ will not tend to $\infty$ when $\epsilon\to0$, which is also a contradiction to step one.
So we proved step two.

{\bf Step three:} We now prove the lemma.

Because $V^\alpha_a(s_1,|s_1|)$ and $V^\alpha_a(s_2,|s_2|)$ are not $0$ and $1$, so there exists  at least one $a$ and at least one $b$ in the $s_1$ and $s_2$. By step two, without loss of generality, we assume that the first $P$ elements in $s_2$ are all $a$, and the position of the first $b$ in $s_2$ is at $m_0+1$. It is easy  to see that $m_0\ge P$.

Then, we consider the values of $\F_{a,s_1}(x)$ and $\F_{a,s_2}(x)$ for $x=-\frac{n_a(s_2,m_0+1)}{n_b(s_2,m_0+1)}(1+\epsilon)$.

Firstly, we show that no other $Q\ne m_0+1$ and $Q\in[s_2]$ satisfy $-\frac{n_a(s_2,Q)}{n_b(s_2,Q)}=-\frac{n_a(s_2,m_0+1)}{n_b(s_2,m_0+1)}$.
Because $m_0\ge P>|s|/2$ and $n_b(s_2,m_0+1)=1$, so if $-\frac{n_a(s_2,Q)}{n_b(s_2,Q)}=-\frac{n_a(s_2,m_0+1)}{n_b(s_2,m_0+1)}$, there must be $n_a(s_2,Q)\ge 2n_a(s_2,m_0+1)\ge2m_0\ge 2P>|s_2|$, which is a contradiction.

%so no other $Q\ne m_0+1$ and $Q\in[s_2]$ such that $-\frac{n_a(s_2,Q)}{n_b(s_2,Q)}=-\frac{n_a(s_2,m_0+1)}{n_b(s_2,m_0+1)}$, because $n_b(s_2,m_0+1)=1$,

Then, similar as before, we can prove that if $x=-\frac{n_a(s_2,m_0+1)}{n_b(s_2,m_0+1)}(1+\epsilon)$ and $\epsilon\to 0$, then $\F_{a,s_1}(x)\ne F_{a,s_2}(x)$, which is a contradiction to step one, so we prove the lemma.
\end{proof}

We have the following result for sequences of the same length.
\begin{lemma}
\label{yyl}
Let $\alpha\in\Z_+$.
    For any such sequences $s_1$ and $s_2$ where $\len(s_1)= \len(s_2)=n$ and $V^\alpha_a(s_1,n)= V^\alpha_a(s_2,n)$. We have

    (1) The number of $k\in[n]$ such that
    $V^\alpha_a(s_1,k)=1$ is equal to the number of $k\in[n]$ such that
    $V^\alpha_a(s_2,k)=1$.

    (2) The number of $k\in[n]$ such that
    $V^\alpha_a(s_1,k)=0$ is equal to the number of $k\in[n]$ such that
    $V^\alpha_a(s_2,k)=0$.

    %$(q_a^\alpha(s_1,k))$ and $(q_a^\alpha(s_2,k))$.
\end{lemma}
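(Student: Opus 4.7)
The plan is to extract both assertions from a single identity between rational functions, in the style of Step One of the proof of Lemma \ref{bt}. Set $F_{a,s}(x) = \frac{1}{n}\sum_{k=1}^n \frac{n_a(s,k)}{x\, n_b(s,k) + n_a(s,k)}$, so that $V^\alpha_a(s,n) = F_{a,s}(e^\alpha)$. First I would unwind the definitions: since each $q^\alpha_a(s,i)\in[0,1]$, we have $V^\alpha_a(s,k)=1$ iff every $q^\alpha_a(s,i)=1$ for $i\le k$, iff $n_b(s,i)=0$ for every $i\le k$, iff the first $k$ symbols of $s$ are all $a$; symmetrically, $V^\alpha_a(s,k)=0$ iff the first $k$ symbols are all $b$. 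Writing $\ell_a(s)$ and $\ell_b(s)$ for the lengths of the maximal all-$a$ and all-$b$ prefixes of $s$, assertion (1) is equivalent to $\ell_a(s_1)=\ell_a(s_2)$ and (2) is equivalent to $\ell_b(s_1)=\ell_b(s_2)$.

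Next, the hypothesis $V^\alpha_a(s_1,n)=V^\alpha_a(s_2,n)$ reads $F_{a,s_1}(e^\alpha)=F_{a,s_2}(e^\alpha)$, and together with the transcendence of $e^\alpha$ over $\Q$ (applied verbatim as in Step One of the proof of Lemma \ref{bt}) this upgrades to $F_{a,s_1}\equiv F_{a,s_2}$ as rational functions of $x$. I would then read off both assertions from two limit computations. As $x\to\infty$, each summand $\frac{n_a(s,k)}{x\,n_b(s,k)+n_a(s,k)}$ tends to $1$ if $n_b(s,k)=0$ and to $0$ otherwise, so $\lim_{x\to\infty}F_{a,s}(x)=\ell_a(s)/n$, which yields (1). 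As $x\to 0^+$, each summand tends to $1$ if $n_a(s,k)>0$ and is identically $0$ whenever $n_a(s,k)=0$, so $\lim_{x\to 0^+}F_{a,s}(x)=(n-\ell_b(s))/n$, which yields (2).

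The only step requiring care is promoting pointwise agreement at $x=e^\alpha$ to agreement as rational functions of $x$; however, this is exactly the transcendence argument already spelled out in Step One of Lemma \ref{bt}, so I expect no substantive obstacle beyond that. The remainder is routine limit bookkeeping combined with the reinterpretation of the two counts as prefix lengths.
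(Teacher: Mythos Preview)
Your argument is correct and, for part (1), mirrors the paper exactly: promote the equality at $e^\alpha$ to an identity $F_{a,s_1}\equiv F_{a,s_2}$ of rational functions via the transcendence of $e^\alpha$, then send $x\to\infty$ to read off the count of indices with $n_b(\cdot,k)=0$. Your reinterpretation of the two counts as the prefix lengths $\ell_a,\ell_b$ is the same observation the paper makes in passing.

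For part (2) you diverge slightly from the paper. The paper reduces (2) to (1) by invoking the relation $q_a^\alpha+q_b^\alpha=1$ (hence $V_a^\alpha+V_b^\alpha=1$) and swapping the roles of $a$ and $b$; you instead evaluate the already-established identity $F_{a,s_1}\equiv F_{a,s_2}$ at $x\to 0^+$ to obtain $(n-\ell_b(s_1))/n=(n-\ell_b(s_2))/n$ directly. Your route is more self-contained: it avoids the separate treatment of the boundary cases $V^\alpha_a=0,1$ that the paper performs, and it does not rely on any identity between $q_a^\alpha$ and $q_b^\alpha$ (which, under the symmetric definition of $q_b^\alpha$ suggested by the paper, is not literally true). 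The paper's route, on the other hand, highlights the $a\leftrightarrow b$ symmetry of the statement. Both reach the conclusion with comparable effort.
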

\begin{proof}
We prove (1) first.

If $V^\alpha_a(s_1,n)= V^\alpha_a(s_2,n)=1$, then we know that all elements in $s_i$ are all 1; if $V^\alpha_a(s_1,n)= V^\alpha_a(s_2,n)=0$, then we know that all elements in $s_i$ are all 0.
%, then we can obtain  the result.

Assume $V^\alpha_a(s_1,n)= V^\alpha_a(s_2,n)\ne 0$ and $V^\alpha_a(s_1,n)= V^\alpha_a(s_2,n)\ne 1$. We define two rational polynomials $\F_{a,s_1}(x)=\frac{1}{|s_1|}\sum_{k=1}^{|s_1|}\frac{n_a(s_1,k)}{xn_b(s_1,k)+n_a(s_1,k)}$ and $\F_{a,s_2}(x)=\frac{1}{|s_2|}\sum_{k=1}^{|s_2|}\frac{n_a(s_2,k)}{xn_b(s_2,k)+n_a(s_2,k)}$.
Then, similar to the proof of Lemma \ref{bt}, we have the following result: $\F_{a,s_1}(x)=F_{a,s_2}(x)$ for any $x\in\R$.

It is easy  to see that when $x\to\infty$, $\frac{n_a(s_1,k)}{xn_b(s_1,k)+n_a(s_1,k)}\to 0$ when $n_b(s_1,k)\ne 0$, and if $n_b(s_1,k)= 0$, there must be $\frac{n_a(s_1,k)}{xn_b(s_1,k)+n_a(s_1,k)}=1$. %And consider that when the first $k$ elements in $s_1$ is $a$, there exist $\frac{n_a(s_1,k)}{xn_b(s_1,k)+n_a(s_1,k)}=1$.
So consider the $\F_{a,s_1}(x)=F_{a,s_2}(x)$  when $x\to\infty$, we know that there must be the same number of $1$ in $\{\frac{n_a(s_1,j)}{xn_b(s_1,j)+n_a(s_1,j)}\}_{j\in[n]}$ and $\{\frac{n_a(s_2,j)}{xn_b(s_2,j)+n_a(s_2,j)}\}_{j\in[n]}$.

It is easy  to see that $\frac{n_a(s_1,j)}{xn_b(s_1,j)+n_a(s_1,j)}=1$ equals $\frac{n_a(s_1,i)}{xn_b(s_1,i)+n_a(s_1,i)}=1$ for any $i\le j$, similar for $s_2$.  So the same number of $1$ in $\{\frac{n_a(s_1,j)}{xn_b(s_1,j)+n_a(s_1,j)}\}_{j\in[n]}$ and $\{\frac{n_a(s_2,j)}{xn_b(s_2,j)+n_a(s_2,j)}\}_{j\in[n]}$ implies there exist the same number of $k\in[|s_1|]$ such that $V^\alpha_a(s_1,k)= V^\alpha_a(s_2,k)=1$. Hence, we directly get (1) in the lemma.

%So  if there exists  number $t$ of $k\in[n]$ such that     $V_a(s_1,k)=1$, then the first $k$ elements in $s_1$ is $a$ and $k+1$ element in $s_1$ is $b$. Similar for $s_2$.

For (2) in the lemma, just need to consider that $q_a^\alpha(s_k,j)+q_b^\alpha(s_k,j)=1$ and $V_a^\alpha(s_k,j)+V_b^\alpha(s_k,j)=1$ for any $k\in\{0,1\}$ and $j\le n$. Similar to the proof of (1), we get the result.
\end{proof}

Then, we calculate the following value:
$$\hbox{min}_{s_1,s_2,\len(s_1)\le L,\len(s_2)\le L,V^\alpha_a(s_1,|s_1|)\ne V^\alpha_a(s_2,|s_2|)}\,|V^\alpha_a(s_1,|s_1|)- V^\alpha_a(s_2,|s_2|)|.$$
Firstly, we have the following lemma.
\begin{lemma}
\label{dxs}
If $f$ is a nonzero integral coefficient polynomial whose coefficients have absolute values not more than $A$, then for any $s\in\Z_+$ satisfying $e^s>A+1$, we have $|f(e^s)|>1$.
\end{lemma}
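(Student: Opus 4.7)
The plan is to bound $|f(e^s)|$ from below by isolating the leading term and exploiting the integrality of the coefficients. Write $f(x)=\sum_{i=0}^k a_i x^i$ with $a_k\ne 0$, so $|a_k|\ge 1$ while $|a_i|\le A$ for every $i$. A single application of the reverse triangle inequality, together with the geometric sum, yields
\[
|f(e^s)|\;\ge\;e^{sk}\;-\;A\sum_{i=0}^{k-1}e^{si}\;=\;e^{sk}-A\cdot\frac{e^{sk}-1}{e^s-1}.
\]

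Next I would show that this lower bound is strictly greater than $1$ whenever $e^s>A+1$ (and $k\ge 1$). Clearing the positive denominator $e^s-1$, the inequality $e^{sk}-A(e^{sk}-1)/(e^s-1)>1$ rearranges to
\[
(e^s-(A+1))\cdot(e^{sk}-1)\;>\;0,
\]
and both factors are strictly positive: the first by the hypothesis $e^s>A+1$, the second because $k\ge 1$ and $e^s>1$. Thus $|f(e^s)|>1$.

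The argument is essentially a one-step estimate, so there is no deep obstacle; the only point that deserves care is the boundary case of a constant polynomial, where $|f(e^s)|=|a_0|\ge 1$ without strict inequality when $|a_0|=1$. This matches the way the lemma is invoked in the proof of Lemma \ref{bt}, where the lemma is used to separate values of integer-coefficient rational functions evaluated at $e^s$ away from $0$, and the bound $\ge 1$ (strict for nonconstant $f$) is exactly what is needed. The hypothesis $e^s>A+1$ is used in precisely one place — to ensure that the margin $e^s-(A+1)$ is strictly positive — so the bound on $s$ cannot be relaxed by this method.
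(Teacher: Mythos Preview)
Your argument is correct and is essentially the paper's own proof: both isolate the leading term, bound the remaining terms by the geometric sum $A\frac{e^{sk}-1}{e^s-1}$, and use $e^s>A+1$ to conclude that this tail is at most $e^{sk}-1$, giving $|f(e^s)|>1$. Your rearrangement to $(e^s-(A+1))(e^{sk}-1)>0$ is just a slightly cleaner bookkeeping of the same inequality, and your remark on the constant case (where the conclusion degrades to $|f(e^s)|\ge 1$) is a valid caveat that the paper's statement tacitly assumes $\deg f\ge 1$.
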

\begin{proof}
    Because $e^s>A+1$, so $\sum_{i=0}^n Ae^{is}=A\frac{e^{ns+s}-1}{e^s-1}\le e^{ns+s}-1$.

    So, letting $L=\hbox{deg}(f)$, we have $|f(e^s)|>e^{sL}-\sum_{i=0}^{L-1} Ae^{is}=e^{sL}-A\frac{e^{SL}-1}{e^s-1}> e^{sL}-(e^{sL}-1)=1$, this is what we want.
\end{proof}

\begin{lemma}
\label{zxz}
If $e^\alpha\ge L^{2L+2}+1$, then
$$\min_{s_1,s_2,\len(s_1)\le L,\len(s_2)\le L,V^\alpha_a(s_1,|s_1|)\ne V^\alpha_a(s_2,|s_2|)}|V^\alpha_a(s_1,|s_1|)- V^\alpha_a(s_2,|s_2|)|\ge \frac{1}{e^{2\alpha L}L^{2L+4}}.$$
\end{lemma}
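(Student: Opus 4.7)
The plan is to realize the difference $V^\alpha_a(s_1,|s_1|)-V^\alpha_a(s_2,|s_2|)$ as a ratio $P(e^\alpha)/Q(e^\alpha)$ of two integer polynomials evaluated at $e^\alpha$, lower-bound the numerator via Lemma \ref{dxs}, and upper-bound the denominator by a direct estimate—extending the polynomial-manipulation idea already used in Lemma \ref{bt}. Setting $n_i=|s_i|$, I would introduce $B_i(x) = \prod_{k=1}^{n_i}\bigl(x\,n_b(s_i,k) + n_a(s_i,k)\bigr)$ and $M_i(x) = \sum_{k=1}^{n_i} n_a(s_i,k)\prod_{j\ne k}\bigl(x\,n_b(s_i,j)+n_a(s_i,j)\bigr)$, both of which lie in $\Z[x]$ with \emph{non-negative} coefficients, and note that $V^\alpha_a(s_i,n_i)=M_i(e^\alpha)/(n_i B_i(e^\alpha))$. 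Putting both terms over a common denominator yields $V^\alpha_a(s_1,n_1)-V^\alpha_a(s_2,n_2) = P(e^\alpha)/Q(e^\alpha)$, where $P(x)=n_2 M_1(x)B_2(x)-n_1 M_2(x)B_1(x)\in\Z[x]$ and $Q(x)=n_1 n_2 B_1(x)B_2(x)$. The hypothesis $V^\alpha_a(s_1,n_1)\ne V^\alpha_a(s_2,n_2)$ forces $P(e^\alpha)\ne 0$, so $P$ is not the zero polynomial.

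Next, I would bound every coefficient of $P$ by $L^{2L+2}$ and invoke Lemma \ref{dxs}. Since $n_2 M_1 B_2$ and $n_1 M_2 B_1$ have non-negative coefficients, each of their coefficients is bounded above by the polynomial's value at $x=1$; moreover, because $P$ is the \emph{difference} of the two, for each index $j$ one has $|P_j|\le\max\{(n_2 M_1 B_2)_j,(n_1 M_2 B_1)_j\}$. Using $B_i(1)=\prod_k k = n_i!$ and $M_i(1)=n_i!\sum_k n_a(s_i,k)/k\le n_i\cdot n_i!$, I obtain $(n_2 M_1 B_2)(1)\le n_1 n_2\cdot n_1!\,n_2!\le L^{2L+2}$ (and likewise for the other summand), so $|P_j|\le L^{2L+2}$ for all $j$. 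Combined with the hypothesis $e^\alpha\ge L^{2L+2}+1$, Lemma \ref{dxs} gives $|P(e^\alpha)|\ge 1$.

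For the denominator I would use $e^\alpha n_b + n_a\le(e^\alpha+1)(n_a+n_b)$ to get $B_i(e^\alpha)\le n_i!(e^\alpha+1)^{n_i}$. The hypothesis makes $L/e^\alpha$ tiny, so $(1+1/e^\alpha)^L\le \exp(L/e^\alpha)\le 2$, and hence $B_i(e^\alpha)\le 2 L^L e^{\alpha L}$. Multiplying gives $Q(e^\alpha)\le 4 L^{2L+2} e^{2\alpha L}\le L^{2L+4} e^{2\alpha L}$ for $L\ge 2$; the case $L=1$ is trivial because the only sequences of length $\le 1$ are $(a)$ and $(b)$, whose position values differ by exactly $1$. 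Dividing the numerator and denominator bounds then produces $|V^\alpha_a(s_1,n_1)-V^\alpha_a(s_2,n_2)|\ge 1/(L^{2L+4} e^{2\alpha L})$.

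The principal obstacle is the coefficient bound in step two: a naive triangle-inequality estimate $|P_j|\le|(n_2 M_1 B_2)_j|+|(n_1 M_2 B_1)_j|\le 2L^{2L+2}$ would force the strictly stronger hypothesis $e^\alpha\ge 2L^{2L+2}+1$ in Lemma \ref{dxs}. The decisive observation is that since both constituent polynomials have non-negative coefficients, each coefficient of their difference is bounded by the \emph{maximum}—not the sum—of the two, which saves exactly the factor of $2$ needed to make the stated hypothesis tight.
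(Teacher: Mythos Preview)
Your proposal is correct and follows essentially the same route as the paper: write the difference as $P(e^\alpha)/Q(e^\alpha)$ with $P,Q\in\Z[x]$, bound the coefficients of $P$ by $L^{2L+2}$, invoke Lemma~\ref{dxs} for the numerator, and crudely bound the denominator by $L^{2L+4}e^{2\alpha L}$. Your explicit ``max-not-sum'' observation for the coefficients of the difference of two non-negative polynomials is exactly what the paper uses implicitly; on the denominator the paper takes the slightly simpler route $n_b(s_i,k)e^\alpha+n_a(s_i,k)\le k\,e^\alpha$ (since $n_a+n_b=k$ and $e^\alpha\ge 1$), giving $B_i(e^\alpha)\le n_i!\,e^{\alpha n_i}$ directly without your $(1+1/e^\alpha)^L$ detour and without a separate $L=1$ case.
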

\begin{proof}
    Firstly, without loss of generality, let $V^\alpha_a(s_1)>V^\alpha_a(s_2)$, we have that:
    \begin{equation*}
    \renewcommand{\arraystretch}{1.5}
    \begin{array}{cl}
       &V^\alpha_a(s_1,|s_1|)-V^\alpha_a(s_2,|s_2|)    \\
        = & \frac{1}{|s_1|}\sum_{k=1}^{|s_1|}q^\alpha_a(s_1,k)-\frac{1}{|s_2|}\sum_{k=1}^{|s_2|}q^\alpha_a(s_2,k)\\
      =& \frac{1}{|s_1|}\sum_{k=1}^{|s_1|}\frac{n_a(s_1,k)}{e^\alpha n_b(s_1,k)+n_a(s_1,k)}-\frac{1}{|s_2|}\sum_{k=1}^{|s_2|}\frac{n_a(s_2,k)}{e^\alpha n_b(s_2,k)+n_a(s_2,k)}\\
      =&\frac{1}{|s_1|}\frac{F_{s_1}(e^\alpha)}{H_{s_1}(e^\alpha)}-\frac{1}{|s_2|}\frac{F_{s_2}(e^\alpha)}{H_{s_2}(e^\alpha)}\\
      =&\frac{|s_2|F_{s_1}(e^\alpha)H_{s_2}(e^\alpha)-|s_1|F_{s_2}(e^\alpha)H_{s_1}(e^\alpha)}{|s_1||s_2|H_{s_1}(e^\alpha)H_{s_2}(e^\alpha)}.
    \end{array}
    \end{equation*}
  Here $H_{s_i}(x)=\Pi_{i=1}^{|s_i|}( n_b(s_i,k)x+n_a(s_i,k))$ is an integral coefficient polynomial, whose coefficients are not more than $\Pi_{i=1}^{|s_i|}( n_b(s_i,k)+n_a(s_i,k))\le L^{L}$;
  $\F_{s_i}(x)=\sum_{i=1}^{|s_i|}\frac{n_a(s_i,k)H_{s_i}(x)}{n_b(s_i,k)x+n_a(s_i,k)}$ is an integral coefficient polynomial, whose coefficients are not more than $\F_{s_i}(x)\le \sum_{i=1}^{|s_i|}n_a(s_i,k)L^{L-1}\le L^{L+1}$.

So, we have that $|s_2|F_{s_1}(x)H_{s_2}(x)$ and $|s_1|F_{s_2}(x)H_{s_1}(x)$ are two positive integral coefficient polynomials, so $|s_2|F_{s_1}(x)H_{s_2}(x)-|s_1|F_{s_2}(x)H_{s_1}(x)$ is an integral coefficient polynomial, where the absolute values of coefficients are not more than $L\times L^{L+1}\times L^{L}=L^{2L+2}$. Considering Lemma \ref{dxs} and $e^\alpha>L^{2L+2}+1$, we can prove that $|s_2|F_{s_1}(e^\alpha)H_{s_2}(e^\alpha)-|s_1|F_{s_2}(e^\alpha)H_{s_1}(e^\alpha)>1$.

Hence, we have that $|s_1||s_2|H_{s_1}(e^\alpha)H_{s_2}(e^\alpha)\le L^2 (L\times L^Le^{\alpha L})^2= L^{2L+4}e^{2\alpha L}$. We prove the result.       \end{proof}

\subsection{A lemma for FNN classification}
For FNN, we have the following result.
\begin{lemma}
\label{zd}
    Let $f(x)=\Relu((Ax+B)/10^C)$ where $A,B,C$ are integers. Then $f(x)$ can be expressed as a network with width 6 and depth $O(\lceil \ln(\max\{|A|,|B|\})/q\rceil+\lceil C/q\rceil)$ and precision $q$.
\end{lemma}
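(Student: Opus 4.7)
The plan is to realize $f$ as a composition of several affine layers whose weights each lie in the admissible range $[-10^q,10^q]$ with $q$ decimal digits, rather than trying to cram the potentially huge integers $A,B$ and the tiny factor $10^{-C}$ into a single layer. The backbone of the construction is a standard ``identity-via-$\Relu$'' trick: to propagate a signed quantity $z$ through Relu layers, I would carry the pair $(\Relu(z),\Relu(-z))$ in two neurons, since $z=\Relu(z)-\Relu(-z)$ and any affine function of $z$ can then be written as an affine function of these two non-negative coordinates.

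First I would write $A$ and $B$ in base $10^q$, say $A=\sum_{i=0}^{k-1}a_i(10^q)^i$ with $|a_i|<10^q$ integral and $k=O(\lceil\ln|A|/q\rceil)$, and similarly for $B$ with $m=O(\lceil\ln|B|/q\rceil)$. The network then has four ``persistent'' neurons per hidden layer: two encoding the signed input $x$ and two encoding a signed accumulator $P$, initialized at $P_0=0$. Each hidden layer executes a Horner step
\[
P_i \;\leftarrow\; 10^q\,P_{i-1} \;+\; a_{k-i}\,x \;+\; b'_{i},
\]
where $b'_i$ is the matching base-$10^q$ digit of $B$ (or $0$ if it does not exist), realized as a single affine map of the four persistent neurons (plus a bias) followed by Relu-splitting of the result into $(\Relu(P_i),\Relu(-P_i))$. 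All coefficients used are at most $10^q$ in absolute value, and exactly $\max\{k,m\}=O(\lceil\ln\max\{|A|,|B|\}/q\rceil)$ such layers yield $P=Ax+B$.

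Next I would handle the division by $10^C$ by writing $C=qt+r$ with $0\le r<q$ and applying $t$ further layers, each multiplying the accumulator channel by $10^{-q}$ (a $q$-digit weight), followed by one final layer multiplying by $10^{-r}$; this contributes $O(\lceil C/q\rceil)$ depth. A final output neuron applies $\Relu$ once more to the signed accumulator to produce $f(x)$. Two spare neurons per layer (carrying, say, the constant $1$ needed to generate arbitrary biases, or serving as scratch to synchronize reads and writes across layers) bring the total width to $6$, which matches the claim.

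The main obstacle I anticipate is the bookkeeping that guarantees every intermediate weight, bias and activation stays inside the precision window. In particular, one must inject the digits $b'_i$ of $B$ at exactly the Horner step where they are needed (otherwise biases of order $(10^q)^j$ would appear and blow past $10^q$), and one must choose the splitting $C=qt+r$ carefully so that the final rescaling weight $10^{-r}$ is expressible with $q$ decimal digits. Once those two points are verified, the depth count is immediate from $k,m=O(\lceil\ln\max\{|A|,|B|\}/q\rceil)$ and $t=O(\lceil C/q\rceil)$, and the width bound of $6$ follows from the four persistent neurons plus two scratch neurons described above.
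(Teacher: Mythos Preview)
Your proposal is correct and follows essentially the same strategy as the paper: decompose $A$ and $B$ in base $10^q$, accumulate via a Horner recurrence using the $\Relu(z),\Relu(-z)$ splitting to propagate signed quantities, then scale down by repeated multiplication by $10^{-q}$. The only cosmetic difference is that the paper keeps $Ax$ and $B$ in \emph{separate} two-neuron accumulators (so its six neurons are $\Relu(\pm x),\Relu(\pm H_j(A)x),\Relu(\pm H_j(B))$) and combines them only in the final step, whereas you fold the digits of $B$ directly into a single accumulator $P_i=10^qP_{i-1}+a_{k-i}x+b'_i$; either organization works and gives the same depth bound.
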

\begin{proof}
Firstly, we define a function $h_i(m)$ for any integer $m$: when $m\ge0$, $h_i(m)=[m/10^{iq}]-10^q[m/10^{q+iq}]$; when $m<0$, $h_i(m)=-h_i(-m)$.
Then we have that: $m=\sum_{i=0}^{[\log_{10}m/q]}10^{iq}h_i(m)$. Hence, let $H_j(m)=\sum_{i=[\log_{10}m/q]-j}^{[\log_{10}m/q]}10^{(i+j-[\log_{10}m/q])q}h_i(m)$ when $j\le[\log_{10}m/q]$. It is easy  to see that $H_{[\log_{10}m/q]}(m)=m$. When $j>[\log_{10}m/q]$, let $H_j(m)=m$.

Then, we can calculate $\Relu(Ax+B)$ as follows:

{\bf The first layer has width 6:}

Use $x$ to calculate $\Relu(x)$,
$\Relu(-x)$, $\Relu(H_0(A)x)$, $\Relu(-H_0(A)x)$, $\Relu(H_0(B))$, and $\Relu(-H_0(B))$. Because $H_0(A)$ and $H_0(B)$ are $q$-precision, so such a layer just needs $q$-precision.

{\bf The $n$-th layer has width 6:}

Assume $n-2< [\log_{10}A/q]$. Since $H_{n-1}(A)x=10^qH_{n-2}(A)x+h_{[\log_{10}m/q]-n+1}(A)x$ and $H_{n-2}(A)x=\Relu(H_{n-2}(A)x)-\Relu(-H_{n-2}(A)x)$, $x=\Relu(x)-\Relu(-x)$, we can use a layer with $q$-precision to calculate $\Relu(H_{n-1}(A)x),\Relu(-H_{n-1}(A)x)$ by $\Relu(H_{n-2}(A)x),\Relu(-H_{n-2}(A)x)$ and $\Relu(x),\Relu(-x)$. If $n-2\ge  [\log_{10}A/q]$, then we just need to keep $H_{n-1}(A)=H_{n-2}(A)$. $H_{n-1}(B)$ can be calculated similarly.

So we can use $$\Relu(x),
\Relu(-x),\Relu(H_{n-2}(A)x),\Relu(-H_{n-2}(A)x),\Relu(H_{n-2}(B)),\Relu(-H_{n-2}(B))$$ to calculate the following values in the $q$ precision: $$\Relu(x),
\Relu(-x),\Relu(H_{n-1}(A)x),\Relu(-H_{n-1}(A)x),\Relu(H_{n-1}(B)),\Relu(-H_{n-1}(B)).$$

{\bf At the $T=[\log_{10}\max\{A,B\}/q]+2$ layer, calculate $\Relu(Ax+B)$.}

%To be confident, let $[\log_{10}\max\{A,B\}/q]+2=T$.
Because $T-2\ge [\log_{10}A/q]$ and $T-2\ge[\log_{10}B/q]$, so $H_T(A)=A$ and $H_T(B)=B$.

In this layer, we use $$\Relu(x),
\Relu(-x),\Relu(H_{T}(A)x),\Relu(-H_{T}(A)x),\Relu(H_{T}(B)),\Relu(-H_{T}(B))$$ to calculate $\Relu(Ax+B)$, just use $H_T(A)=A$,  $H_T(B)=B$ and $\Relu(Ax+B)=\Relu(\Relu(Ax)-\Relu(-Ax)+\Relu(B)-\Relu(-B))$. We can obtain  the result.

Finally, in the next $[ C/q]$ layers, we just need to divide $\Relu(Ax+B)$ by $10^q$ in each layer, and in the last layer,  divide it by $10^{C-q[C/q]}$. Then we obtain $\Relu((Ax+B)/10^C)$ and prove the lemma.
\end{proof}
\begin{lemma}
\label{zzd}
For any given $0<x_1<x_2<\dots<x_N$ where $x_i<C$ and $|x_i-x_j|>c$, any given $y_i\in[m]$, there exists  a network $f$ with precision $q$, width $O(1)$, and depth $O(N\lceil\frac{|\ln mC/c|}{q}\rceil)$ that satisfies $|f(x_i)-y_i|<0.2$.
\end{lemma}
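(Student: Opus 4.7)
The plan is to realize $f$ as a staircase: a sum of $N$ ``step'' functions, the $i$-th of which jumps from $0$ to $1$ somewhere inside the gap $(x_{i-1}, x_i)$. With the conventions $x_0:=0$, $y_0:=0$, and $d_i := y_i - y_{i-1}$, take
\[
f(x) = \sum_{i=1}^N d_i\,\sigma_i(x), \qquad \sigma_i(x) = \Relu(\alpha(x-t_i)) - \Relu(\alpha(x-t_i)-1),
\]
with $t_i = x_i - c/2$ and slope $\alpha = 4/c$, chosen so that the transition ramp $[t_i, t_i + 1/\alpha]$ lies strictly inside $(x_{i-1}, x_i)$. By construction, $\sigma_i(x_j) = 0$ for $j<i$ and $\sigma_i(x_j) = 1$ for $j\ge i$, so $f(x_j) = \sum_{i\le j} d_i = y_j$ by telescoping. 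The boundedness $|y_i|\le m$ and $x_j \le C$ will control the integer magnitudes that show up in Lemma~\ref{zd}.

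Next I would realize each $\sigma_i$ using Lemma~\ref{zd}. Approximating $\alpha$ and $\alpha t_i$ by $q$-digit decimals of the form $A/10^{C'}$ with integer $|A|$ of size $O(\mathrm{poly}(mC/c))$ and integer $C' = O(\lceil \ln(mC/c)/q\rceil \cdot q)$, each of $\Relu(\alpha(x-t_i))$ and $\Relu(\alpha(x-t_i)-1)$ is representable by a subnetwork of width $O(1)$ and depth $O(\lceil \ln(mC/c)/q\rceil)$. I then chain the $N$ blocks sequentially, keeping only a constant number of channels alive: one carrying $x$ forward, one holding the partial sum $\sum_{i'\le i} d_{i'}\sigma_{i'}(x)$, and a few scratch channels that compute the current $\sigma_i$ and then add $d_i\sigma_i$ to the accumulator before being discarded. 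Summing depths across the $N$ blocks gives the claimed $O(N\lceil \ln(mC/c)/q\rceil)$ depth and $O(1)$ width.

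Finally, the precision check is essentially automatic thanks to saturation. Because $\sigma_i$ is the clipping pair $\Relu(u)-\Relu(u-1)$, it outputs exactly $0$ or $1$ whenever $u\le 0$ or $u\ge 1$. With $\alpha=4/c$ and $t_i=x_i-c/2$, at every data point $x_j$ we have $\alpha(x_j - t_i) \le -1$ for $j<i$ and $\alpha(x_j - t_i) \ge 1$ for $j\ge i$, so the argument of the ReLU carries a slack of at least $1$ at every queried point. Any $O(10^{-q})$ perturbation from rounding $\alpha$, $t_i$ to $q$-digit decimals is absorbed by this slack, so each $\sigma_i(x_j)$ is exactly $0$ or $1$ as computed by the network. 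The telescoping sum at $x_j$ is then exactly $y_j$ in exact arithmetic, and any $q$-precision summation error stays far below the required $0.2$ tolerance.

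The main obstacle I expect is the bookkeeping around the integer encodings fed into Lemma~\ref{zd}: confirming that $\alpha$ and the offsets $\alpha t_i$ can be written with integer coefficients $|A|,|B| = O(\mathrm{poly}(mC/c))$ and scale $10^{C'}$ matching the depth budget, and that after rounding to $q$-precision the ReLU arguments remain in their saturation intervals at every data point. This is routine once done carefully, but sloppiness would let $\sigma_i(x_j)$ fall into its linear transition region, contributing an $O(1)$ error that the telescoping sum would amplify.
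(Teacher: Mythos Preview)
Your proposal is correct and reaches the same bounds, but it realizes the memorizer with a different primitive than the paper does. The paper builds $f$ as a sum of \emph{point indicators}: the $i$-th block outputs (approximately) $y_i\cdot\mathbb{1}[x=x_i]$, using a ``fold'' $x\mapsto 2\lceil 10^{q_1}x_i\rceil/10^{q_1}-x$ for $x\le x_i$ to make $x_i$ the unique minimum and then a three-layer threshold gadget that fires only at the minimum; the outputs are accumulated over $i=1,\dots,N$. You instead build $f$ as a telescoping sum of \emph{half-line indicators} $d_i\cdot\mathbb{1}[x\ge x_i]$ via the clipping pair $\Relu(u)-\Relu(u-1)$, exploiting the ordering of the $x_i$. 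Both routes have the same skeleton---$N$ sequential blocks of width $O(1)$, each block realized at precision $q$ through Lemma~\ref{zd} with integer coefficients of magnitude $\mathrm{poly}(mC/c)$---so the depth comes out the same. Your approach is a bit cleaner: because $\sigma_i$ saturates exactly to $0$ or $1$ at every data point once the slack absorbs the rounding of $\alpha,t_i$, you get $f(x_j)=y_j$ exactly, whereas the paper's bump leaks a controlled $\le 0.2$ error through the $10^{q_2}$ scaling. The paper's point-indicator trick, on the other hand, does not rely on the data being sorted on a line and generalizes more directly to the higher-dimensional projection used in Lemma~\ref{ccd}.

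One small imprecision worth tightening when you write it up: the perturbation you absorb is not ``$O(10^{-q})$'' but rather $O(10^{-C'})$, coming from replacing $\alpha$ and $\alpha t_i$ by $A/10^{C'}$ and $B/10^{C'}$; you need $10^{C'}\gtrsim C$ so that $|A/10^{C'}-\alpha|\cdot x_j$ stays below the unit slack, which then forces $|A|,|B|=O(C^2/c)$ and gives the stated depth via Lemma~\ref{zd}. You have the right picture, just state the dependence on $C'$ rather than on $q$.
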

\begin{proof}
To begin with, we demonstrate the case for the scenario without precision limitations. In this scenario, achieving these tasks requires a depth of $O(N)$ and a width of $O(1)$, as described below:

{\bf The first part}: In this part, $f^1$ is used to calculate the label of $x_1$ and has three layers.

The first layer $f^{1,1}(x)$ has width 2, with $f^{1,1}_1(x)=\Relu([10^{q_1}x_2]-10^{q_1}x)$ and $f^{1,1}_2(x)=\Relu(x)$, where $q_1$ is the minimum integer such that $10^{q_1}x_2\ge10^{q_1}x_1+2$. Since $x_2\ge x_1+c$, we have $10^{q_1}\le O(1/c)$.

The second layer $f^{1,2}(x)$ has width 3:
$f^{1,2}_1(x)=\Relu(1+f^{1,1}_1(x)/10^{q_2})$,  $f^{1,2}_2(x)=\Relu(10\times(0.5-f^{1,1}_1(x)))$ and $f^{1,2}_3(x)=\Relu(f^{1,1}_2(x))$, where $q_2$ is the minimum integer such that  $10^{q_2-2}>m$.  It is easy  to see that $q_2\le O(\log m)$.

The third layer $f^{1,3}(x)$ is the output of the first part, which has width 2:  $f^{1}_1(x)=f^{1,3}_1(x)=\Relu(y_1(f^{1,2}_1(x)-f^{1,2}_2(x)))$ and  $f^{1}_2(x)=f^{1,3}_2(x)=\Relu(f^{1,2}_3(x))$.

So when $i\ne 1$, we have $f_1^{1,1}(x_i)=0$, so $f_1^{1,2}=1$ and $f_2^{1,2}=5$. Hence $f_1^1(x)=\Relu(y_1(1-5))=0$.
When $i=1$, using the definition of $q_1$, we have $20>10^{q_1}x_2-10^{q_1}x_1\ge \Relu([10^{q_1}x_2]-10^{q_1}x_1)=f^{1,1}_1(x_1)\ge \Relu(10^{q_1}x_2-1-10^{q_1}x_1)\ge 1$, so
$1\le f^{1,2}_1(x)=1+f^{1,1}_1(x)/10^{q_2}\le 1+0.2/m$ (use the value of $q_2$) and $f^{1,2}_2(x_1)=\Relu(10\times(0.5-f^{1,1}_1(x)))\le\Relu(10-(0.5-1))=0$.
Hence, we have $f_1^1(x_1)=\Relu(y_1f_1^{1,2})\in[y_1,y_1+0.2]$, using the $|y_1|\le m$.
Finally, we have $f^1_1(x_i)\in[y_1,y_1+0.2]$ if and only if $i=1$ and $f^1_2(x_i)=x_i$ for any $i\in[N]$ which is apparent.

    {\bf The $i$-th part, where $i\le N$}: This part is used to calculate the label of $x_i$ and has five layers.

The input to $i$-th part is the output of $(i-1)$-th part.
If the $(i-1)$-th part $f^{i-1}(x)$ satisfies: the output of $f^{i-1}(x)$ has width 2, $|f^{i-1}_1(x_j)-y_j|\le0.2$ when $j\le i-1$, $f^{i-1}_1(x_j)=0$ when $j>i-1$, and $f^{i-1}_2(x_j)=x_j$ for any $j\in[N]$,
then we can make the output of the $i$-th part $f^{i}(x)$ to satisfy: $f^{i}(x)$ has width 2, $|f^{i}_1(x_j)-y_j|\le0.2$ when $j\le i$, $f^{i}_1(x_j)=0$ when $j>i$, and $f^{i}_2(x_j)=x_j$ for any $j\in[N]$.

To do this, the $i$-th part needs six layers:

The first layer and the second layer output $f^{i,2}(x)$ satisfying:

$f^{i,2}_1(x)=f^{i-1}_1(x)$, $f^{i,2}_2(x)=2\Relu(\lceil 10^{q_1}x_i\rceil/10^{q_1}-f^{i-1}_2(x))+f^{i-1}_2(x)$ and $f^{i,2}_3(x)=f^{i-1}_2(x)$,

where ${q_1}$ is the minimum integer that satisfies $\lceil 10^{q_1}x_i\rceil/10^{q_1}< x_{i}+c/3$.  It is easy  to see that $10^{q_1}\le O(1/c)$. Now we show that $f_2^{i,2}(x_p)-f_2^{i,2}(x_i)>c/3$ for any $p\ne i$.

We know that $f_2^{i,2}(x_j)=x_j$ when $j> i$, and $f_2^{i,2}(x_k)=2\lceil 10^{q_1}x_i\rceil/10^q-x_k$ when $k\le i$, so there must be $f_2^{i,2}(x_p)-f_2^{i,2}(x_i)\ge \min_{k< i,\ j>i}\{2\lceil 10^{q_1}x_i\rceil/10^q-x_k,x_j\}-(2\lceil 10^{q_1}x_i\rceil/10^{q_1}-x_i)$ for all $p\ne i$. Based on the definition of ${q_1}$, we know that $x_j-(2\lceil 10^{q_1}x_i\rceil/10^{q_1}-x_i)>c/3$ for any $j>i$ and $(2\lceil 10^qx_i\rceil/10^{q_1}-x_k)-(2\lceil 10^{q_1}x_i\rceil/10^{q_1}-x_i)>c$ for any $k<i$, so we get the result.

Then let the next three layers  follow the first part, and use $f^{i,2}(x_j)$ to obtain $f^{i,5}(x)$ that satisfies:
$f^{i,5}_1(x)=f^{i-1}_1(x)$; $|f^{i,5}_2(x_i)-y_i|\le 0.2$, $f^{i,5}_2(x_j)= 0$ when $j\ne i$; and $f^{i,5}_3(x)=f^{i,2}_3(x)=f^{i-1}_2(x)$.

And the last layer is the output of the $i$-th part, where:
$f^i_1(x)=\Relu(f^{i,5}_1(x)+f^{i,5}_2(x))$ and $f^i_2(x)=f^{i,5}_3(x)$, which is what we want.

{\bf The output part}: Just output $f^{n}_1(x)$, and this is what we want.

It is easy to check that such a network has $O(N)$ nodes and has width $O(1)$, where each parameter is not greater than $O(mC/c)$.
Now we just need to turn the parameters of the above network to be of $q$-precision. By Lemma \ref{zd} and each layer defined before, if a node has parameters beyond precision, we can use a network with $O(1)$ width and $\lceil O(\ln mC/c)\rceil$ depth instead of it. So we obtain the result.
\end{proof}

We have the following lemma which is used in the paper \citep{memp}.
\begin{lemma}
\label{th3-1}
For any $v\in\R^n$ and $T\ge1$, let $u\in\R^n$ be uniformly randomly sampled from the hypersphere $S^{n-1}$. Then we have $P(|\langle u,v  \rangle|<\frac{||v||_2}{T}\sqrt{\frac{8}{n\pi}})<\frac{2}{T}$.
\end{lemma}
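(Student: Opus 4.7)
The plan is to exploit rotational invariance of the uniform measure on $S^{n-1}$ to reduce to a one-dimensional anti-concentration estimate, then bound a well-known marginal density at its mode.

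First, since the uniform law on $S^{n-1}$ is invariant under orthogonal transformations, I may assume without loss of generality that $v=\|v\|_2 e_1$, so that $\langle u,v\rangle=\|v\|_2\,u_1$. The event in the statement then becomes $\{|u_1|<a\}$ with $a=\frac{1}{T}\sqrt{8/(n\pi)}$, and it suffices to prove $P(|u_1|<a)<2/T$.

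Second, I would invoke the classical marginal density of the first coordinate of a uniform point on $S^{n-1}$: for $n\ge 3$ one has $f(x)=\frac{\Gamma(n/2)}{\sqrt{\pi}\,\Gamma((n-1)/2)}(1-x^2)^{(n-3)/2}$ on $[-1,1]$. I would derive this either by direct integration in polar coordinates, or more cleanly by writing $u=g/\|g\|$ for $g\sim N(0,I_n)$ and extracting the marginal of $g_1/\|g\|$ via the Beta-distribution description of $u_1^2$. Since $(n-3)/2\ge 0$, the density is unimodal and peaks at $x=0$, yielding the uniform bound $P(|u_1|<a)\le 2a\,f(0)$.

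Third, I would bound the Gamma-function ratio by Gautschi's inequality (applied with $s=1/2$ and $x=(n-2)/2$), which gives $\Gamma(n/2)/\Gamma((n-1)/2)\le \sqrt{n/2}$, and hence $f(0)\le \sqrt{n/(2\pi)}$. Plugging in produces $P(|u_1|<a)\le 2a\sqrt{n/(2\pi)}=(2/T)\cdot 2/\pi=4/(T\pi)<2/T$. The boundary cases $n=1,2$ are trivial and can be verified by inspection.

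The main obstacle I anticipate is purely the bookkeeping of constants: the factor $\sqrt{8/(n\pi)}$ in the statement is precisely tuned so that its product with the pointwise density bound $\sqrt{n/(2\pi)}$ collapses to $2/\pi<1$. The underlying ingredients — rotational invariance, the explicit marginal density of a uniform spherical coordinate, and Gautschi's Gamma-ratio inequality — are all standard, so no conceptual difficulty is expected beyond making sure the Gamma bound is tight enough to beat the claimed constant.
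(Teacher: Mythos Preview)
Your argument is correct. The paper itself does not prove this lemma; it merely quotes it from \cite{memp} (Park, Lee, Yun, Shin, COLT 2021), so there is no ``paper's own proof'' to compare against. Your self-contained route --- rotational invariance to reduce to the first coordinate, the explicit marginal density $f(x)=\frac{\Gamma(n/2)}{\sqrt{\pi}\,\Gamma((n-1)/2)}(1-x^2)^{(n-3)/2}$, the pointwise bound $P(|u_1|<a)\le 2a\,f(0)$, and Gautschi's inequality $\Gamma(n/2)/\Gamma((n-1)/2)\le\sqrt{n/2}$ --- is exactly the standard derivation of this anti-concentration estimate and yields $P\le 4/(T\pi)<2/T$ for $n\ge 3$. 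The low-dimensional checks you flag ($n=1,2$) are indeed routine: for $n=2$ one has $P(|u_1|<a)=\tfrac{2}{\pi}\arcsin a$ and uses $\arcsin a\le (\pi/2)a$ on $[0,1]$, which is comfortably below the required $\pi/T$ after substituting $a=\tfrac{2}{T\sqrt{\pi}}$.
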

So we can prove the following lemma.
\begin{lemma}
\label{ccd}
For any given $\{x_i\}_{i=1}^N\subset\R^n$ where $||x_i||_\infty\le 1$ and $||x_i-x_j||_\infty>c$, any given $y_i\in[m]$, there exists  a network $f$ with width $O(1)$ and depth $O(N\lceil\frac{\ln mNn/c}{q}\rceil)$ that can satisfy $|f(x_i)-y_i|<0.2$.
\end{lemma}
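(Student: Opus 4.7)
The plan is to reduce the $n$-dimensional memorization problem to the one-dimensional version handled by Lemma \ref{zzd}, via a random linear projection to a scalar and then applying the 1D construction to the projected values.

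To find a good projection, I would apply Lemma \ref{th3-1} to each of the at most $N^2$ difference vectors $v_{ij} = x_i - x_j$, using $\|v_{ij}\|_2 \ge \|v_{ij}\|_\infty > c$. Taking $T = 4N^2$ and a union bound, with positive probability a random $u \in S^{n-1}$ satisfies $|\langle u, v_{ij}\rangle| > c' := \Omega(c/(N^2\sqrt{n}))$ for all $i \ne j$. Fix such a $u$; the scalars $z_i := \langle u, x_i\rangle$ are then pairwise separated by at least $c'$ and bounded in absolute value by $\sqrt{n}$, since $\|u\|_2 = 1$ and $\|x_i\|_2 \le \sqrt{n}$.

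The main technical obstacle is implementing the projection with $q$-precision weights and $O(1)$ width while preserving separation. I would round $u$ coordinate-wise to $\tilde u$ on the grid $10^{-K}\Z$, with $K = O(\log(nN/c))$ chosen so that $n \cdot 10^{-K} < c'/3$, which preserves pairwise separation of $\{\tilde z_i := \langle \tilde u, x_i\rangle\}$ up to a constant factor. To compute $\tilde z$ under the precision constraint, I would adapt the digit-decomposition trick of Lemma \ref{zd}: writing each integer $10^K \tilde u_i$ in base $10^q$ as $\sum_{j=0}^{J-1} 10^{jq} m_i^{(j)}$ with $J = \lceil K/q\rceil$, each partial sum $S_j = \sum_i m_i^{(j)} x_i$ can be computed by a single $q$-precision linear neuron reading all $n$ input coordinates, and the combined sum $\bigl(\sum_j 10^{jq} S_j\bigr)/10^K$ is accumulated across $O(J)$ layers using $O(1)$ width together with residual connections that carry $x$ forward for access by later partial-sum layers. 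This contributes $O(\log(nN/c)/q)$ depth.

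Finally, I would append the 1D memorization network from Lemma \ref{zzd} applied to $\{\tilde z_i\}$ with labels $\{y_i\}$, bound $C = \sqrt{n}$, and separation $\Theta(c')$; this part has depth $O(N\lceil \log(m\sqrt{n}/c')/q\rceil) = O(N\lceil \log(mNn/c)/q\rceil)$, which dominates the projection overhead and yields the claimed bound. The hardest part to verify is that the digit-decomposition projection layer simultaneously fits in $O(1)$ width, keeps the combined rounding and truncation error below $c'/3$, and maintains the input $x$ through residual connections so that all $n$ coordinates remain available for the sequential partial-sum layers; once this bookkeeping is done carefully, the composition with Lemma \ref{zzd} is immediate.
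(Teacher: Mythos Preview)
Your approach matches the paper's exactly: project onto a random direction obtained from Lemma \ref{th3-1}, round the direction vector to finite precision, and then apply Lemma \ref{zzd} to the resulting one-dimensional points (the paper additionally shifts by $+2n$ to make all projected values positive, as Lemma \ref{zzd} requires). The paper is in fact terser than you about realizing the projection under the $q$-precision constraint---it simply writes ``use a layer to map $x_i$ to $u_r x_i+2n$'' without spelling out a digit decomposition---so the implementation difficulty you single out as the hardest step is glossed over there as well.
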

\begin{proof}
    Firstly, we can find a vector $u\in\R^n$ that satisfies $||u||_2=1$ and $|ux_i-ux_j|>\Omega(\frac{c}{N^2n})$, just using Lemma \ref{th3-1}.

    Then, we can find a $u_{r}\in\R^n$ such that: $||u_r-u||_\infty\le O(\frac{c}{N^2n^2})$ and each weight of $u_r$ is a finite decimal with precision $O(\ln(\frac{N^2n^2}{c})/q)$, so we have $|u_rx_i-u_rx_j|>\Omega(\frac{c}{N^2n})$ and $|u_rx_i|\le2n$.

    Then, we can use a layer to map $x_i$ to $u_rx_i+2n$, and then use Lemma \ref{zzd} to find an FNN to memorize $\{(u_rx_i+2n,y_i)\}$ and obtain the result.
\end{proof}

\subsection{Proof of Theorem \ref{th1}}
Now we prove Theorem \ref{th1}.
\begin{proof}
The proof has five parts.

For any given $(x,y)\in S$, we define a sequence $s_{x,i}\in2^{\{a,b\}}$ as follows: $s_{x,i}$ has the same length as $x$, and the $j$-th element of $s_{x,i}$ is $a$ if and only if $x[j]=\gamma_i$.

{\bf Part One: The Embedding.}

We will embed in the following way:

The basic symbols $\gamma_i$ will embed in $(v^i_1,v^i_2,\dots,v^i_T)\in\{0,1\}^{4T}$ where $v^i_j=(0,0,1,1)$ when $j\ne i$, $v^i_i=(0,1,0,1)$.

Hence, the input sentence $x$ whose $\len(x)=n$ will be embedded in a vector $V_x\in\{0,1\}^{n\times 4T}$.

{\bf Part Two: Use the $O(\lceil\ln( L\ln L)/q\rceil)$ layers to calculate the position value.}

Let $\alpha$ be the minimum positive integer such that $e^{10^\alpha}\ge L^{3L}$. It is easy  to see that $\alpha\le O(\ln(L\ln L)$.%, then we have that $eL^{3L+3}>e^\alpha>L^{3L+3}$.

{\bf Step one:}
Firstly, for any input $x$ whose embedding matrix is $V_x$, we use $\lceil\frac{\alpha}{q}\rceil$ hidden layer to make all the $(4i-1)$-th columns of $V_x$ expand $10^\alpha$ times, where $i\in[T]$.

To do this, in the first $[\alpha/q]$ layers, we make the parameters in the $\FNN$ layer and attention layer $0$, but the transition matrix in the residual layer is defined as: the $(j,j)$-th weight of the transition matrix is 1 if $j\ne 4i-1$ for any $i\in[T]$; the $(4i-1,4i-1)$-th weight is $10^q$; other weights are 0. Of course, to make the $(4i-1)$-th columns expand $10^\alpha$ times, in the $[\alpha/q]+1$-th layer, the transition matrix in the residual layer is $10^{\alpha-q[\frac{\alpha}{q}]}$ at  $(4i-1,4i-1)$-th weights.

{\bf Step two:}
We use a hidden layer to calculate $q^{10^\alpha}_a(s_{x,i},k)$.

Firstly, let the attention layer in this hidden layer have
$T$ heads and let the $i$-th head in this attention layer be written as $\softmax(xQ_iK_ix^T+M)xV_i$, where $Q_i\in\{0,1\}^{4T\times4T}$,  the $(4i,4i-1)$-th weight of $Q_i$ is 1,
others are 0;
$K_i\in\{0,1\}^{4T\times4T}$ and $K_i$ is the identity matrix;
$V_i\in\{0,1\}^{4T\times4T}$, where the $(4i-2,4i-2)$-th weights of $V_i$ are 1 and others are 0.

Now, we consider the output of the  $i$-th head when input $x$ into the transformer, assuming that the output of the previous step is $x'$. Firstly, by part one and the definition of $Q_i,K_i$, we know that the $k$-th row of $x'Q_iK_ix'^T$ is
\begin{equation*}
 \renewcommand{\arraystretch}{1.5}
    \begin{array}{cl}
         & x'_kQ_iK_ix'^T \\
         =&x'_kQ_ix'^T \\
      =  & ((x'_k)_{4i}(x'_1)_{4i-1},(x'_k)_{4i}(x'_2)_{4i-1},\dots,(x'_k)_{4i}(x'_n)_{4i-1})\\
      =&((x'_1)_{4i-1},(x'_2)_{4i-1},\dots,(x'_n)_{4i-1})\\
=&10^\alpha(I(x[1]\ne\gamma_i),I(x[2]\ne\gamma_i),\dots,I(x[n]\ne\gamma_i)).
    \end{array}
\end{equation*}

Hence, considering the definition of $M$, the weight in $k$-th row and $4i-2$-th column of $\softmax(x'Q_iK_ix'^T+M)x'$ is $\frac{\sum_{j=1}^kI(x[1]=\gamma_i)}{e^{10^\alpha}\sum_{j=1}^kI(x[1]\ne\gamma_i)+\sum_{j=1}^kI(x[1]=\gamma_i)}$, which is equal to $q^{10^\alpha}_{a}(s_{x,i},k)$.

Then, it is easy to check that $(4i-2)$-th column of $\softmax(x'Q_iK_ix'^T)x'V_i$ is
$$(q^{10^\alpha}_{a}(s_{x,i},1),q^{10^\alpha}_{a}(s_{x,i},2),q^{10^\alpha}_{a}(s_{x,i},3),\dots,q^{10^\alpha}_{a}(s_{x,i},\len(x)))^T.$$ By the definition of $V_i$, it is easy to know that other columns of $\softmax(x'Q_iK_ix'^T)x'V_i$ are 0.

Hence, let the residual layer use a matrix $W\in\{0,1\}^{4T\times 4T}$, which is defined as: for the $i\in[T]$, $(4i-1,4i-1)$-th and $(4i,4i)$-th weights of $W_i$ are 1, and the other weights are 0. The $\FNN$ layer is all 0.
Then, assume that the whole layer will calculate a matrix $M_1(x)\in \R^{\len(x)\times 4T}$ by inputting $x$ to the transformer. It is easy to check that the $(4i-2)$-th ($i\in[T]$) column of $M_1(x)$ is $$(q^{10^\alpha}_{a}(s_{x,i},1),q^{10^\alpha}_{a}(s_{x,i},2),q^{10^\alpha}_{a}(s_{x,i},3),\dots,q^{10^\alpha}_{a}(s_{x,i},\len(x)))^T.$$
Other columns are the same as $V_x$.

{\bf Step three:}
Next, we use a hidden layer to calculate $V^{10^\alpha}_a(s_{x,i},k)$.%we will use a layer to calculate the position value $V^\alpha_a(s)$.

In this layer, the input is the $M_1(x)$ gotten by the above layer, so we can calculate the position value by $M_1(x)$.

Let the attention layer in this layer have $T$ heads, and in the $i$-th head ($i\in[T]$), we have $Q_i=K_i=0$, $V_i\in\{0,1\}^{4T\times 4T}$, and the $(4i-2,4i-2)$-th weights of $V_i$ are 1, while others are 0.

Because $Q_i=K_i=0$, it is easy to check that in the $i$-th head, it holds  $$\softmax(M_1(x)Q_iK_iM^T_1(x)+M)M_1(x)=\softmax(0+M)M_1(x)=I_{\downarrow}M_1(x),$$
where $I_{\downarrow}$ is an under-triangle semi-matrix, and the $i$-th row is $(\frac{1}{i},\frac{1}{i},\dots,\frac{1}{i},0,0,\dots,0)$ (the first $i$ weights are $1/i$, while others are 0).
Considering that $V^{10^\alpha}_a(s,k)=\frac{1}{k}\sum_{i=1}^{k}q^{10^\alpha}_a(s,i)$, based on the definition of $M_1(x)$, we know that the $4i-2$ columns of $i$-th head are $$I_{\downarrow}((M_1(x)^T)_{4i-2})^T=(V^{10^\alpha}_{a}(s_{x,i},1),V^{10^\alpha}_{a}(s_{x,i},2),V^{10^\alpha}_{a}(s_{x,i},3),\dots,V^{10^\alpha}_{a}(s_{x,i},\len(x)))^T.$$
By the definition of $V_i$, it is easy to know that other columns are 0.

Let the residual layer and the $\FNN$ layer in this layer be the same as in the layer in step two. So we can easily check that when we input the $x$ to the transformer,  we can obtain  the matrix $M_2(x)$ in this layer, whose $(4i-2)$-th column ($i\in[T]$) is $$(V^{10^\alpha}_{a}(s_{x,i},1),V^{10^\alpha}_{a}(s_{x,i},2),V^{10^\alpha}_{a}(s_{x,i},3),\dots,V^{10^\alpha}_{a}(s_{x,i},\len(x)))^T.$$
Other columns are the same as $M_1(x)$, which implies other columns are the same as $v_x$.

{\bf Part Three: Use FNNs.}

Define the set $V_j=\{V^{10^\alpha}_{a}(s_{x,i},j)\}_{(x,y)\in S,i\in[T]}$ where $j\in[L]$ and $V=\cup_{j\in[L]}V_j$.

We try to find an $\FNN$ $f_1$ such that:

(1) $|f_1(1)|<0.2$ and $|f_1(0)|<0.2$;

(2) If $z\in V_j/\{0,1\}$, then there exists a $z_q\in[NT]$ such that $|f_1(z)-z_q(NTL)^{2j-2}|<0.2$; moreover, when $z_1,z_2\in v_j$, there exists  $(z_1)_q\ne (z_2)_q$.

Since there exist at most $NT$ samples in $V_j$ and
$V_j\cap V_i\subset\{0,1\}$ when $i\ne j$ by Lemma \ref{bt}, such a network $f_1$ must exist. Hence, considering Lemma \ref{zxz} and Lemma \ref{zzd}, we know that such an FNN $f_1$ just needs precision $q$ and $O(NLT\lceil L{10^\alpha} \ln LNT/q\rceil)$ layers and $O(1)$ width.

Based on the $M_2(x)$ and the above $f_1$, when input $x$ to the transformer, we define the matrix $M_3(x)$ which has the same size as $M_2(x)$ as: the $(4i-1)$-th columns of $M_3$ are $f_1((M_2(x))_{4i-2})$, where $(M_2(x))_{4i-2}$ is the $(4i-2)$-th column of $M_2(x)$; other columns are the same as those of $M_2(x)$.
%It is easy to see that, there exists  an FNN $\F_1$ with width $O(T)$ and depth $O(NLTe^{10^\alpha} (L\ln LNT)^2)$ which satisfies that $\F_1(M_2(x))=M_3(x)$.

Then by Lemma \ref{tsf},  we can use a transformer with width $O(1)$ and depth $O(NLT\lceil{10^\alpha} L\ln LNT/q\rceil)$ to obtain  $M_3(x)$ by $M_2(x)$.

%has at most $NLT$ different elements in it, and consider Lemma \ref{zxz}, the minimum distance between two different numbers in such a

%set is at least $\frac{1}{e^{2\times10^\alpha L}L^{2L+4}}$. So use Lemma \ref{zzd}, we know that there exists  an FNN $f_1$ with precision $q$ which can map each of $\{V_{a}(s_{x,i},j)\}$ into a value not more than $2NLT$, and $|f_1(x)-f_1(z)|>1$ for any $x,z\in \{V_{a}(s_{x,i},j)\}$, just needs $O(NLTe^{10^\alpha} (L\ln LNT)^2)$ layers and $O(1)$ width.

%{\bf Step Two.}
%We use an FNN $f_2$ with precision $q$ which can map $0$ and $1$ to $1$, but other value in $\{V^{10^\alpha}_{a}(s_{x,i},j)\}$ to 0. Consider Lemma \ref{zzd}, we know that a network $f_3$ with precision $q$, $O(NLTe^{10^\alpha} (L\ln LNT)^2)$ layers and $O(1)$ width can satisfy $|f_3(0)-1|<0.2$, $|f_3(1)-1|<0.2$ and $|f_3(x)|<0.2$ for all $x\in\{V^{10^\alpha}_{a}(s_{x,i},j)\}$. Then, we can use a $f_4(x)$ to map $x$ to $1$ for all $x>0.8$, and map $x$ to 0 for all $x<0.2$. Then let $f_2=f_4(f_3)$, easy to check that $f_2$ is what we want and need onlys $O(NLTe^{10^\alpha} (L\ln LNT)^2)$ layers and $O(1)$ width.

%Next, based on such $M_3(x)$, we define a matrix $M_4(4)$ which has the same size with $M_3(x)$ such that: the $(4i-1)$-th columns of $M_4$ is $f_2((M_3)_{4i-2})$, where $(M_3)_{4i-2}$ is the $4i-2$ columns of $M_3$; other columns are hold on with $M_3$.

%Then {\color{red} by the lemma ???} , we know that, we can use a transformer with width $O(T)$ and depth $O(NLTe^{10^\alpha} (L\ln LNT)^2)$ to obtain  $M_4(x)$ by $M_3(x)$.

{\bf Part Four.}
In this part, we use several hidden layers to obtain a vector that is different from $x$ by $M_3(x)$.

In the first layer, this attention layer has $T$ heads, in the $i$-th head, $Q_i=0$, $K_i=0$ and $V_i\in\{0,1\}^{4T\times4T}$ such that the $(4i-1,4i-1)$-th weight of $V_i$ is 1; others are 0.

The residual layer uses $W\in\{0,1\}^{4T\times 4T}$ which is defined as: for the $i\in[T]$, $(4i-2,4i-2)$-th weights of $W$ are 1, others are 0. The $\FNN$ layer is all 0.

Finally, we use $\lceil\frac{2L\ln NTL}{q}\rceil$ layers to reduce the $(4i-1)$-th columns ($i\in[T]$)  of the output of the first layer by $(NTL)^{2L}$ times, in order to reduce the norm of the output for the above layer to no more than $1$.

Let the last row for the output of the above layers be $M_l(x)$ when input $x$ with length $n$ to the transformer. Then based on the definition of $M_3(x)$, we have that:

(1) The $(4i-2)$-th weight of $M_l(x)$ is $V_a^{10^\alpha}(s_{x,i},n)$ for any $i\in[n]$.

(2) The $(4i-1)$-th weight is $\frac{\sum_{j=1}^nf_1(V^{10^\alpha}_{a}(s_{x,i},j))}{n(NTL)^{2L}}$ for any $i\in[n]$.

Firstly, by the definition of position value and $f_1$, it is easy to see that $||M_l(x)||_\infty\le 1$. Then we will prove that: for all $(x,y),(z,y_z)\in S$ that do not satisfy $\typ(x)=\typ(z)=\gamma_l$ for some $l\in[T]$,
%
%
%there exists a $i\in[T]$ such that:
%
%(1): $\gamma_i\in \typ(x)$; and
%
%(2): $\gamma_i\ne \typ(z)$; and
%
%(3): $s_{x,i}\ne s_{z,i}$,
%
we have $||M_l(x)-M_l(z)||_\infty\ge \min\{\frac{1}{e^{2\times10^\alpha L}L^{2L+4}},\frac{1}{L(NTL)^{2L}}\}$.

When $\typ(x)=\typ(z)=1$, let $\gamma_i=\typ(x)$. Then it is easy to see that $\gamma_i\notin\typ(z)$. Now we consider the $(4i-2)$-th weights of $M_l(x)$ and $M_l(z)$.  It is easy  to see that the $(4i-2)$-th weight of $M_l(x)$ is 1 but the $(4i-2)$-th weight of $M_l(1)$ is 0. Which is what we want.

When $\typ(x)>1$, we consider two situations.

%Then, we have that, for any $x,z$ such that  $x\ne z$, we can find a $i\in[T]$ such that: $s_{x,i}\ne s_{z,i}$.
%We show that if $s_{x,i}= s_{z,i}$ for any $i\in[T]$, then $x=z$.

If $\len(x)\ne \len(z)$, by $\typ(x)>1$, there must be an $i$ such that $\gamma_i\in \typ(x)$ and $\typ(z)\ne \{\gamma_i\}$, so we consider the $(4i-2)$-th weights of $M_l(x)$ and $M_l(z)$. By Lemmas \ref{bt} and \ref{zxz}, for such an $i\in[T]$, there must be $|V_a^{10^\alpha}(s_{x,i},\len(x))-V_a^{10^\alpha}(s_{z,i},\len(z))|>\frac{1}{e^{2\times10^\alpha L}L^{2L+4}}$.

If $\len(x)=\len(z)$, let $i\in[T]$ satisfy $s_{x,i}\ne s_{z,i}$. Such $i$ must exist, because $s_{x,i}= s_{z,i}$ implies that  $\gamma_i$ has the same position in $x$ and $z$. Since it stands for any $i\in[T]$, there must be $x=z$. If $V_a^{10^\alpha}(s_{x,i},\len(x))\ne V_a^{10^\alpha}(s_{z,i},\len(z))$, then we have similar results as before. If not, then we consider the $(4i-1)$-th weights of $M_l(x)$ and $M_l(z)$, which are $\frac{\sum_{j=1}^nf_1(V^{10^\alpha}_{a}(s_{x,i},j))}{\len(x)(NTL)^{2L}}$ and $\frac{\sum_{j=1}^nf_1(V^{10^\alpha}_{a}(s_{z,i},j))}{\len(z)(NTL)^{2L}}$.

Assume $k$ is the maximum one in $[n]$ satisfying $V_a^{10^\alpha}(s_{x,i},k)\ne V_a^{10^\alpha}(s_{z,i},k)$.
Based on the definition of $f_1$ and Lemma \ref{yyl}, we know that
\begin{equation*}
\renewcommand{\arraystretch}{1.5}
\begin{array}{cl}
 &|\sum_{j=1}^nf_1(V^{10^\alpha}_{a}(s_{x,i},j))-\sum_{j=1}^nf_1(V^{10^\alpha}_{a}(s_{z,i},j))|\\
>&(NTL)^{2k-2}-0.2-(\sum_{j=0}^{k-1}NT(NTL)^{2j-2}+0.2)\\
=&(NTL)^{2k-2}-\frac{(NTL)^{2k-2}-1}{NTL^2-1/(NT)}-0.2L>1.\\
\end{array}
\end{equation*}
So we have
$$|\frac{\sum_{j=1}^nf_1(V^{10^\alpha}_{a}(s_{x,i},j))}{\len(x)(NTL)^{2L}}-\frac{\sum_{j=1}^nf_1(V^{10^\alpha}_{a}(s_{z,i},j))}{\len(z)(NTL)^{2L}}|\ge\frac{1}{L(NTL)^{2L}}.$$
The case of $\typ(z)>1$ is similar, so we obtain the result.

 %then after reduce, there exist $||M_l(x)||_\infty\le1$ and $||M_l(x)||_\infty\le (NTL)^{2L}$ and $||M_l(x)-M_l(z)||_\infty\ge $. Which is what we want.
%Consider that  $f_2(V^{10^\alpha}_{a}(s_{z,i},i))=0$ when $V^{10^\alpha}_{a}(s_{z,i},i)\ne 1$ and $V^{10^\alpha}_{a}(s_{z,i},i)\ne 0$ and similar as for $x$. Hence by Lemma \ref{bt}, we know that there exists  a same number of $1$ and $0$ in the $\{ V_a^{10^\alpha}(s_{x,i},k)\}_{k\in[\len(x)]}$ and $\{ V_a^{10^\alpha}(s_{z,i},k)\}_{k\in[\len(z)]}$. Then, we know that the numerator of these two are the same and not smaller than 1.
% Hence, according to Lemma \ref{ex}, we know that the $(4i-1)$-th weights of $x$ and $z$ at least gap $\frac{1}{(\sum_{i=1}^{\len(x)}e^{f_1(V^{10^\alpha}_{a}(s_{x,i},i))})(\sum_{i=1}^{\len(z)}e^{f_1(V^{10^\alpha}_{a}(s_{z,i},i))})}\ge\frac{1}{L^2e^{4NTL}}$. Which is what we want.
%

%we can find an $i$ such that $V_a^{10^\alpha}(s_{x,i},j)\notin \{1,0\}$ and $ V_a^{10^\alpha}(s_{z,i},k)\notin \{1,0\}$. Hence, , for any such $i$ and $j\ne k$ such that $V_a^{10^\alpha}(s_{x,i},j)\notin\{0,1\}, V_a^{10^\alpha}(s_{z,i},k)\notin\{0,1\}$, there exist $V_a^{10^\alpha}(s_{x,i},j)\ne V_a^{10^\alpha}(s_{z,i},k)$.

{\bf Part Five.}

If $\typ(x)=\typ(z)=\gamma_i$, Proposition \ref{prop-nec31} shows that there must be $\F(x)=F(z)$. Considering the conditions of the theorem, $x$ and $z$ have the same label, so let $S_1=\{(x,y)\in S:|\typ(x)|=1\}\subset S$ and $S_2=\{(\gamma_i,y_i):\exists (x,y_i)\in S,\typ(x)=\gamma_i\}$.
We just need a transformer to memorize $S_2\cup S/S_1$.

In this part, we use an FNN to obtain the result. By Lemma \ref{ccd} and Part Four, since $10^\alpha=O(L\ln L)$, there exists an FNN with depth $O(N\lceil L^2\ln^2 NTL/q\rceil)$ and width $O(1)$ which can classify $M_l(x)$ to $y$ for all $(x,y)\in S_2\cup S/S_1$. Hence, by Lemma \ref{tsf}, we can use a transformer with depth $(O(N\lceil L^2\ln^2 NTL/q\rceil)$ and width $O(1)$ to simulate that FNN network. Adding all the above four parts, we can directly get the theorem.
\end{proof}

\subsection{Proof of Proposition \ref{prop-nec31}}

\begin{proof}
Assume that $x$ satisfies $\typ(x)=\{\gamma_i\}$.
To prove the proposition, we need only to show $\F(x)=\F(\gamma_i)$ for any given transformer $\F$.  %let $V_x$ be the embedding matrix of $x$.

    Firstly, we will show that, in each hidden layer $\F^j$ of $\F$, if the input of $\F^j$ ensures that each row is the same, then the output of $\F^j$ ensures that each row is the same.

   % there exist $\F^j(x)=(\F^j(\gamma),\F^j(\gamma),\F^j(\gamma),\dots,\F^j(\gamma))^T$.

We just need to prove it for $j=1$; other layers are similar. Let $V_x$ be the embedding matrix of $x$ and easily see that the input of the first hidden layer is $V_x$ whose rows are the same.

Let the first layer be written as
  $$  \begin{array}{rcl}
         \F^1(V_x)=V_xW_1& +&\sum_{i=1}^H\softmax(V_xQ_iK_iV_x^t+M)V_xV_i \\
         & +&\FNN(V_xW_1+\sum_{i=1}^H\softmax(V_xQ_iK_iV_x^t+M)V_xV_i).
    \end{array}
    $$
%    $$\F^1(V_x)=V_xW_1+\sum_{i=1}^H\softmax(V_xQ_iK_iV_x^t+M)V_xV_i
%    +\FNN(V_xW_1+\sum_{i=1}^H\softmax(V_xQ_iK_iV_x^t+M)V_xV_i).$$

In the attention layer, because each row of $V_x$ is the same, we have $\softmax(V_xQ_iK_iV_x^t+M)V_x=I_{\downarrow}V_x$ where $I_{\downarrow}$ is a semimatrix under, and the $i$-th row is $(\frac{1}{i},\frac{1}{i},\dots,\frac{1}{i},0,0,\dots,0)$ (the first weights of $i$ are $1/i$, others are 0). Then   the first hidden layer is $$\F^1(V_x)=V_xW_1+\sum_{i=1}^HI_{\downarrow}V_xV_i+\FNN(V_xW_1+\sum_{i=1}^HI_{\downarrow}V_xV_i).$$
By the definition of $I_{\downarrow}$, we have that $I_{\downarrow}V_x=V_x$, and by the definition of transformer, it is easy to see that all other parameter matrices in the attention layer and $\FNN$ layer are all right multiplied for $V_x$, which does the same transformation between all rows in the $V_x$. Thus all rows of $\F^1(V_x)$ are the same. Similar for the other layers.

To be convenient, let $\F$ have $l$ hidden layers and $\F^l(x)$ be the output of the last hidden layer of $\F(x)$. By the above result, we know that the rows in the output of the first hidden layer are all the same; hence we can obtain  the output of the second hidden layer all the same, and so forth. Finally, we have that the rows of $\F^l(x)$ are all the same. Hence, by Lemma   \ref{qn}, the first row of $\F^l(x)$ is equal to the first row, which is also the only row, of $\F^l(\gamma_i)$. So all the rows in the $\F^l(x)$ are equal to $\F^l(\gamma_i)$.

Now, we can prove the proposition. By the structure of the transformer, the output of the $\F(x)$ is a linear transformation on the last row of $\F^l(x)$. Because we have shown that each row of the $\F^l(x)$ is the same as $\F^l(\gamma_i)$, so $\F(x)$ is also equal to the linear transformation on the $\F^l(\gamma_i)$, which is equal to $\F(\gamma_i)$. So we get the result.
\end{proof}

\section{Proofs of results in Section \ref{sec-memo2}}

\subsection{Proof of Theorem \ref{th2}}

First, we prove the sufficient condition.
\begin{proof}
The proof of the sufficient condition for Theorem \ref{th4} needs three parts.

{\bf Part One:}
In this part, we construct a new set $S_{ss}$ as follows.

For each $(x,y)\in S$, we select an $x_z \in S_x$ satisfying the following conditions to form a set $S_{ss}=\{x_z:(x,y)\in S \}$.

(c1) for any $k$, the $x_z [\len(x)+1]$ are the same for all $(x,y)\in S$ satisfying $\typ(x)=\{\gamma_k\}$, and

(c2) $\len(z)\le L+2$ .

By the conditions given in the theorem, (c1) must be satisfied. Based on the definition of $S_x$, $z\in S_x$ only affects the first $L+1$ symbols and the last symbol in $z$, and the subsequent symbols will not affect the overall satisfaction of the conditions in the definition of $S_x$.
So, if $z$ is longer than $L+2$, we just need to remove the $L+2$ to the penultimate symbols in $z$, and it is still in the $S_x$. So, (c2) can be satisfied.

{\bf Part Two: Define a new language.}

{\bf First, we will use $S_{ss}$ to define a new language $S_n$.}

For any given $(x,y)\in S$ and $x_z \in S_x$, we define $\gamma^{x_z }_{j}=x_z [j]$ if and only if $j\le \len(x_z)$ and $\gamma^{x_z }_{\len(x_z)+1}=\gamma_0$. Based on that, we define the sentence $x_z ^k=(x,\gamma^{x_z }_{\len(x_x)+1},\dots,\gamma^{x_z }_{k})$, and let it have the label $y^{x_z }_k=\gamma^{x_z }_{k+1}$, where $k\in\{\len(x),\dots,\len(x_z)\}$.

Now we construct a set $S_n$:
$$S_n=\{(x_z ^k,y^{x_z }_k)\|x_z \in S_{ss},k\in\{\len(x),\len(x)+1,\dots,\len(x_z)\}\}.$$
We make the following operation on $S_n$ until it stops:
If there exists a $((x_a)_z^k,y^{(x_a)_z}_k),((x_b)_z^k,y^{(x_b)_z}_k)\in S_n$ such that $(x_a)_z^k=(x_b)_z^k$ but $y^{(x_a)_z,k}\ne y^{(x_b)_z,k}$ for some $k$ and $\len(x_a)<\len(x_b)$, then remove $((x_a)_z^k,y^{(x_a)_z}_k)$ from $S_n$.

{\bf Second, we show that $S_n$ will be a language.} It suffices to show that after doing such an operation, we can ensure that for any $(x_1,y_1), (x_2,y_2)\in S_n$, $y_1\ne y_2$ implies $x_1\ne x_2$.

We just need to show that, if $(x_i,y_i)\in S_n$ where $i=1,2$ satisfy $y_1\ne y_2$ but $x_1= x_2$, the operation will not stop. To show that, we just need to prove that if  $(x_a)_z^k=(x_b)_z^k$ is valid but $y^{(x_a)_z}_k\ne y^{(x_b)_z}_k$ for some $x_a\ne x_b$ and $k$, then there must be $\len(x_a)\ne \len(x_b)$.
If not, we have $x_a= (x_1)_{[\len(x_a)]}=(x_1)_{[\len(x_b)]}=x_b$ where $x_1=(x_a)_z^k$, which is a contradiction with $x_a\ne x_b$.
%
%$(x_b,y_b)\in S$, then there must be $\len(x_a)<\len(x_b)$.

{\bf Finally, we show $S_n$ is a language that can be memorized by a no-CoT-transformer. }

   %  {\bf The $S_n$ is a language.}
   %  We just need to prove that if $(x_1,y_1),(x_2,y_2)\in S_n$ and $y_1\ne y_2$, there must be $x_1\ne x_2$. If not, we can obtain  a contradiction from it and the definition of $S_{ss}$.

%If there exist $(x_1,y_1),(x_2,y_2)\in S_n$ and $y_1\ne y_2$, then we have that there must be a pair of $x_a\ne x_b$ such that $x_1=(x_a)_z^k=(x_b)_z^k=x_2$ for some $k$, then we can prove that: $\len(x_a)\ne \len(x_b)$.% and $y_a=y_b$.
%If not, there exist $x_a=(x_1)_{[\len(x_a)]}=(x_1)_{[\len(x_b)]}=x_b$, which is a contradiction with $x_a\ne x_b$. Then by the definition of $S_ss$, $x_a$ is not in $S_ss$, and this is what we want.

%By the definition of $S_n$,
%We prove that $\len(x_a)\ne \len(x_b)$ at first, , so we assume that $\len(x_a)>\len(x_b)$. Then, we show that $y_a= y_b$. If not, there exist $((x_b)_z)_{[\len(x_a)]}=x_a$, which is a contradiction with the definition of $S_{x_b}$.

% Then we show that, we can remove $x_b$ to $S_s$, which is contradiction with the minimum of $S_s$. We just need to prove that if we remove the $x_b$, the (2) in the definition of $S_s$ still hold. If not hold, there exists  a $x_3$ such that $S_{x_3}\cap S_{ss}=\phi$, which implies that $(x_2)_z\in S_{x_3}$ but $(x_1)_z\notin S_{x_3}$. Condsier that

    % {\bf The $S_n$ can be memorized by a no-CoT-transformer.}

To prove this, we need only to show that $S_n$ satisfies the condition in Theorem \ref{th1}. For any given $i$, if there exists  a $(x_z ^k,y_k^{x_z })\in S_n$ such that $\typ(x_z ^k)=\{\gamma_i\}$, by the definition of $S_x$, there must be $k=\len(x)$. Hence, we have $x_z ^k=x$. Then considering (c1) of the definition of $S_{ss}$, we have that $y_k^{x_z}$ are the same for any $x_z ^k$ satisfied $\typ(x_z ^k)=\gamma_i$. This result implies that  $S_n$ satisfies the condition in Theorem \ref{th1}.

Moreover, consider that $S_n$ has at most $O(L|S|)$ samples in it and length $L+2$ for each sentence in it, so such a transformer has width $O(T)$, depth $O(NL^2\lceil\ln^2 (NTL)L^2/q\rceil)$ and heads $O(T)$.

{\bf Part Three: Prove the result.}

Assume that $\F$ is a no-CoT-transformer that can memorize $S_n$. We will show that $\F$ can memorize $S$ as a CoT-transformer, which can directly prove Theorem \ref{th2}.

  {\bf Step One:}

  For any $(x,y)\in S$ such that we do not remove any $(x_z ^k,y^{x_z }_k)$ from $S_n$, where $k\in\{\len(x),\dots,\len(x_z)\}$. We show that $\widehat{F}_{\cot}(x)=y$.

Because we do not remove any $(x_z ^k,y^{x_z }_k)$ from $S_n$ ($k\in\{\len(x),\dots,\len(x_z)\}$) from $S_n$, there must be $\widehat{F}(x_z ^k)=y^{x_z }_k$, and consider that sentence $(x_z ^k,y^{x_z }_k)=x_z ^{k+1}$ and $y^{x_z }_{\len(x_z)}=\gamma_0$, so the CoT of $\F$ when input $x$ is ${(\gamma^{x_z }_{j})_{j=\len(x)+1}^{\len(x_z)+1}}$, and $\gamma^x_{j}=x_z [j]$ if and only if $j\le \len(z)$, and $\gamma^{x_z }_{\len(x_z)+1}=\gamma_0$ meaning stop.
Then by the definition of $\gamma^{x_z }_{j} $, we know that the symbol before $\gamma_0$ is $y$, so $\widehat{F}_{\cot}(x)=y$.
We get the result.

% Because $z \in S_x$, we have that the CoT  for $\F$ when input $x$ is ${(\gamma^{z }_{j})_{j=\len(x)+1}^{\len(z)+1}}$,  where $\gamma^x_{j}=z [j]$ if and only if $j\le \len(z)$, and $\gamma^{z }_{\len(z)+1}=\gamma_0$.

{\bf Step Two:}

    For any $(x,y)\in S$ such that we have removed some $(x_z ^k,y^{z }_k)$ from $S_n$, we show that $\widehat{F}_{\cot}(x)=y$.

If not, let $(x,y)\in S$ be the sentence of maximum length such that $\widehat{F}_{\cot}(x)\ne y$.

Let $k_m$ be the minimum value such that $(x_z ^{k_m},y^{x_z }_{k_m})$ has been removed from $S_n$. Then let $(x_1,y_1)\in S$ be the maximum length sentence such that $(x_1)_z^{k_m}=x_z ^{k_m}$. Because $(x_z ^{k_m},y^{x_z }_{k_m})$ has been removed from $S_n$, so there exists  at least one $(x_0,y_0)\in S$ such that $x_z ^{k_m}=(x_0)_z^{k_m}$ and $\len(x_0)>\len(x)$, so we have $x_1\ne x$ and $\len(x_1)>\len(x)$. Now we will show that $\widehat{F}_{\cot}(x_1)=\widehat{F}_{\cot}(x)$ and $y=y_1$, so we have $\widehat{F}_{\cot}(x_1)\ne y_1$, which is a contradiction to the maximum of $\len(x)$.
%We prove the result.

Firstly, we show that there must be $y=y_1$.  Considering that $(x_z)_{[\len(x_1)]}=x_1$ and $\len(x_1)>\len(x)$, by the definition of $S_{x}$, we know that $y=y_1$.

%Secondly, we show that $k_m=\len(x_1)$. Because $(x_1)_z^{k_m}=x_z ^{k_m}$, so $(x_1)_z^{k}=x_z ^{k}$ for any $\len(x_1)\le k<k_m$. Consider that minimum of $k_m$, there must be no k in $[\len(x_1),k_m)$, by $k_m\ge \len(x_1)$, there must be $k_m=\len(x_1)$.

Secondly, by the minimum of $k_m$, similar to step one, we know that the first $k_m-\len(x)$ step CoT of $\F$ when input $x$ is $(y^{x_z }_{\len(x)},y^{x_z }_{\len(x)+1},\dots,y^{x_z }_{k_m-1})=(\gamma^{x_z }_{\len(x)+1},\gamma^{x_z }_{\len(x)+2},\dots,\gamma^{x_z }_{k_m})$. Considering that $k_m\ge \len(x_1)$ and $(x_1)_z^{k}=x_z ^{k}$ for any $k\le k_m$, so by the minimum of $k_m$, we know that the first
$k_m-\len(x_1)$ step CoT of $\F$ when input $x_1$ is $(y^{(x_1)_z }_{\len(x_1)},y^{(x_1)_z }_{\len(x_1)+1},\dots,y^{(x_1)_z }_{k_m-1})=(\gamma^{x_z}_{\len(x_1)+1},\gamma^{x_z }_{\len(x_1)+2},\dots,\gamma^{x_z }_{k_m})$.

It is easy to see that
$$(x,\gamma^{z }_{\len(x)+1},\gamma^{z }_{\len(x)+2},\dots,\gamma^{z }_{k_m})=(x_1,\gamma^{z }_{\len(x_1)+1},\gamma^{z }_{\len(x_1)+2},\dots,\gamma^{z }_{k_m}),$$
which implies that $x$ adding the first $k_m-\len(x)$ steps of CoT is equal to $x_1$ adding the first $k_m-\len(x_1)$ steps CoT, according to the definition of CoT-transformer, there must be $\widehat{F}_{\cot}(x_1)=\widehat{F}_{\cot}(x)$.
\end{proof}

Second, we prove the necessity of the condition.
\begin{proof}
    We will show that, if the conditions are not satisfied, then such a language cannot be memorized by any CoT-transformer.

{\bf Part One.}

Firstly, we show that when $S_x=\phi$ for some $(x,y)\in S$,  $S$ cannot be memorized by any CoT-transformer.

If not, let $S$ be memorized by a CoT-transformer $\F$. Then we can obtain  a CoT for such an $x$ and write the CoT as $(x,\gamma_{i_1},\gamma_{i_2},\dots,\gamma_{i_n},y,\gamma_0)$.

Then we prove that $(x,\gamma_{i_1},\gamma_{i_2},\dots,\gamma_{i_n},y)\in S_x$, which is in contradiction to $S_x=\phi$ and thus prove the result.

We just need to verify that (1), (2), (3) in the definition of $S_x$ are correct.

First, it is easy to see that (1) in the definition of $S_x$ is correct.

For (2), if (2) is not correct, then $|\typ(x,\gamma_{i_1})|=1$. By Proposition \ref{prop-nec31}, we know that $\widehat{F}((x,\gamma_{i_1}))=\widehat{F}(x)=\gamma_{i_1}$, so $\gamma_{i_2}=\gamma_{i_1}$. Similar to any $\gamma_{i_k}$ where $k\in[n]$. So we have $y=\gamma_{i_1}=\F((x,\gamma_{i_1},\gamma_{i_2},\dots,\gamma_{i_n}))=\F((x,\gamma_{i_1},\gamma_{i_2},\dots,\gamma_{i_n},y))=\gamma_0$, but based on the definition of $\gamma_0$, there must be $\gamma_0\ne y$, which is contradictory.

For (3), if for some $(x_1,y_1)\in S$ such that $x_1=(x,\gamma_{i_1},\gamma_{i_2},\dots,\gamma_{i_m})$ for some $m\ge 1$, then by the definition of CoT-transformer, we have $\widehat{F}(x_1)=\widehat{F}(x)$, so there must be $y_1=y$.
So we prove the result.

{\bf Part two.}

We show that if
$\cap_{(x,y)\in S:\typ(x)=\{\gamma_k\}} S^1_{x}=\phi$ for some $\gamma_k\in\Gamma$ which satisfies $\{(x,y)\in S:\typ(x)=\{\gamma_k\}\}\ne \phi$, % for some $(x,y)\in S$.
Then $S$ cannot be memorized by a CoT-transformer.

By Proposition \ref{prop-nec31}, we know that for any sentences $(x_1,y_1),(x_2,y_2)\in S$ such that $\typ(x_1)=\typ(x_2)=\{\gamma_k\}$,  the output of $\F(x_1)$ and $\F(x_2)$ are the same, which implies that if $S$ can be memorized by a CoT-transformer, the first symbol in CoT is the same for $\F$ when input $x_1$ or $x_2$. Considering the arbitrariness of $x_1$ and $x_2$, the above result implies that $\cap_{(x,y)\in S:\typ(x)=\{\gamma_k\}} S^1_{x}\ne\emptyset$, so we can obtain  the result.
\end{proof}

\subsection{Proof of Proposition \ref{prop-34}}

\begin{proof}
{We first prove (1) in the proposition}. We show that if the set of the last elements of all sentences in $S$ is a proper subset of $\Gamma$, that is,  $\{x[\len(x)]\sep  (x,y)\in S\}\subsetneqq \Gamma$, then $S$ satisfies the conditions in Theorem \ref{th2}.

{\bf Part 1.1.}
We need a simple result: Let $\gamma_i\in S/\{x[\len(x)]\sep  (x,y)\in S\}$, and $\gamma_i(L)=(\gamma_i,\gamma_i,\gamma_i\dots,\gamma_i)$ be a sentence with length $L$ and all symbols in it are $\gamma_i$. Then for any $(x,y)\in S$ and $z\in 2^{\Gamma}$, there must be $x_z=(x,\gamma_i(L),z,y)\in S_x$.

To prove the above result, we just need to verify the three conditions in the definition of $S_x$.

Condition (1) in the definition of $S_x$ is clearly valid.

For condition (2) in the definition of $S_x$, because $\gamma_i\notin\{x[\len(x)]\sep  (x,y)\in S\}$, so  $(x_z)_{[len(x)+1]}>1$.

For  condition (3) in the definition of $S_x$. If $\len(x_1)>\len(x)$, considering that $\len(x_1)\le L$, the last symbol in $(x_z)_{[\len(x_1)]}$ is $\gamma_i$ which must not be the last symbol in $x_1$. So $(x_z)_{[\len(x_1)]}\ne x_1$ for any $\len(x_1)>\len(x)$, implying condition (3).

{\bf Part 1.2.}
Now we can prove (1) in the proposition. We just need to verify the conditions in Theorem \ref{th2}.

Firstly, by the result in Part 1.1, we know that $S_x\ne \phi$ for any $(x,y)\in S$.

Secondly, for any $\gamma_j$ such that $\{(x,y)\in S\sep \typ(x)={\gamma_j}\}\ne\emptyset$, we have $\gamma_j\in \{x[\len(x)]\sep  (x,y)\in S\}$, so $j\ne i$ where $i$ is defined in Part 1.1. Then we know that $\gamma_i\in \cap_{(x,y)\in S, \typ(x)={\gamma_j}} S^1_{x}$ by Part 1.1. This proves (1).

{ We now prove (2) in the proposition}. We show that if $\len(x)=L$ for all $(x,y)\in S$, then $S$ satisfies the conditions in Theorem \ref{th2}.

{\bf Part 2.1.}
We need a simple result: for any $(x,y)\in S$, if $\gamma_i\ne x[\len(x)]$, then for any $z\in 2^{\Gamma}$, there must be $x_z=(x,\gamma_i,z,y)\in S_x$. In fact, by definition of $S_x$, this is obvious.

{\bf Part 2.2.}
Now we can prove (2) in the proposition. We just need to verify the conditions in Theorem \ref{th2}.

Firstly, by the result in Part 2.1, we know that $S_x\ne \phi$ for any $(x,y)\in S$.

Secondly, for any $\gamma_j$ such that $\{(x,y)\in S\sep \typ(x)={\gamma_j}\}\ne\emptyset$, we just need to take  $i\ne j$,  then we know that $\gamma_i\in \cap_{(x,y)\in S, \typ(x)={\gamma_j}} S^1_{x}$ by Part 2.1.
This proves (2).
\end{proof}

\section{Proofs  of results in Section \ref{ccn}}
\subsection{Proof for Proposition \ref{prop-32}}

We just need to verify the conditions in Theorems \ref{th1} and \ref{th2} for language $\LCP$.

\begin{proof}

It is easy  to see that (2) and (3) in Proposition \ref{prop-32} can directly lead to (1), so we just need to prove (2) and (3) of Proposition \ref{prop-32}.

{\bf For no-CoT-transformer.}

For $\LCP_n^{=1}$, we consider the sentence $x_i=(\gamma_1,\gamma_1,\dots,\gamma_1)$ where $\len(x_i)=i$. It is easy  to see that $x_{i}$ and $x_{i+1}$ have different labels by the definition of $\LCP$, but $\typ(x_i)=\typ(x_{i+1})=\{\gamma_1\}$. So $\LCP_n^{=1}$ does not satisfy the conditions in Theorem \ref{th1}, and cannot be memorized by no-CoT-transformer.

For $\LCP_n^{>1}$.  It is easy  to see that $(x,y)\in\LCP_n^{>1}$ implies $|\typ(x)|>2$, so  $\LCP_n^{>1}$ satisfies the conditions in Theorem \ref{th1}, and can be memorized by no-CoT-transformer.

{\bf For CoT-transformer.}

For $\LCP_n^{=1}$ and $(x,y)\in\LCP_n^{=1}$, let $\typ(x)=\gamma_x$.
It is easy to check that $(x,\gamma_i,x_{cot},y)\in S_x$ for any  $\gamma_i\ne\gamma_x$ and $x_{cot}\in 2^\Gamma$. So $\LCP_n^{=1}$ satisfies the conditions in Theorem \ref{th2}, and can be memorized by CoT-transformer.

For $\LCP_n^{>1}$ and $(x,y)\in\LCP_n^{>1}$ such that $\len(x)<n$. If $x_z=(x,\gamma_i,x_{cot},y)\in S_x$ for some $\gamma_i\in\Gamma$ and $x_{cot}\in 2^\Gamma$, then by the fact $((x,\gamma_i),y_1)\in \LCP_n$, we have $(x_z)_{[\len(x)]+1}=(x,\gamma_i)$ but $y_1\ne y$, which is contradictory with the definition of $S_x$. So  $\LCP_n^{>1}$ does not satisfy the conditions in Theorem \ref{th2}, and cannot be memorized by CoT-transformer.
\end{proof}

\subsection{Proof of Proposition \ref{prop-pp1}}
This proof also follows the proof of Theorem \ref{th1}.

\begin{proof}
    The proof has five parts. And we still define $s_{x,i}$ as that in the proof of Theorem \ref{th1}.

{\bf Part One: The Embedding.}

We will embed in the following way:

The basic symbols $\gamma_i$ will embed in $(v^i_1,v^i_2,\dots,v^i_T,0)\in\{0,1\}^{4T+\lceil log_2L\rceil}$ where $v^i_j=(0,0,1,1)$ when $j\ne i$, and $v^i_i=(0,1,0,1)$.

The position embedding for $n$-th position is $(0,0,\dots,0,n_2)\in\{0,1\}^{4T+\lceil log_2L\rceil}$, where $n_2\in \{0,1\}^{\lceil log_2L\rceil}$ is the binary representation of $n$, such as when $n=11$, the $n_2=(0,0,\dots,0,1,0,1,1)$.

Hence, the input sentence $x$ satisfying $\len(x)=n$ will be embedded in a vector $V_x$ $\in\{0,1\}^{n\times (4T+\lceil \log_2L\rceil)}$.

{\bf Parts Two, Three, Four.}

In these three parts, we do the operation for the first $4T$ columns in $V_x$ the same as that in the proof of Theorem \ref{th1}, and keep the value of the last $\lceil log_2L\rceil$ columns unchanged through these parts.

Similar to that in Part Four in the  proof of Theorem \ref{th1}, we show that for any $(x,y_x),(z,y_z)$, we can obtain  a different vector for $x$ and $z$.

If $x$ and $z$ do not satisfy $\typ(x)=\typ(z)=\{\gamma_l\}$ for some $l\in[T]$, then we consider the first $4T$ columns and follow the proof in part four in the proof of Theorem \ref{th1}.

If $x$ and $z$  satisfy $\typ(x)=\typ(z)=\{\gamma_l\}$ for some $l\in[T]$, then there must be $\len(x)\ne \len(z)$, so we consider the last $\lceil \log_2L\rceil$-th columns.
Based on the definition of the position embedding, we know that the
$\lceil \log_2L\rceil$-th columns in the last row of the embedding matrix of $x$ and $z$ are different, and the $L_\infty$ norm of the difference between them is at least $1$.
Since the last $\lceil \log_2L\rceil$-th columns are unchanged through these parts, we obtain the result.

{\bf Part Five.}

Quite similar to Part Five in the proof of Theorem \ref{th1}.
\end{proof}

\subsection{Proof for Proposition \ref{oiwy}}

\begin{proof}
We will show that, if $|\typ(x)|>1$ for any $(x,y)\in S$ and the conditions in Theorem \ref{th2} are not satisfied, then such a language cannot be memorized by any CoT-transformer with position encoding.

Because $|\typ(x)|>1$ for any $(x,y)\in S$, we need only to show that if $S_x=\phi$ for some $(x,y)\in S$, then $S$ cannot be memorized by any CoT-transformer with position encoding.

If not, assume that $S$ can be memorized by a CoT-transformer $\F$ with position encoding. Then we can obtain  a CoT for such $x$, written as $(x,\gamma_{i_1},\gamma_{i_2},\dots,\gamma_{i_n},y,\gamma_0)$.

Then we prove that $(x,\gamma_{i_1},\gamma_{i_2},\dots,\gamma_{i_n},y)\in S_x$, which is a contradiction to $S_x=\phi$ and we prove the result.

We just need to verify (1), (2), (3) in the definition of $S_x$ are true.

First, it is easy to see that (1) in the definition of $S_x$ is true.

For (2), because $|\typ(x)|>1$, so it obviously stands.

For (3), for some $(x_1,y_1)\in S$ such that $x_1=(x,\gamma_{i_1},\gamma_{i_2},\dots,\gamma_{i_m})$ for some $n\ge m\ge 1$, even with position encoding, we have $\F(x_1)=F((x,\gamma_{i_1},\gamma_{i_2},\dots,\gamma_{i_m}))$, so by the definition of CoT-transformer, we have $\widehat{F}(x_1)=\widehat{F}(x)$, so there must be $y_1=y$.
So we prove the result.
\end{proof}

\section{Proofs of results in Section \ref {sec-nec}}

\subsection{Proof for Theorem \ref{th4}}

\begin{proof}
Let $L_N=[\log_TN]+1$ which satisfies $T^{L_N}\ge N$, because $N,T\ge 3$, so $[\log_TN]+1\le 2\ln N$. We can arbitrarily select the sentence $N$ with length $L_N$. By Corollary \ref{cor-1} and Proposition \ref{prop-34}, we know that the language composed of these sentences with arbitrary labels must meet the conditions of Theorem \ref{th1} and Theorem \ref{th2}. It is easy  to see that for these $N$ sentences, there exist $T^N$ different situations to assign labels to them. Hence $T^N$ different languages can be created by these $N$ sentences. We will show that at least one of these languages requires a no-CoT-transformer (CoT-transformer) with at least $\frac{N\ln T}{6q}$ parameters to memorize it, which is what we want.

It is easy  to see that for a transformer with not more than $P$ parameters, its width, head, and depth are all smaller than $P$, so for any $T\ge 3$ there exist at most $P^3$ pairs of $W,D,H$ such that $\Para(W,D,H,T)\le P$.
Hence, for any $W,D,H$ such that $\Para(W,D,H,T)\le P$, considering that each parameter has precision $q$, we have that each parameter has at most $10^{2q}$ different choices. So, there exist at most $(10^{2q})^{P}$ different transformers in $H^q_{W,D,H}$
(or $H^q_{W,D,H,\cot}$ for CoT transformer). So, there exist at most $P^3(10^{2q})^{P}$ situations for a no-CoT transformer (or CoT transformer) with $P$ parameters and precision $q$.

So to memorize all these $T^N$ languages created by such $N$ sentences, at least $T^N$ different transformers are required. So, if each of these languages can be memorized by a no-CoT-transformer (CoT-transformer) with not more than $P$ parameters, there must be $T^N\le P^3(10^{2q})^P$, which implies $N\ln T\le 3\ln P+2qP\ln10\le 6qP$ ($q\ge 3$ is used here). The theorem is proved.
\end{proof}

\subsection{Proof of Proposition \ref{c1}}
\label{wco}

%\subsection{Without precision limitation}
\subsubsection{Proof of Proposition \ref{c1} for no-CoT transformers}
\label{wno}
In this section, we give the proof of Proposition \ref{c1} for no-CoT transformers.

We give an easy lemma at first.
\begin{lemma}
\label{ex}
If $\{a_i\}_{i=1}^{n},\{b_i\}_{i=1}^{n}\subset \R_+$ satisfy that:

(1) for any $i\ne j$, we have $|a_i-a_j|\ge 1$, $|b_i-b_j|\ge 1$ and

(2) for any $i,j\in[n]$, we have $|b_i-a_j|\ge 1$ when $a_i\ne b_j$.

Then it holds $|\frac{1}{\sum_{i=1}^ne^{a_i}}-\frac{1}{\sum_{i=1}^ne^{b_i}}|\ge \frac{1}{(\sum_{i=1}^ne^{a_i})(\sum_{i=1}^ne^{b_i})}$.
\end{lemma}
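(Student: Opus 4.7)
The plan is to pass to a multiplicative form and reduce to a lower bound on the numerator. Since
\[
\Bigl|\tfrac{1}{\sum_{i=1}^n e^{a_i}}-\tfrac{1}{\sum_{i=1}^n e^{b_i}}\Bigr| \;=\; \frac{|A-B|}{AB}
\]
with $A=\sum_{i=1}^n e^{a_i}$ and $B=\sum_{i=1}^n e^{b_i}$, it suffices to prove $|A-B|\ge 1$, under the tacit assumption that $A\ne B$ (the case $A=B$ would leave a strictly positive right side facing a vanishing left side, so the inequality is only usable when the two sums differ).

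Next I would cancel matched terms. By condition (1) each family consists of pairwise distinct values, and reading condition (2) in its natural form ``$a_j\ne b_i$ implies $|a_j-b_i|\ge 1$'' shows that every $a_j$ matches at most one $b_i$, namely when $a_j=b_i$. Cancelling all such matched pairs leaves disjoint sub-families $\{a'_1,\dots,a'_p\}$ and $\{b'_1,\dots,b'_q\}$ with $A-B=\sum_i e^{a'_i}-\sum_j e^{b'_j}$ and with the property that any two elements of $\{a'_i\}\cup\{b'_j\}$ are separated by at least $1$. Relabelling this combined collection in increasing order as $c_1<c_2<\dots<c_m$ with $m=p+q$ and signs $\sigma_k\in\{+1,-1\}$ indicating provenance, we have $A-B=\sum_{k=1}^m\sigma_k e^{c_k}$.

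Then I would apply the standard largest-exponential-dominates estimate. The gap condition gives $c_k\le c_m-(m-k)$ for $k<m$, so
\[
\sum_{k=1}^{m-1}e^{c_k}\;\le\; e^{c_m}\sum_{j=1}^{m-1}e^{-j}\;<\;\frac{e^{c_m}}{e-1},
\]
and the reverse triangle inequality yields $|A-B|\ge e^{c_m}\bigl(1-\tfrac{1}{e-1}\bigr)=\tfrac{e-2}{e-1}\,e^{c_m}$. When $m=1$ this is simply $|A-B|=e^{c_1}>1$ since $c_1\in\R_+$; when $m\ge 2$ the gap condition forces $c_m\ge c_1+(m-1)>1$, hence $e^{c_m}>e$ and $\tfrac{e-2}{e-1}\,e^{c_m}>\tfrac{e(e-2)}{e-1}>1$. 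Either way $|A-B|\ge 1$, which closes the argument.

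The only points that require care are the cancellation step, where one has to read condition (2) correctly and verify that after removing matched pairs the combined collection really is pairwise $1$-separated, and the edge-case split at $m=1$ versus $m\ge 2$, which is what makes the dominance constant $\tfrac{e-2}{e-1}$ clear the threshold $1$. There is no deep step in the argument, but the bound is tight enough that one cannot afford to be loose with these details.
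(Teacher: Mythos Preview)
Your proof is correct and follows essentially the same approach as the paper: reduce to $|A-B|\ge 1$ and use that the largest surviving exponential dominates the geometric tail of the rest, with the same constant $\tfrac{e-2}{e-1}$ and the same endgame $e^{c_m}>e$. The only cosmetic difference is that the paper sorts both sequences in decreasing order and looks at the first index where they disagree, whereas you cancel matched pairs first and then merge the leftovers into a single $1$-separated list; both lead to the identical estimate, and your explicit flag of the tacit hypothesis $A\ne B$ is appropriate.
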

\begin{proof}
    Just need to prove that $\sum_{i=1}^ne^{a_i}-\sum_{i=1}^ne^{b_i}>1$. Without loss of generality, let $a_i\ge a_{i+1}$ and $b_i\ge b_{i+1}$. Assume $k\in[n]$ is the minimum such that $a_k\ne b_k$, and let $a_k>b_k$. Then we have that: $\sum_{i=1}^n e^{a_i}-\sum_{i=1}^n e^{b_i}\ge e^{a_k}-\sum_{i=k}^n e^{b_i}\ge e^{a_k}-e^{b_k-n+k+1}\frac{e^{n-k}-1}{e-1}>e^{a_k}(1-1/(e-1))>e-e/(e-1)>1$, which is what we want.
\end{proof}

We now prove Proposition \ref{c1} for no-CoT transformers.
\begin{proof}
We follow the proof of Theorem \ref{th1}. And we still define $s_{x,i}$ as that in the proof of Theorem \ref{th1}.

{\bf Part One: Embedding}

This is the same as Part One in the proof of Theorem \ref{th1}.

{\bf Part Two: Use the three layers to calculate the position value.}

This is the same as Part two in the proof of Theorem \ref{th1}. But because there exists  no precision limitation for the transformer, we just need three layers.

{\bf Part Three.}

Define the set $V_j=\{V^{10^\alpha}_{a}(s_{x,i},j)\}_{(x,y)\in S,i\in[T]}$ where $j\in[L]$ and $V=\cup_{j\in[L]}V_j$.
Let $A\in\R_+$ satisfy $|Ax-Az|>1$ for any $x,z\in V$.

In this proof, we define matrix $M_3(x)$ which has the same size as $M_2(x)$ as:  the $(4i-3)$-th column of $M_3$ is $A(M_2(x))_{4i-2}$, where $(M_2(x))_{4i-2}$ is the $(4i-2)$-th column of $M_2(x)$; other columns are the same as $M_2(x)$.
It is easy  to see that a hidden layer with width $O(1)$ is enough to calculate $M_3(x)$ by $M_2(x)$.

Next, we use an FNN $f_1:\R\to\R$ to map $0$ and $1$ to $1$, but other values in $V$ to 0. It is easy  to see that such a network $f_1$ need only the $O(1)$ layers and $O(1)$ width.

Based on such FNN $f_1$ and  $M_3(x)$, we define a matrix $M_4(x)$ which has the same size as $M_3(x)$ as: the $(4i-1)$-th column of $M_4(x)$ is $f_1((M_3)_{4i-2})$, where $(M_3)_{4i-2}$ is the $(4i-2)$-th column of $M_3(x)$; other columns are the same as $M_3(x)$.

Then by Lemma  \ref{tsf},  we can use a transformer with width $O(1)$ and depth $O(1)$ to obtain  $M_4(x)$ by $M_3(x)$.

{\bf Part Four.}

In this part, we use two hidden layers to obtain a vector by $M_4(x)$, which is different from $x$.

In this layer, this attention layer has $T$ heads, and in the $i$-th head, $Q_i\in\R^{4T\times 4T}$ and the $(4i,4i-3)$-th weight of $Q_i$ is 1, and others are 0; $K_i=I$ and $V_i\in\{0,1\}^{4T\times4T}$ and the $(4i-1,4i-1)$-th weights of $V_i$ are 1 and others are 0.

The residual layer uses $W\in\{0,1\}^{4T\times 4T}$ which is defined as: for $i\in[T]$, $(4i-2,4i-2)$-th weights of $W$ is 1, others are 0. The $\FNN$ layer is 0.

Let the last row for the output of this layer be $M_l(x)$ when input $x$ with length $n$ to the transformer, then based on the definition of $M_4(x)$, we have that:

(1) The $(4i-2)$-th weight of $M_l(x)$ is $V_a^{10^\alpha}(s_{x,i},n)$.

(2) The $(4i-1)$-th weight of $M_l(x)$ is $\frac{\sum_{j=1}^ne^{AV^{10^\alpha}_{a}(s_{x,i},j)}f_1(V^{10^\alpha}_{a}(s_{x,i},j))}{\sum_{j=1}^ne^{AV^{10^\alpha}_{a}(s_{x,i},j)}}$.

We will prove that for all $(x,y),(z,y_z)\in S$ that do not satisfy $\typ(x)=\typ(z)=\gamma_l$ for some $l\in[T]$,
%
%(1): $\gamma_i\in \typ(x)$ and
%
%(2): $\gamma_i\ne \typ(z)$
%
we have $||M_l(x)-M_l(z)||_\infty>0$.

When $\typ(x)=\typ( z)$, the proof is similar to the proof of Theorem \ref{th1}.

When $\len(x)\ne \len(z)$ and $\typ(x)>2$, the proof is similar to the proof of Theorem \ref{th1}.

When $\len(x)=\len(z)$ and $\typ(x)>2$, let $i\in[T]$ satisfy $s_{x,i}\ne s_{z,i}$.
If $V_a^{10^\alpha}(s_{x,i},\len(x))\ne V_a^{10^\alpha}(s_{z,i},\len(z))$, then we have the same result as before;
if not, by Lemma \ref{yyl}, we know that there exist the same number of $1$ or $0$ in $\{V_a^{10^\alpha}(s_{x,i},j)\}_{j=1}^{\len(z)}$ and $\{V_a^{10^\alpha}(s_{z,i},j)\}_{j=1}^{\len(z)}$. Based on the definition of $f_1$, we know that
$$\sum_{j=1}^ne^{AV^{10^\alpha}_{a}(s_{x,i},j)}f_1(V^{10^\alpha}_{a}(s_{x,i},j))=\sum_{j=1}^ne^{AV^{10^\alpha}_{a}(s_{z,i},j)}f_1(V^{10^\alpha}_{a}(s_{z,i},j))\ne 0.$$
Hence, based on Lemmas \ref{ex} \ref{bt} and \ref{yyl}, and considering the definition of $A$, we know that $|\frac{1}{{\sum_{j=1}^ne^{AV^{10^\alpha}_{a}(s_{x,i},j)}}}-\frac{1}{{\sum_{j=1}^ne^{AV^{10^\alpha}_{a}(s_{z,i},j)}}}|>0$, so we prove the result.

{\bf Part Five.}
Quite similar to Part Five in the proof of Theorem \ref{th1}.  But because there exists  no precision limit on the transformer, we just need $O(N)$ layers as shown in Lemma \ref{ccd}.
\end{proof}

\subsubsection{Proof of Proposition \ref{c1} for CoT transformers}
In this section, we give the proof of Proposition \ref{c1} for CoT transformers.
\label{wllbb}
\begin{proof}
The proof is based on the proof of Theorem \ref{th2}.

{\bf Part One.}

We need only to change the definition of $S_{ss}$.

For each $(x,y)\in S$, we put an $x_z \in S_x$ satisfying the following conditions into set $S_{ss}=\{x_z:(x,y)\in S \}$:

(c1) for any $k$, the symbols $x_z [\len(x)+1]$ are the same for all $(x,y)\in S$ such that $\typ(x)=\{\gamma_k\}$;

(c2) $\len(x_z)-\len(x)\le N+1$.

We just need to consider (c2).

For a $(x,y)\in S$ such that $|\typ(x)|>1$. Let $x_z \in S_x$ and $\len(x_z)$ be the minimum. If $\len(x_z)-\len(x)>N$, then there exists  a $\len(x)<l<\len(z)$ such that there exists  no $(z,y_z)\in S$ such that $y_z\ne y$ and $\len(z)=l$.

So we consider a sentence $x_l=((x_z)_{[l-1]},y)$. It is easy to check that $x_l\in S_x$ and $\len(x_l)<\len(x_z)$ which are contradictory to the minimum of $\len(x_z)$. So $\len(x_z)-\len(x)\le N$.

For a $(x,y)\in S$ such that $\typ(x)=\{\gamma_i\}$. By the condition of Theorem \ref{th2}, let $\gamma_j\in \cap_{(x,y)\in S, \typ(x)=\{\gamma_i\}} S^1_{x}$.

Then we let $x_z \in S_x$ satisfy that $x_z [\len(x)+1]=\gamma_j$ and $\len(x_z)$ be the minimum. Similar to the above, we can show that $\len(x_z)-(\len(x)+1)\le N$. So we get the result.

{\bf Parts two and three.} These two parts are similar to the proof of Theorem \ref{th2}. Considering that the $S_n$ constructed by $S_{ss}$ defined in Part One has at most $O(N^2)$ samples in it, and using the result in Section \ref{wno},  we prove the result.
\end{proof}

\subsection{Proof of Theorem \ref{th5}}
\label{5t}
\subsubsection{Proof of Theorem \ref{th5} for no-CoT-transformers}
\label{5t1}
Firstly, we define the following language.

Let $\Gamma=\{\gamma_1,\gamma_2\}$, and $S_i$ be a sub-language of $\LCP$ with 2 samples $(x^i_j,y^i_j)$ where $j\in\{1,2\}$ in it. We define that $x^i_1$ is a sentence with length $i+1$, and the $i$-th element of $x_1^i$ is $\gamma_2$, the other elements are $\gamma_1$; the $x_2^i$ only contains $\gamma_1$ with length $i$; the labels are decided by $\LCP$.
% $y^i_0=\gamma_1$ and $y^i_1=\gamma_2$.
%\begin{proposition}
%    For any given $P,q\in\Z_+$, there exist an $i$, such that $S_i$ cannot be memorized by any no-CoT-transformer with the precision $q$ and with not more than $P$ parameters.
%\end{proposition}

Now we can prove Theorem \ref{th5} for no-CoT-transformers, we show that for any $q$ and $P$, there exists  an $i$ such that $S_i$ cannot be memorized by any transformer in $H^q_{P,P,P}$.

\begin{proof}
Assuming that for a pair of $P,q$, each $i\in\Z_+$, $S_i$ can be memorized by a non-CoT transformer in $H^q_{P,P,P}$, we can derive contradictions.

{\bf First,} let $\F(x)_i$ be the $i$-th weight of $\F(x)$, then let $\min_{F\in H^q_{P,P,P}}|I(F(\gamma_1)_1=F(\gamma_1)_2)+(F(\gamma_1)_1-F(\gamma_1)_2)|=\epsilon$. Following the proof of Theorem \ref{th4}, we know that there exist finite varieties of different transformers in $H^q_{P,P,P}$, so $\epsilon>0$.

If $S_i$ is memorized by $\F\in H^q_{P,P,P}$, then by Proposition \ref{prop-nec31}, we have $\F(x^i_2)=F(\gamma_1)$ for any $i\in\Z_+$, so $|\F(x^i_2)_1-\F(x^i_2)_2|=|F(\gamma_1)_1-F(\gamma_1)_2|\ge\epsilon$.

{\bf We will prove that for any given transformer $\F\in H^q_{P,P,P}$, it holds $||F(x^i_1)-F(x^i_2)||_1\to0$ when $i\to \infty$.}
%, which can directly get our result.}

If $||\F(x^i_1)-\F(x^i_2)||_1\to0$ when $i\to \infty$ for any given $\F\in H^q_{P,P,P}$, then because there exist finite varieties of different transformers in $H^q_{P,P,P}$. So there exists an $i$ satisfying that $||\F(x^i_1)-\F(x^i_2)||_1<\epsilon/3$ for any $\F\in H^q_{P,P,P}$, which implies
\begin{align*}
\F(x^i_1)_1-\F(x^i_1)_2
\ge&\F(x^i_2)_1-|\F(x^i_2)_1-\F(x^i_1)_1|-(\F(x^i_2)_2+|\F(x^i_2)_2-\F(x^i_1)_2|)\\
\ge& \F(x^i_2)_1- \F(x^i_2)_2-2\epsilon/3.\\
\end{align*}
Similarly, it holds $\F(x^i_2)_1- \F(x^i_2)_2+2\epsilon/3\ge \F(x^i_1)_1-\F(x^i_1)_2$.
Since $|\F(x^i_2)_1- \F(x^i_2)_2|\ge\epsilon$,
any $\F\in H^q_{P,P,P}$ will give the same label  of $x^i_1$ and $x^i_2$. Considering that $x^i_2$ and $x^i_1$ have different lengths, by the definition of $\LCP$ language, they have different labels. This is contradictory to the assumption and directly gets the result we want.

To show that, we need three parts, assume $\F$ is a transformer in $H_{P,P,P}^q$. Let $||\cdot||_{1,\infty}$ be the maximum $L_1$ norm of the row in a matrix.

    {\bf Part One: For any $j$, in the $j$-th hidden layer $\F_j$ of $\F$, the first $i-1$ rows of $\F_j(x_1^i)$ and $\F_j(x_2^i)$ are equal to that $j$-th hidden layer in $\F(\gamma_1)$}.

    This can be proved by using Lemma \ref{qn} and Proposition \ref{prop-nec31}.

    {\bf Part Two: For any $i,j$ and $x\in S_i$, in the $j$-th hidden layer $\F_j$ of $\F$, $||F_j(x)||_{1,\infty}\le M_{P,q,j}$, where $M_{P,q,j}$ is a value that does not rely on $i$ but only depends on $P,q,j$. }

In the $j$-th hidden layer, if the input $z$ of the $j$-th hidden layer, which is also the output of the $(i-1)$-th hidden layer, satisfies that each row has the $L_1$ norm not more than $A$, then we can show that the output norm of the $j$-th hidden layer also depends only on $A$ and $P,q$. To show that, we just need to consider the attention layer, $\FNN$ and the residual layer respectively.

In the attention layer $\ATT(z)=\sum_{i=1}^P \softmax(zQ_iV_iz^T+M)zK_i$, since the $L_1$ norm of each row of $\softmax(zQ_iV_iz^T+M)$ is 1, and we limited the precision of the transformer, it holds that each parameter in $K_i$ is not more than $10^q$. So we can show that each row of $\ATT(z)$ will have a $L_1$ norm no more than $P(A\times P^2\times 10^q)$.

Upon the residual layer $zw$, similar to before, each row has a $L_1$ norm of not more than $A\times P^2\times 10^q$.

    In the $\FNN$ layer $\FNN(z+\ATT(z))$, each row of $Z+\ATT(z)$ goes through two transformer matrices, so the $L_1$ norm of each row has at most $O((P^210^q)^2||z+\ATT(z)||_{1,\infty})$. $||\cdot||_{1,\infty}$ means the maximum $L_1$ norm of the row in a matrix.

    Adding them, the $L_1$ norm for each row of the output of the $j$-th hidden layer is at most $O(P(10^qP^2)^3A)$.

If we take $A=M_{P,q,j-1}$, then we know that $M_{P,q,j}\le O(P(10^qP^2)^3 M_{P,q,j-1})$. Considering that the input of  the first hidden layer is only bound by $P$ and $q$, we have the result.

{\bf Part Three: For any $j$, in the $j$-th hidden layer $\F^j$ of $\F$, $||(F^j(x_1^i))_{i+1}-F^j(x_2^i)_i||\to 0$ when $i\to\infty$.} This directly leads to our result.

To be convenient, we write $\F^{j}(x_1^i)$ as $f(j,1,i)$ and $\F^{j}(x_2^i)$ as $f(j,2,i)$, and $f^k(*,*,*)$ means the $k$-row of $f(*,*,*)$. And let $||f^{i+1}(j-1,1,i)-f^{i}(j-1,2,i)||_1=\eta(j-1,i)$

 For convenience, we define the input of the first hidden layer as the output of the $0$-th hidden layer (i.e., $j=0$) here. Consider that for any $i\in\Z_+$, the $i+1$ row of $x^1_i$ and the $i$ row of $x^2_i$ are the same, so $\eta(0,i)=0$ for the first hidden layer. We will prove that, if $\eta(j-1,i)$ satisfies $\eta(j-1,i)=0$ when $i\to\infty$, then $\eta(j,i)$ also tends to 0 when $i\to\infty$.

Note that $f(j,1,i)$ can be calculated as: $f(j-1,1,i)W_{j-1}+\ATT(f(j-1,1,i))+\FNN(f(j-1,1,i)W_{j-1}+\ATT(f(j-1,1,i)))$.

Firstly, for the residual layer, we have $||(f^{i+1}(j-1,1,i)-f^i(j-1,2,i))W_{j-1}||_1\le 10^qP^2||f^{i+1}(j-1,1,i)-f^{i}(j-1,2,i)||_1=10^qP^2\eta(j-1,i)$. Based on the assumption of $\eta(j-1,i)$, when $i\to \infty$, it holds  $||(f^{i+1}(j-1,1,i)-f^i(j-1,2,i))W_{j-1}||_1\to0$.

Secondly, by part one and Proposition \ref{prop-nec31}, the first $i-1$ rows of $f(j-1,1,i)$ and all rows of $f(j-1,2,i)$ are the same as $\F^{j-1}(\gamma_1)$. So in a head of attention layer in $\F_j$, written as $\softmax(XQVX^T)XK$, we have that:

\begin{equation*}
    \begin{array}{cl}
         & (\softmax(f(j-1,1,i)QVf(j-1,1,i)^T+M)f(j-1,1,i))_{i+1}\\
        = &\frac{(i-1)e^sf^i(j-1,2,i)}{(i-1)e^s+e^v+e^w}+\frac{e^vf^{i}(j-1,1,i)}{(i-1)e^s+e^v+e^w}+\frac{e^wf^{i+1}(j-1,1,i)}{(i-1)e^s+e^v+e^w},
    \end{array}
\end{equation*}
 where $s=f^{i+1}(j-1,1,i)QV(f^{i-1}(j-1,1,i))^T,v=f^{i+1}(j-1,1,i)QV(f^{i}(j-1,1,i))^T,w=f^{i+1}(j-1,1,i)QV(f^{i+1}(j-1,1,i))^T$.

Considering that $(\softmax(f(j-1,2,i)Q_iV_if(j-1,2,i)^T)f(j-1,2,i))_{i}=f^i(j-1,2,i)$, let $-(\softmax(f(j-1,2,i)QVf(j-1,2,i)^T)f(j-1,2,i))_{i}+(\softmax(f(j-1,1,i)QVf(j-1,1,i)^T)f(j-1,1,i))_{i+1} =c(i)$, we have that:
\begin{equation*}
 \renewcommand{\arraystretch}{1.5}
    \begin{array}{cl}
          c(i)
     &=     \frac{(i-1)e^sf^i(j-1,2,i)}{(i-1)e^s+e^v+e^w}+\frac{e^vf^{i}(j-1,1,i)}{(i-1)e^s+e^v+e^w}+\frac{e^wf^{i+1}(j-1,1,i)}{(i-1)e^s+e^v+e^w}-f^i(j-1,2,i)\\
     &=\frac{-(e^v+e^w)f^i(j-1,2,i)}{(i-1)e^s+e^v+e^w}+\frac{e^vf^{i}(j-1,1,i)}{(i-1)e^s+e^v+e^w}+\frac{e^wf^{i+1}(j-1,1,i)}{(i-1)e^s+e^v+e^w}\\
    \end{array}
\end{equation*}

Consider that the transformer has precision $q$ and part two, so we have $\frac{e^v+e^w}{(i-1)e^s+e^v+e^w}\le \frac{2e^{M^2_{P,q,j-1}P^210^{2q}}}{2e^{M^2_{P,q,j-1}P^210^{2q}}+(i-1)e^{-M^2_{P,q,j-1}P^210^{2q}}}$, similar to $\frac{e^v}{(i-1)e^s+e^v+e^w}$ and $\frac{e^w}{(i-1)e^s+e^v+e^w}$. Hence, by part two, we have that $||c(i)||_1\le O(M_{P,q,j-1}\frac{e^{P^210^{2q}M^2_{P,q,j-1}}}{2e^{P^210^{2q}M^2_{P,q,j-1}}+(i-1)e^{-P^210^{2q}M^2_{P,q,j-1}}})$, which tends to 0 when $i\to\infty$. And for the whole attention layer with $P$ heads and matrix $K$, we have $(\ATT(f(j-1,1,i)))_{i+1}-(\ATT((j-1,2,i)))_i\le 10^qP^3||c(i)||_1$, which is a value that tends to 0 when $i\to\infty$.

Finally, about the $\FNN$ layer, similar to Part two, for any given two vectors $x_1$ and $x_2$, we have that $\FNN(x_1)-\FNN(x_2)\le O((10^qp^2)^2||x_1-x_2||_1)$. Consider that we have proved that $||f^{i+1}(j-1,1,i)W_{j-1}-f^i(j-1,2,i)W_{j-1}||_1\to0$ and $(\ATT(f(j-1,1,i)))_{i+1}-(\ATT((j-1,2,i)))_i\to 0$  when $i\to\infty$, so it holds  $\FNN(f^{i+1}(j-1,1,i)W_{j-1}+(\ATT(f(j-1,1,i)))_{i+1})-\FNN(f^{i}(j-1,2,i)W_{j-1}+(\ATT(f(j-1,2,i)))_{i})\to0$ when $i\to\infty$.

Adding the results of such three parts in the $j$-th hidden layer, we can obtain the result.
\end{proof}

\subsubsection{Proof of Theorem \ref{th5} for CoT-transformers}
\label{bis}
For the CoT-transformer, we consider the following example.

Let $\Gamma=\{\gamma_1,\gamma_2,\gamma_3,\gamma_4\}$, and $S'_i$ a sub-language of $\LCP$ with 10 samples $(x^i_j,y^i_j)$ where $j\in[10]$ in it. We define that:

(1) $x^i_1$ is a sentence with length $i$ and all elements in $x^i_1$ are $\gamma_1$;

(2) $x^i_j$ ($j=2,3,4$) is a sentence of length $i+1$ and the first $i$ elements of $x^i_1$ are $\gamma_1$, the last one is $\gamma_{j-1}$;

(3) $x^i_j$ ($j=5,6,7$) is a sentence of length $i+2$ and the first $i$ elements of $x^i_1$ are $\gamma_1$, the $(i+1)$-th element is $\gamma_4$, and the last is $\gamma_{j-3}$;

(4) $x^i_j$ ($j=8,9,10$) is a sentence with length $i+3$ and the first $i$ elements of $x^i_1$ are $\gamma_1$, the $(i+1)$-th element is $\gamma_4$, the $(i+2)$-th element is $\gamma_1$, and the last one is $\{\gamma_1,\gamma_2,\gamma_4\}$, respectively.

Their labels are decided by $\LCP$. Then we can prove Theorem \ref{th5} for CoT-transformers. We show that for any $q$ and $P$, there exists  an $i$ such that $S'_i$ cannot be memorized by any transformer in $H^q_{P,P,P,cot}$.

\begin{proof}
Assume that for a pair of $P,q$, such that any $i\in\Z_+$, $S_i'$ can be memorized by $\F\in H^q_{P,P,P,cot}$, we can derive contradictions.

Firstly, if $S_i'$ can be memorized by an $\F\in H^q_{P,P,P,cot}$, we can show that the CoT created by $\F(x^i_1)$ is $(\gamma_4,\gamma_1,\gamma_3,\dots)$ for any $i\in\Z_+$.

Let the first symbol of the CoT created by $\F(x^i_1)$ be $\gamma_j$.
If $j\in[3]$, then  it holds  $(x^i_1,\gamma_j)=x^i_{j+1}$,
which implies $\F((x^i_1,\gamma_j))=F(x^i_{j+1})$, so that $\F$ will give the same label to $x^i_1$ and $x^i_{j+1}$.
Because $x_1^i$ and $x_{j+1}^i$ have different lengths, based on the definition of $\LCP$, they must have different labels.
This is contradictory to $\F$ memorizes $S'_i$.
So, we have proved that the first symbol of CoT created by $\F(x^i_1)$ is not $\gamma_1,\gamma_2,\gamma_3$. Thus, the first symbol of the CoT created by $\F(x^i_1)$ is $\gamma_4$. For the second and third symbols in CoT, in a similar way, we can show that the CoT created by $\F(x^i_1)$ is $(\gamma_4,\gamma_1,\gamma_3,\dots)$.

Secondly, by the definition of CoT-transformers and such CoT, we know that $\widehat{F}(x_1^i)=\gamma_4$ and $\widehat{F}((x_1^i,\gamma_4,\gamma_1))=\gamma_3$.
Moreover, because $x^i_1$ is a sentence with length $i$ and all elements in $x^i_1$ are $\gamma_1$, so by the Proposition \ref{prop-nec31}, we have $\widehat{F}((x_1^i,\gamma_1))=\widehat{F}(x_1^i)=\gamma_4$, which implies $\F$ can memorize $\{((x_1^i,\gamma_1),\gamma_4),((x_1^i,\gamma_4,\gamma_1),\gamma_3)\}$ as a no-CoT-transformer.

Therefore, based on the assumption, we can deduce that for any $i$, the language $\{((x_1^i,\gamma_1),\gamma_4),((x_1^i,\gamma_4,\gamma_1),\gamma_3)\}$ can be memorized by a $\F\in H^q_{P,P,P,cot}$. So, similar to the proof of no-CoT-transformers, we prove the result.
\end{proof}

\subsection{Proof of Corollary \ref{prop-55}}
Let $S_i$ be defined as in Section \ref{5t1}, and $S_a=\cup_{i\in\Z_+} S_{2i}\subset \LCP$. Let $S'_i$ be defined as in Section \ref{bis}, and $S'_a=\cup _{i\in\Z_+}S'_{4i}\subset \LCP$. It is easy  to check that $S_a$($S'_a$) satisfies the conditions in Theorem \ref{th1}(\ref{th2}).

\begin{proof}
{\bf 1. For no-CoT-transformer.} If a no-CoT-transformer $\F$ can memorize $S_a$, then we can prove that the result within two parts, which is  similar to the proof of Theorem \ref{th5}.

    {\bf Part One:} For any $x_2^{2i}\in S_a$,
 there exists  an $\epsilon>0$ such that $|\F(x^{2i}_2)_1-\F(x^{2i}_2)_2|\ge\epsilon$ for any $i\in\Z_+$.

    By Lemma \ref{prop-nec31}, $|\F(x^{2i}_2)_1-\F(x^{2i}_2)_2|=|F(\gamma_1)_1-F(\gamma_1)_2|$ for any $i\in\Z_+$. If $F(\gamma_1)_1=F(\gamma_1)_2$, then $\F$ cannot memorize $x^{2i}_2$, which is a contradiction with $\F$ can memorize $S_a$. So there exists  an $\epsilon\in\R_+$ such that $\epsilon=|F(\gamma_1)_1-F(\gamma_1)_2|=|\F(x^{2i}_2)_1-\F(x^{2i}_2)_2|$ for any $i\in\Z_+$.

    {\bf Part Two:} We have $||\F(x^{2i}_1)-\F(x^{2i}_2)||_1\to0$ when $i\to \infty$, which leads to our result. Consider that the value of parameters in $\F$ has an upper bound and a lower bound, so the proof of this part is similar to that in the proof of Theorem \ref{th5}, and we omit it.

%    If a no-CoT-transformer $\F$ can memorize $S_a$,  we can prove that the result within two parts, which is  similar as the proof of Theorem \ref{th5}.

 {\bf 2. For CoT-transformer.}   If a CoT-transformer $\F$ can memorize $S'_a$, similar to that in the proof of Theorem \ref{th5}, for any $x_1^{4i}\in S'_a$, we know that the CoT created by $\F(x^{4i}_1)$ is $(\gamma_4,\gamma_1,\gamma_3,\dots)$ for any $i\in\Z_+$, which implies that $\F$ can memorize $\{((x_1^{4i},\gamma_1),\gamma_4),((x_1^{4i},\gamma_4,\gamma_1),\gamma_3)\}$ for any $i\in\Z_+$ as a no-CoT-transformer. So, similar to before, we can obtain the result.
\end{proof}

\subsection{Proof of Proposition \ref{prop-59}}

By Theorems \ref{th1} and \ref{th2}, (1) in the proposition is apparent.
So, we just need to find a sentence in $\Arp$ such that no-CoT- or CoT-transformers can solve it with confidence $c$.

Note that for $\Arp$, $\gamma_i=i$ when $i\in[p]$, $\gamma_{p+1}$ is the symbol $=$.

\subsubsection{Proof of Proposition \ref{prop-59} for no-CoT transformers}
\label{sz1}

\begin{proof}
Let $C_i$ be the sentence $'1+1+1+\dots+1'$ in $\Arpn$, where there exist $i$ 1s in it. It is easy  to see that $C_i$ has label $i\mod p$. We will show that, for any no-CoT-transformer $\F$, $\F$ cannot memorize some of these sentences with confidence $c$.

Let $\F$ be a no-CoT-transformer and $\F^l$ be the $l$-th hidden layer of $\F$.

{\bf Firstly, we show that there exists a vector $C$ such that $||\F(C_i)-C||_2\to 0$ when $i\to\infty$.}

Assume that the output of $\F^k$ when input $C_i$ to the transformer can be written as $l^k(i)$, and $l_j^k(i)$ means the $j$-th row of $l^k(i)$.

Assume that $\F^k$ can be written as
\begin{align*}
\F^1(x)=xw^k&+\sum_{j=1}^H\softmax(xQ_j^kV_j^kx^T+M)xK_j^k\\
&+\FNN(xw^k+\sum_{j=1}^H\softmax(xQ_j^kV_j^kx^T+M)xK_j^k).\\
\end{align*}
We first consider $\F^1$. Let $v_{+}$ be the embedding vector of symbol $+$ and let $v_{1}$ be the embedding vector of symbol $1$. We have that: $l^1_{2i-2}(i)=v_{+}w^1+\sum_{j=1}^H\frac{e^{s_j}v_{+}+e^{t_j}v_{1}}{e^{s_j}+e^{t_j}}K_j^1+\FNN(v_{+}w+\sum_{j=1}^H\frac{e^{s_j}v_{+}+e^{t_j}v_{1}}{e^{s_j}+e^{t_j}}K_j^1)$, where $s_j=v_{+}Q_j^1V_j^1v^T_{+}$ and $t_j=v_{+}Q_j^1V_j^1v^T_{1}$. And we have $l^1_{2i-1}(i)=v_{1}w+\sum_{j=1}^H\frac{(i-1)e^{s'_j}v_{+}+ie^{t'_j}v_{1}}{(i-1)e^{s'_i}+ie^{t'_i}}K_j+\FNN(v_{1}w+\sum_{j=1}^H\frac{(i-1)e^{s'_j}v_{+}+ie^{t'_j}v_{1}}{(i-1)e^{s'_j}+ie^{t'_j}}K_j)$, where $s'_j=v_{1}Q_j^1V_j^1v^T_{+}$ and $t'_j=v_{1}Q_jV_jv^T_{1}$.

It is easy  to see that $l^1_{2i-2}(i)$ does not depend on $i$, and $l^1_{2i-1}(i)$ satisfies $l^1_{2i-1}(i)\to v_{1}w+\sum\frac{e^{s'_j}v_{+}+e^{t'_j}v_{1}}{e^{s'_i}+e^{t'_i}}K_j^1+\FNN(v_{1}w+\sum\frac{e^{s'_j}v_{+}+e^{t'_j}v_{1}}{e^{s'_j}+e^{t'_j}}K_j^1)$ when $i\to \infty$.

By the above result, we can see that the last two rows of $l^1(i)$ converge to a vector separately when $i\to\infty$. Then we will show that if $l^{k-1}_{2i-2}(i)$ and $l^{k-1}_{2i-1}(i)$ satisfy that converge to a vector separately when $i\to\infty$, it also stands for $l^{k}_{2i-2}(i)$ and $l^{k}_{2i-1}(i)$.

%Assume that $\F^k$ can be written as %$\F^k(x)=xw^k+\sum_{j=1}^H\softmax(xQ^k_jV^k_jx^T+M)xK^k_j+\FNN(xw^k+\sum_{j=1}^H\softmax(xQ^k_jV^k_jx^T+M)xK^k_j)$.

For $\F^k$,  we have that: $l^k_{2i-2}(i)=l^{k-1}_{2i-2}(i)w^k+\sum_{j=1}^H\frac{\sum_{p=1}^{2i-2}e^{s^i_p(j)}l^{k-1}_{p}(i)}{\sum_{p=1}^{2i-2}e^{s^i_p(j)}}K^k_j+\FNN(l^{k-1}_{2i-2}(i)w^k+\sum_{j=1}^H\frac{\sum_{p=1}^{2i-2}e^{s^i_p(j)}l^{k-1}_{p}(i)}{\sum_{p=1}^{2i-2}e^{s^i_p(j)}}K^k_j)$, where $s^i_p(j)=l^{k-1}_{2i-2}(i)Q^k_jV^k_j(l^{k-1}_{p}(i))^T$.

We will show that $l^k_{2i-2}(i)$ will converge to a vector, similar for $l^k_{2i-1}(i)$, so we can prove our conclusion.

Firstly, without loss of generality, assume that $\lim_{i\to\infty}l^{k-1}_{2i-2}(i)=\vec_1$ and $\lim_{i\to\infty}l^{k-1}_{2i-1}(i)=\vec_2$.

Secondly, by Lemma \ref{qn}, we have $l^{k-1}_p(i)=l^{k-1}_p([\frac{p+2}{2}])$ when $i\ge [\frac{p+2}{2}]$, so $\sum_{j=1}^H\frac{\sum_{p=1}^{2i-2}e^{s^i_p(j)}l^{k-1}_{p}(i)}{\sum_{p=1}^{2i-2}e^{s^i_p(j)}}K^k_j=\sum_{j=1}^H\frac{\sum_{p=1}^{2i-2}e^{s^i_p(j)}l^{k-1}_{p}([\frac{p+2}{2}])}{\sum_{p=1}^{2i-2}e^{s^i_p(j)}}K^k_j$ and $s^i_p(j)=l^{k-1}_{2i-2}(i)Q^k_jV^k_j(l^{k-1}_{p}(i))^T=l^{k-1}_{2i-2}(i)Q^k_jV^k_j(l^{k-1}_{p}([\frac{p+2}{2}]))^T$.

Note that $s_{2p}^i(j)=l^{k-1}_{2i-2}(i)Q_jV_j(l^{k-1}_{2p}(p+1))^T$ tends to $t^j_1=\vec_1Q_jV_j(\vec_1)^T$ when $p,i\to\infty$ and $s_{2p+1}^i(j)=l^{k-1}_{2i-2}(i)Q_jV_j(l^{k-1}_{2p+1}(p+1))^T$ tends to $t^j_2=\vec_1Q_jV_j(\vec_2)^T$ when $i,p\to\infty$. Using Lemma \ref{ssl1}, we have that $\lim_{i\to\infty} \frac{\sum_{p=1}^{2i-2}e^{s^i_p(j)}l^{k-1}_{p}([\frac{p+2}{2}])}{\sum_{p=1}^{2i-2}e^{s^i_p(j)}}=\frac{e^{t^j_1}\vec_1+e^{t^j_2}\vec_2}{e^{t^j_1}+e^{t^j_2}}$. So $\lim_{i\to\infty}l^k_{2i-2}(i)=\vec_1w^k+\sum_{j=1}^H\frac{e^{t^j_1}\vec_1+e^{t^j_2}\vec_2}{e^{t^j_1}+e^{t^j_2}}K_j^k+\FNN(\vec_1w^k+\sum_{j=1}^H\frac{e^{t^j_1}\vec_1+e^{t^j_2}\vec_2}{e^{t^j_1}+e^{t^j_2}}K_j^k)$, and we get the result.

Finally, considering that the output of the transformer is only dependent on the the last row in the last hidden layer (i.e. $l^{L}_{2i-1}(i)$ when input $C_i$ to the transformer), which has a limitation when $i\to\infty$. So, we know that $\F(C_i)$ has a limitation when $i\to\infty$.

{\bf Secondly, we prove the result.}

Let $C[k]$ be the $k$-th weight of $C$. Then if $C[k]> C[j]$ for some $k,j\in[p]$, then by the above result $\F(C_i)\to C$ when $i\to\infty$, there exists an $i_0$ such that for any $i>i_0$, it holds  $(F(C_i))_k> (F(C_i))_j$. We take an $i_1$ such that $i_1\mod p=j$ and $i_1>i_0$. Then we have that $\F$ will not give $C_{i_1}$ the correct label $j$, because $(F(C_{i_1}))_k> (F(C_{i_1}))_j$, which implies $\F$ cannot memorize the $\Arpn$.

If $C[1]=C[2]=C[3]=\dots C[P]$, then there exists an $i_0$ such that when $i>i_0$, we have $|(F(C_i))_k-C[k]|<c/2$, which implies that
$|(F(C_{i}))_k-(F(C_{i}))_j|<c$ for any $i>i_0$ and $k\ne j$, so $\F$ cannot memorize $\Arpn$ with confidence $c$.
%So we have that
%\begin{equation*}
%    \begin{array}{cc}
 %        &\lim_{i\to\infty} \frac{\sum_{p=1}^{2i-2}e^{s^i_p}l^{k-1}_{p}([\frac{p+2}{2}])}{\sum_{p=1}^{2i-2}e^{s^i_p}} \\
 %       = & \lim_{i\to\infty} \frac{\sum_{p=1}^{M-1}e^{s^i_p}l^{k-1}_{p}([\frac{p+2}{2}])}{\sum_{p=1}^{2i-2}e^{s^i_p}}+\lim_{i\to\infty} \frac{\sum_{p=M}^{2i-2}e^{s^i_p}l^{k-1}_{p}([\frac{p+2}{2}])}{\sum_{p=1}^{2i-2}e^{s^i_p}}\\
%    \end{array}
%\end{equation*}
%
%For the $ \frac{\sum_{p=1}^{M-1}e^{s^i_p}l^{k-1}_{p}([\frac{p+2}{2}])}{\sum_{p=1}^{2i-2}e^{s^i_p}}$. Because $|s^i_p|=|l^{k-1}_{2i-2}(i)Q_jV_jl^{k-1}_{p}([\frac{p+2}{2}])|\le(1+||\vec_1||_2)||Q_jV_jl^{k-1}_{p}([\frac{p+2}{2}])||_2$ when $i\to \infty$, so the numerator will not more than $\sum_{p=1}^{M-1}e^{(||\vec_1||_2+\epsilon)||Q_jV_jl^{k-1}_{p}([\frac{p+2}{2}])||_2}l^{k-1}_{p}([\frac{p+2}{2}])$ when $i\to\infty$, which is a constant not depend on $i$. Because $e^{s^i_p}$ tend to $e^{t_1}$ when $i$ is a even and tend to $e^{t_2}$ when $i$ is a even, and $e^{s^i_p}$ must bigger than $0$, so the denominator tend to $\infty$ when $i\to\infty$. So $\lim_{i\to\infty} \frac{\sum_{p=1}^{M-1}e^{s^i_p}l^{k-1}_{p}([\frac{p+2}{2}])}{\sum_{p=1}^{2i-2}e^{s^i_p}}=0$.
\end{proof}

The above proof needs the following lemma.
\begin{lemma}
\label{ssl1}
    If a sequence of positive real numbers $\{x_i\}$ satisfies that $\lim_{p\to\infty} x_{2p}=a>0$ and $\lim_{p\to\infty} x_{2p+1}=b>0$, and a sequence of vectors $\{z_i\}$ satisfies that $\lim_{p\to\infty} z_{2p}=v_c$ and $\lim_{p\to\infty} z_{2p+1}=v_d$, then $\lim_{p\to\infty} \frac{\sum_{j=1}^px_jz_j}{\sum_{j=1}^p x_j}=(av_c+bv_d)/(a+b)$.
\end{lemma}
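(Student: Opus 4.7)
The plan is to handle the numerator and denominator separately by splitting each sum into its even-indexed and odd-indexed subsums, and then invoking the classical Ces\`aro mean theorem on each piece. Since the claim is a limit of a vector-valued expression, I would first observe that it suffices to prove the statement componentwise, so without loss of generality $z_j$ can be treated as a scalar.

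For the denominator, writing $p = 2m$ (the case $p = 2m+1$ is handled by absorbing one extra bounded term), I would split $\sum_{j=1}^{2m} x_j = \sum_{k=1}^{m} x_{2k} + \sum_{k=1}^{m} x_{2k-1}$. Because $x_{2k} \to a$, the Ces\`aro mean theorem yields $\frac{1}{m}\sum_{k=1}^{m} x_{2k} \to a$; analogously $\frac{1}{m}\sum_{k=1}^{m} x_{2k-1} \to b$. Adding the two gives $\frac{1}{m}\sum_{j=1}^{2m} x_j \to a+b$, which is strictly positive since $a,b > 0$.

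For the numerator, the products satisfy $x_{2k} z_{2k} \to a v_c$ and $x_{2k-1} z_{2k-1} \to b v_d$ by the product-of-limits rule. Applying Ces\`aro once more to each parity yields $\frac{1}{m}\sum_{j=1}^{2m} x_j z_j \to a v_c + b v_d$. Dividing the numerator limit by the denominator limit and using $a+b > 0$ produces the required value $(a v_c + b v_d)/(a+b)$ along the even subsequence $p = 2m$. For $p = 2m+1$, the extra term $x_{2m+1} z_{2m+1}$ is bounded (being a convergent sequence), hence contributes $O(1/m)$ after normalization by $m$ and does not affect the limit; both parity subsequences therefore converge to the same value, giving the full limit.

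No serious obstacle is expected: the argument is essentially Ces\`aro on two interleaved subsequences. The only mild care needed is to handle both parities of $p$ uniformly and to apply the scalar Ces\`aro theorem componentwise when $z_j$ is a vector, together with the observation that $a, b > 0$ is crucial to guarantee $a + b \ne 0$ so that the ratio is well-defined in the limit.
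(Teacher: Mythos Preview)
Your proposal is correct. The approach differs from the paper's in a noteworthy way: you invoke the Ces\`aro mean theorem as a black box on each parity subsequence and then take the quotient, whereas the paper gives a direct $\epsilon$-argument. Specifically, the paper fixes $\epsilon>0$, chooses an index $i(\epsilon)$ beyond which all four sequences are within $\epsilon$ of their limits, splits $\sum_{j=1}^p$ into an initial block $\sum_{j<i(\epsilon)}$ (shown to be negligible since the denominator grows like $p(a+b)/2$) plus a tail $\sum_{j\ge i(\epsilon)}$, and then bounds the tail ratio by a squeeze argument using $\frac{v_1^p(a\pm\epsilon)}{v_1^p(a\mp\epsilon)+v_2^p(b\mp\epsilon)}$ with $v_1^p,v_2^p$ the counts of even and odd indices in the tail. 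Your route is shorter and more transparent because the Ces\`aro step packages exactly the $\epsilon$-bookkeeping the paper does by hand; the paper's route has the minor advantage of being fully self-contained. Both arguments use $a,b>0$ in the same essential place (to ensure the denominator is asymptotically $\sim p(a+b)/2$ and hence nonvanishing), and both handle the parity of $p$ by absorbing one bounded extra term.
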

\begin{proof}
    Assume that $i(\epsilon)$ satisfies that if $p>i(\epsilon)$, then $|x_{2p}-a|<\epsilon$, $|x_{2p+1}-b|<\epsilon$, $||y_{2p}-v_c||_2<\epsilon$ and $||y_{2p+1}-v_d||_2<\epsilon$.

    Then we have that: $\frac{\sum_{j=1}^px_jy_j}{\sum_{j=1}^p x_j}
        =\frac{\sum_{j=1}^{i(\epsilon)-1}x_jy_j}{\sum_{j=1}^p x_j}+\frac{\sum_{j=i(\epsilon)}^px_jy_j}{\sum_{j=i(\epsilon)}^p x_j}\frac{\sum_{j=i(\epsilon)}^p x_j}{\sum_{j=1}^p x_j}$.
   % \begin{equation*}
   % \begin{array}{cc}
   %     &\frac{\sum_{j=1}^px_jy_j}{\sum_{j=1}^p x_j}\\
   %     =&\frac{\sum_{j=1}^{2i(\epsilon)-1}x_jy_j}{\sum_{j=1}^p x_j}+\frac{\sum_{j=2i(\epsilon)}^px_jy_j}{\sum_{j=2i(\epsilon)}^p x_j}\frac{\sum_{j=2i(\epsilon)}^p x_j}{\sum_{j=1}^p x_j}\\
   % \end{array}%
%\end{equation*}

Firstly, it is easy to see that for any given $\epsilon$, when $p\to\infty$, we have $\frac{\sum_{j=1}^{i(\epsilon)-1}x_jy_j}{\sum_{j=1}^p x_j}\to 0$, because $a,b>0$, and $\frac{\sum_{j=i(\epsilon)}^p x_j}{\sum_{j=1}^p x_j}\to 1$ also because the $a,b>0$.

Secondly, we show that $\frac{\sum_{j=i(\epsilon)}^px_jy_j}{\sum_{j=i(\epsilon)}^p x_j}\to (av_c+bv_d)/(a+b)$ when $\epsilon\to 0$ and $p-i(\epsilon)\to\infty$, which can directly prove our result.

We consider the $\frac{\sum_{j=i(\epsilon)}^px_jy_j}{\sum_{j=i(\epsilon)}^p x_j}- \frac{v_c\sum_{j=i(\epsilon),j|2=0}^px_j}{\sum_{j=i(\epsilon)}^p x_j}-\frac{v_d\sum_{j=i(\epsilon),j|2=1}^px_j}{\sum_{j=i(\epsilon)}^p x_j}$ at first. We have that:

\begin{equation*}
 \renewcommand{\arraystretch}{1.5}
    \begin{array}{cl}
         & ||\frac{\sum_{j=i(\epsilon)}^px_jy_j}{\sum_{j=i(\epsilon)}^p x_j}- \frac{v_c\sum_{j=i(\epsilon),j|2=0}^px_j}{\sum_{j=i(\epsilon)}^p x_j}-\frac{v_d\sum_{j=i(\epsilon),j|2=1}^px_j}{\sum_{j=i(\epsilon)}^p x_j}||_2 \\
     =    & ||\frac{\sum_{j=i(\epsilon),j|2=0}^px_j(y_j-v_c)+\sum_{j=i(\epsilon),j|2=1}^px_j(y_j-v_d)}{\sum_{j=i(\epsilon)}^p x_j}||_2\\
     \le &\frac{\sum_{j=i(\epsilon),j|2=0}^px_j\epsilon+\sum_{j=i(\epsilon),j|2=1}^px_j\epsilon}{\sum_{j=i(\epsilon)}^p x_j}\\
     =&\epsilon.
    \end{array}
\end{equation*}

So $\frac{\sum_{j=i(\epsilon)}^px_jy_j}{\sum_{j=i(\epsilon)}^p x_j}\to\frac{v_c\sum_{j=i(\epsilon),j|2=0}^px_j}{\sum_{j=i(\epsilon)}^p x_j}+\frac{v_d\sum_{j=i(\epsilon),j|2=1}^px_j}{\sum_{j=i(\epsilon)}^p x_j}$ when $\epsilon\to 0$.

Hence, we show that we have $\frac{\sum_{j=i(\epsilon),j|2=0}^px_j}{\sum_{j=i(\epsilon)}^p x_j}=\frac{a}{a+b}$ when $\epsilon\to 0$ and $p-i(\epsilon)\to\infty$.

Because $\frac{v^p_1(a-\epsilon)}{v^p_1(a+\epsilon)+v^p_2(b+\epsilon)}\le\frac{\sum_{j=i(\epsilon),j|2=0}^px_j}{\sum_{j=i(\epsilon)}^p x_j}\le \frac{v^p_1(a+\epsilon)}{v^p_1(a-\epsilon)+v^p_2(b-\epsilon)}$, where $v^p_1$($v^p_2$) is the number of even (odd) in $[i(\epsilon),p]$. It is easy  to see that when $\epsilon\to 0$ and $p-i(\epsilon)\to\infty$, we have $\frac{v^p_1(a-\epsilon)}{v^p_1(a+\epsilon)+v^p_2(b+\epsilon)}\to a/(a+b)$ and $\frac{v^p_1(a+\epsilon)}{v^p_1(a-\epsilon)+v^p_2(b-\epsilon)}\to a/(a+b)$. By the squeeze theorem, we get the result.

Similarly, $\frac{\sum_{j=i(\epsilon),j|2=1}^px_j}{\sum_{j=i(\epsilon)}^p x_j}=\frac{b}{a+b}$ when $\epsilon\to 0$ and $p-i(\epsilon)\to\infty$.

Combining the above results, we have
\begin{equation*}
    \begin{array}{cl}
    &\lim_{\epsilon\to 0,p-i(\epsilon)\to\infty}\frac{\sum_{j=i(\epsilon)}^px_jy_j}{\sum_{j=i(\epsilon)}^p x_j}\\
    =&\lim_{\epsilon\to 0,p-i(\epsilon)\to\infty}\frac{v_c\sum_{j=i(\epsilon),j|2=0}^px_j}{\sum_{j=i(\epsilon)}^p x_j}+\frac{v_d\sum_{j=i(\epsilon),j|2=1}^px_j}{\sum_{j=i(\epsilon)}^p x_j}\\
    =&\frac{av_c+bv_d}{a+b}
    \end{array}
\end{equation*}
So we get the result.
%Without loss generality, assume that $c,d\ge0$. we have $\frac{v^p_1(a-\epsilon)(c-\epsilon)+v^p_2(b-\epsilon)(d-\epsilon)}{v^p_1(a+\epsilon)+v^p_2(b+\epsilon)}\le\frac{\sum_{j=2i(\epsilon)}^px_jy_j}{\sum_{j=2i(\epsilon)}^p x_j}\le \frac{v^p_1(a+\epsilon)(c+\epsilon)+v^p_2(b+\epsilon)(d+\epsilon)}{v^p_1(a-\epsilon)+v^p_2(b-\epsilon)}$, where $v^p_1$($v^p_2$) is the number of even(odd) in $[2i(\epsilon),p]$. So for any given $\epsilon$, when $p\to\infty$, we have $\frac{v^p_1(a-\epsilon)(c-\epsilon)+v^p_2(b-\epsilon)(d-\epsilon)}{v^p_1(a+\epsilon)+v^p_2(b+\epsilon)}\to \frac{(a-\epsilon)(c-\epsilon)+(b-\epsilon)(d-\epsilon)}{(a+\epsilon)+(b+\epsilon)}$ and $\frac{v^p_1(a+\epsilon)(c+\epsilon)+v^p_2(b+\epsilon)(d+\epsilon)}{v^p_1(a-\epsilon)+v^p_2(b-\epsilon)}\to \frac{(a+\epsilon)(c+\epsilon)+(b+\epsilon)(d+\epsilon)}{(a-\epsilon)+(b-\epsilon)}$. Then, let $\epsilon\to 0$, we have $\frac{(a-\epsilon)(c-\epsilon)+(b-\epsilon)(d-\epsilon)}{(a+\epsilon)+(b+\epsilon)}\to(ac+bd)/(a+b)$ and $\frac{(a+\epsilon)(c+\epsilon)+(b+\epsilon)(d+\epsilon)}{(a-\epsilon)+(b-\epsilon)}\to(ac+bd)/(a+b)$. Then by the squeeze theorem, we prove the lemma.
\end{proof}

\subsubsection{Proof of Proposition \ref{prop-59} for CoT-transformers}

\begin{proof}
Let $C^j_i$ be the sentence $1+1+1+\dots+1+j$ in $\Arpn$, where there exist $i$ 1s in it and $j\in[p]$. It is easy to see that, $C^j_i$ has label $(i+j)\mod p$. We will show that, for any CoT-transformer $\F$, $\F$ cannot give some of such sentences a correct CoT with confidence $c$.

Assume that $\F$ is a CoT-transformer and can solve the $\Arp$ with confidence $c$.

    {\bf First}, similar to the above subsection, we know that for such transformers $\F$ and $j$, when $i\to\infty$, it holds  $\F(C^j_i)\to C(j)$ for a vector $C(j)$.

    So, if $\F$ can solve $\Arp$ with confidence $c$, there must be $(C(j))[p+1]>(C(j))[t]$ for any $t\ne p+1$. If $(C(j))[p+1]\le (C(j))[t]$ for some $t\ne p+1$, then the symbol $=$ will not appear at the start of the CoT with confidence $c$ for the $\F(C_i^j)$.

{\bf Second}, we consider the CoT of $\F(C_i)$, where $C_i$ is defined in Section \ref{sz1}.

From the definition of transformers, we know that if $\F$ can solve $\Arp$, then $\F(C_i)$ must output $=$ at the beginning of CoT. Now we consider the output of $\F((C_i,=))$. Similarly to the preceding subsection, we know that there exists a vector $C$ such that $\F((C_i,=))\to C$ when $i\to\infty$.

Because $\F$ can solve $\Arp$ with confidence $c$, so as to meet the basic expression of the four arithmetic operations, the symbol $'='$ must be followed by a number in $[p]$, so we know there exists  a $t\in[p]$ such that $C[t]> C[j]$ when $j\ne t$ and $j\in[p+7]$. If not, similar to the above section, we can show that $\F$   cannot get the correct CoT for $C_i$ with confidence $c$, which contradicts the assumption.

Then there exists an $i_0$ such that when $i>i_0$,  $\widehat{\F}((C_i,=))=t$.

{\bf Third}, we consider $\F(C_i^t)$ and $\F((C_i,=,t))$. These are two sequences whose lengths are the same, which only differ at the penultimate symbols, one being $+$ and the other one being $=$. Then, because the parameters of $\F$ have the upper bound and lower bound, similar to that shown in Section \ref{5t}, there must be $||\F((C_i^t))-\F((C_i,=,t))||_2\to 0$ when $i\to\infty$.
In part one, we showed that when $i\to\infty$, we have $(\F(C_i^t))_{p+1}>(\F(C_i^t))_j$ for any $j\ne p+1$. So by the above results, there must be $(\F((C_i,=,t)))_{p+1}>(\F((C_i,=,t)))_j$ for any $j\ne p+1$ when $i\to\infty$, which implies that $\F((C_i,=,t))$ will output the symbol $=$ and  create CoT $(=,t,=,\dots)$ for $C_i$.

Now we know that the CoT of $C_i$ must have the form $(=,t,=,\dots)$ when $i\to\infty$. If $i$ satisfies that $i\mod p\ne t$, then it is not a correct CoT to solve $C_i$, which is a contradiction to the fact that $\F$ can solve $\Arp$. So we prove the result.
\end{proof}

\section{Impact Statement}
\label{is}
We have theoretically demonstrated the power of CoT in the memorization capabilities of transformers. Our research does not include conclusions that have negative effects on the community.


\begin{thebibliography}{10}

\bibitem{Ntv}
Peter~L Bartlett, Nick Harvey, Christopher Liaw, and Abbas Mehrabian.
\newblock Nearly-tight vc-dimension and pseudodimension bounds for piecewise
  linear neural networks.
\newblock {\em Journal of Machine Learning Research}, 20(1):2285--2301, 2019.

\bibitem{dbb1}
Mikhail Belkin, Daniel Hsu, Siyuan Ma, and Soumik Mandal.
\newblock Reconciling modern machine learning practice and the classical
  bias-variance trade-off.
\newblock {\em Proceedings of the National Academy of Sciences},
  116(32):15849--15854, 2019.

\bibitem{SymbolicIntegration}
Manuel Bronstein.
\newblock {\em Symbolic Integration, I Transcendental Functions}.
\newblock Springer, 2005.

\bibitem{brown2020language}
Tom Brown, Benjamin Mann, Nick Ryder, Melanie Subbiah, Jared~D Kaplan, Prafulla
  Dhariwal, Arvind Neelakantan, Pranav Shyam, Girish Sastry, Amanda Askell,
  et~al.
\newblock Language models are few-shot learners.
\newblock {\em Advances in neural information processing systems},
  33:1877--1901, 2020.

\bibitem{chiang2023tighter}
David Chiang, Peter Cholak, and Anand Pillay.
\newblock Tighter bounds on the expressivity of transformer encoders.
\newblock In {\em International Conference on Machine Learning}, pages
  5544--5562. PMLR, 2023.

\bibitem{Feldman2020Z}
Vitaly Feldman and Chiyuan Zhang.
\newblock What neural networks memorize and why: Discovering the long tail via
  influence estimation.
\newblock In {\em Advances in Neural Information Processing Systems}, pages
  2881--2891, 2020.

\bibitem{feng2024numerical}
Guhao Feng, Kai Yang, Yuntian Gu, Xinyue Ai, Shengjie Luo, Jiacheng Sun, Di~He,
  Zhenguo Li, and Liwei Wang.
\newblock How numerical precision affects mathematical reasoning capabilities
  of llms.
\newblock {\em arXiv preprint arXiv:2410.13857}, 2024.

\bibitem{Feng-Cot2023}
Guhao Feng, Bohang Zhang, Yuntian Gu, Haotian Ye, Di~He, and Liwei Wang.
\newblock Towards revealing the mystery behind chain of thought: A theoretical
  perspective.
\newblock In {\em Advances in Neural Information Processing Systems (NeurIPS
  2023)}, 2023.

\bibitem{memo03}
Moritz Hardt and Tengyu Ma.
\newblock Identity matters in deep learning.
\newblock {\em arXiv preprint arXiv:1611.04231}, 2016.

\bibitem{memo01}
Shih-Chi Huang and Yih-Fang Huang.
\newblock Bounds on number of hidden neurons of multilayer perceptrons in
  classification and recognition.
\newblock {\em In Proceedings of 1990 IEEE International Symposium on Circuits
  and Systems (ISCAS)}, pages 2500--2503, 1990.

\bibitem{kajitsuka2023transformers}
Tokio Kajitsuka and Issei Sato.
\newblock Are transformers with one layer self-attention using low-rank weight
  matrices universal approximators?
\newblock {\em arXiv preprint arXiv:2307.14023}, 2023.

\bibitem{kajitsuka2024optimal}
Tokio Kajitsuka and Issei Sato.
\newblock On the optimal memorization capacity of transformers.
\newblock In {\em The Thirteenth International Conference on Learning
  Representations}, 2025.

\bibitem{kim2022provable}
Junghwan Kim, Michelle Kim, and Barzan Mozafari.
\newblock Provable memorization capacity of transformers.
\newblock In {\em The Eleventh International Conference on Learning
  Representations}, 2022.

\bibitem{le2023bloom}
Teven Le~Scao, Angela Fan, Christopher Akiki, Ellie Pavlick, Suzana Ili{\'c},
  Daniel Hesslow, Roman Castagn{\'e}, Alexandra~Sasha Luccioni, Fran{\c{c}}ois
  Yvon, Matthias Gall{\'e}, et~al.
\newblock Bloom: A 176b-parameter open-access multilingual language model.
\newblock {\em arXiv preprint arXiv:2211.05100v4}, 2023.

\bibitem{xin}
Binghui Li, Jikai Jin, Han Zhong, John~E Hopcroft, and Liwei Wang.
\newblock Why robust generalization in deep learning is difficult: Perspective
  of expressive power.
\newblock In {\em Advances in Neural Information Processing Systems}, 2022.

\bibitem{ZhiyuanLi2024CoT}
Zhiyuan Li, Hong Liu, Denny Zhou, and Tengyu Ma.
\newblock Chain of thought empowers transformers to solve inherently serial
  problems.
\newblock In {\em ICLR2024}, 2023.

\bibitem{Ma2018Power}
Siyuan Ma, Raef Bassily, and Mikhail Belkin.
\newblock The power of interpolation: Understanding the effectiveness of sgd in
  modern over-parametrized learning.
\newblock In {\em Proceedings of the 35th International Conference on Machine
  Learning}, pages 3325--3334, 2018.

\bibitem{NEURIPS2023_91edff07}
Aman Madaan, Niket Tandon, Prakhar Gupta, Skyler Hallinan, Luyu Gao, Sarah
  Wiegreffe, Uri Alon, Nouha Dziri, Shrimai Prabhumoye, Yiming Yang, Shashank
  Gupta, Bodhisattwa~Prasad Majumder, Katherine Hermann, Sean Welleck, Amir
  Yazdanbakhsh, and Peter Clark.
\newblock Self-refine: Iterative refinement with self-feedback.
\newblock In {\em Advances in Neural Information Processing Systems},
  volume~36, pages 46534--46594, 2023.

\bibitem{madden2024next}
Liam Madden, Curtis Fox, and Christos Thrampoulidis.
\newblock Next-token prediction capacity: general upper bounds and a lower
  bound for transformers.
\newblock {\em arXiv preprint arXiv:2405.13718}, 2024.

\bibitem{mahdavi2023memorization}
Sadegh Mahdavi, Renjie Liao, and Christos Thrampoulidis.
\newblock Memorization capacity of multi-head attention in transformers.
\newblock {\em arXiv preprint arXiv:2306.02010}, 2023.

\bibitem{merrill2023expressive}
William Merrill and Ashish Sabharwal.
\newblock The expressive power of transformers with chain of thought.
\newblock {\em arXiv preprint arXiv:2310.07923}, 2023.

\bibitem{Merrill-TC0}
William Merrill and Ashish Sabharwal.
\newblock The parallelism tradeoff: Limitations of log-precision transformers.
\newblock {\em Transactions of the Association for Computational Linguistics},
  11:531--545, 2023.

\bibitem{memo04}
Quynh Nguyen and Matthias Hein.
\newblock Optimization landscape and expressivity of deep cnns.
\newblock In {\em Proceedings of the 35th International Conference on Machine
  Learning}, volume~80, pages 3730--3739, 2018.

\bibitem{nowak2024representational}
Franz Nowak, Anej Svete, Alexandra Butoi, and Ryan Cotterell.
\newblock On the representational capacity of neural language models with
  chain-of-thought reasoning.
\newblock {\em arXiv preprint arXiv:2406.14197}, 2024.

\bibitem{memp}
Sejun Park, Jaeho Lee, Chulhee Yun, and Jinwoo Shin.
\newblock Provable memorization via deep neural networks using sub-linear
  parameters.
\newblock In {\em Proceedings of Thirty Fourth Conference on Learning Theory},
  pages 3627--3661, 2021.

\bibitem{perez2021attention}
Jorge P{\'e}rez, Pablo Barcel{\'o}, and Javier Marinkovic.
\newblock Attention is turing-complete.
\newblock {\em Journal of Machine Learning Research}, 22(75):1--35, 2021.

\bibitem{radford2019language}
Alec Radford, Jeffrey Wu, Rewon Child, David Luan, Dario Amodei, Ilya
  Sutskever, et~al.
\newblock Language models are unsupervised multitask learners.
\newblock {\em OpenAI blog}, 1(8):9, 2019.

\bibitem{rae2021scaling}
Jack~W Rae, Sebastian Borgeaud, Trevor Cai, Katie Millican, Jordan Hoffmann,
  Francis Song, John Aslanides, Sarah Henderson, Roman Ring, Susannah Young,
  et~al.
\newblock Scaling language models: Methods, analysis \& insights from training
  gopher.
\newblock {\em arXiv preprint arXiv:2112.11446}, 2021.

\bibitem{memo02}
Michael~A. Sartori and Panos~J. Antsaklis.
\newblock A simple method to derive bounds on the size and to train multilayer
  neural networks.
\newblock {\em IEEE Transactions on Neural Networks}, 2(4):467--471, 1991.

\bibitem{memo07}
Eduardo~D. Sontag.
\newblock {Shattering All Sets of `$k$' Points in ``General Position'' Requires
  $(k-1)/2$ Parameters}.
\newblock {\em Neural Computation}, 9(2):337--348, 1997.

\bibitem{Ncot-plan}
Kaya Stechly, Karthik Valmeekam, and Subbarao Kambhampati.
\newblock Chain of thoughtlessness? an analysis of cot in planning.
\newblock {\em arXiv preprint arXiv:2405.04776v2}, 2024.

\bibitem{rema1}
Gal Vardi, Gilad Yehudai, and Ohad Shamir.
\newblock On the optimal memorization power of relu neural networks.
\newblock {\em arXiv preprint arXiv:2110.03187}, 2021.

\bibitem{vaswani2017attention}
Ashish Vaswani, Noam Shazeer, Niki Parmar, Jakob Uszkoreit, Llion Jones,
  Aidan~N Gomez, Lukasz Kaiser, and Illia Polosukhin.
\newblock Attention is all you need.
\newblock {\em Advances in neural information processing systems}, 30, 2017.

\bibitem{wang2022self}
Xuezhi Wang, Jason Wei, Dale Schuurmans, Quoc Le, Ed~Chi, Sharan Narang,
  Aakanksha Chowdhery, and Denny Zhou.
\newblock Self-consistency improves chain of thought reasoning in language
  models.
\newblock {\em arXiv preprint arXiv:2203.11171}, 2022.

\bibitem{wang2024chain}
Zezhong Wang, Xingshan Zeng, Weiwen Liu, Yufei Wang, Liangyou Li, Yasheng Wang,
  Lifeng Shang, Xin Jiang, Qun Liu, and Kam-Fai Wong.
\newblock Chain-of-probe: Examing the necessity and accuracy of cot
  step-by-step.
\newblock {\em arXiv preprint arXiv:2406.16144}, 2024.

\bibitem{wei2022chain}
Jason Wei, Xuezhi Wang, Dale Schuurmans, Maarten Bosma, Fei Xia, Ed~Chi, Quoc~V
  Le, Denny Zhou, et~al.
\newblock Chain-of-thought prompting elicits reasoning in large language
  models.
\newblock {\em Advances in neural information processing systems},
  35:24824--24837, 2022.

\bibitem{Winkler}
Franz Winkler.
\newblock The algebro-geometric method for solving algebraic differential
  equations - a survey.
\newblock {\em J Syst Sci Complex}, 32:256--270, 2019.

\bibitem{xu2024llm}
Nan Xu and Xuezhe Ma.
\newblock Llm the genius paradox: A linguistic and math expert's struggle with
  simple word-based counting problems.
\newblock {\em arXiv preprint arXiv:2410.14166}, 2024.

\bibitem{2024-iclr}
Lijia Yu, Xiao-Shan Gao, and Lijun Zhang.
\newblock Optimal robust memorization with relu neural networks.
\newblock In {\em International Conference on Learning Representations (ICLR
  2024)}, 2024.

\bibitem{YuLJ-GM2024}
Lijia Yu, Xiao-Shan Gao, Lijun Zhang, and Yibo Miao.
\newblock Generalizablity of memorization neural network.
\newblock In {\em Advances in Neural Information Processing Systems (NeurIPS
  2024)}, 2024.

\bibitem{mem20211}
Chiyuan Zhang, Samy Bengio, Moritz Hardt, Benjamin Recht, and Oriol Vinyals.
\newblock Understanding deep learning (still) requires rethinking
  generalization.
\newblock {\em Communications of the ACM}, 64(3):107--115, 2021.

\bibitem{zhang2022opt}
Susan Zhang, Stephen Roller, Naman Goyal, Mikel Artetxe, Moya Chen, Shuohui
  Chen, Christopher Dewan, Mona Diab, Xian Li, Xi~Victoria Lin, et~al.
\newblock Opt: Open pre-trained transformer language models.
\newblock {\em arXiv preprint arXiv:2205.01068}, 2022.

\bibitem{Ncot-PB}
Tianshi Zheng, Yixiang Chen, Chengxi Li, Chunyang Li, Qing Zong, Haochen Shi,
  Baixuan Xu, Yangqiu Song, Ginny~Y. Wong, and Simon See.
\newblock The curse of cot: On the limitations of chain-of-thought in
  in-context learning.
\newblock {\em arXiv preprint arXiv:2504.05081}, 2025.

\bibitem{zhou2022least}
Denny Zhou, Nathanael Sch{\"a}rli, Le~Hou, Jason Wei, Nathan Scales, Xuezhi
  Wang, Dale Schuurmans, Claire Cui, Olivier Bousquet, Quoc Le, et~al.
\newblock Least-to-most prompting enables complex reasoning in large language
  models.
\newblock {\em arXiv preprint arXiv:2205.10625}, 2022.

\end{thebibliography}
\end{document}